\def\eqref#1{equation~\ref{#1}}
\def\1{\bm{1}}
\DeclareMathAlphabet{\mathsfit}{\encodingdefault}{\sfdefault}{m}{sl}
\SetMathAlphabet{\mathsfit}{bold}{\encodingdefault}{\sfdefault}{bx}{n}
\newcommand{\Var}{\mathrm{Var}}
\DeclareMathOperator*{\argmin}{arg\,min}
\theoremstyle{plain}
\newtheorem{theorem}{Theorem}[section]
\newtheorem{lemma}[theorem]{Lemma}
\theoremstyle{definition}
\theoremstyle{remark}
\title{DUET: Optimizing Training Data Mixtures via Feedback from Unseen Evaluation Tasks}
\author{%
  David S.~Hippocampus\thanks{Use footnote for providing further information
    about author (webpage, alternative address)---\emph{not} for acknowledging
    funding agencies.} \\
  Department of Computer Science\\
  Cranberry-Lemon University\\
  Pittsburgh, PA 15213 \\
  \texttt{hippo@cs.cranberry-lemon.edu} \\
}
\newcommand{\squishlisttwo}{
 \begin{list}{$\bullet$}
  { \setlength{\itemsep}{1pt}
     \setlength{\parsep}{0pt}
    \setlength{\topsep}{0pt}
    \setlength{\partopsep}{0pt}
    \setlength{\leftmargin}{1em}
    \setlength{\labelwidth}{1.5em}
    \setlength{\labelsep}{0.5em} } 
}
\newcommand{\squishend}{
  \end{list}  }
\newcommand{\Xs}{\mathcal{X}}
\newcommand{\Leval}{\mathcal{L}_{\textrm{eval}}}
\definecolor{darkgreen}{rgb}{0.0, 0.5, 0.0} 
\title{DUET: Optimizing LLM Training Data Mixtures via Noisy Feedback from Unseen Evaluation Tasks}
\author{Zhiliang Chen$^{1,2,}$, Gregory Kang Ruey Lau $^{1,3,*}$, Chuan-Sheng Foo$^{2}$ \&  Bryan Kian Hsiang Low$^{1}$ \\
$^{1}$Department of Computer Science,  National University of Singapore\\
$^{2}$Agency for Research, Science, Technology and Research (A*STAR), Singapore\\
$^{3}$CNRS@CREATE, 1 Create Way, $\#$08-01 Create Tower, Singapore 138602 \\
\texttt{\{chenzhiliang, gregorylau\}@u.nus.edu}\\
\texttt{lowkh@comp.nus.edu.sg}
}
\begin{document}

\maketitle

\begin{abstract}
The performance of an LLM depends heavily on how well the training data matches the downstream evaluation task. However, in many practical settings, we typically do not know the data in the evaluation task (e.g., conversations between a chatbot and users are end-to-end encrypted). We refer to such tasks as \textit{unseen evaluation tasks}. We can only deploy the LLM on these unseen evaluation tasks to gather multiple rounds of feedback on how well the model performs (e.g., gathering user ratings from a chatbot). In addition, this feedback can be noisy. How can we exploit such noisy feedback efficiently to optimize the LLM training data-mixture? Our paper presents \textbf{DUET}, a novel global-to-local algorithm that optimizes training data mixtures by interleaving \textit{data selection} with \emph{Bayesian optimization} to exploit coarse and noisy feedback from a downstream evaluation task. DUET is flexible enough to incorporate different data selection methods, each with different performance-compute tradeoffs. By analyzing DUET's \textit{cumulative regret}, we theoretically show that DUET converges to the optimal training data mixture even without any fine-grained data information from an unseen task. Finally, our experiments across a variety of language tasks demonstrate that DUET attains substantial performance improvements over existing data selection and mixing methods in the unseen-task setting. Our library, which is flexible enough to optimize different LLM training ingredients, can be found at \url{https://github.com/chenzhiliang94/BO-for-LLMs}.
\end{abstract}

\section{Introduction}
\label{sec:intro}

The performance of an LLM depends heavily on how well the training data domain \citep{data-mixing-framework-optimize,xie2023doremi} matches the downstream evaluation task \citep{hoffmann2022an, Domain-adapt-transfer-learning}. For instance, if we knew that LLM users are interested in asking layman science questions, then training or fine-tuning the LLM with more Wikipedia data allows it to converse better with these users. Hence, knowing the evaluation task is important for curating a more relevant training data mixture, producing an LLM with better performance over the specific task of interest.

Unfortunately, in practice, the data (e.g., its domain, distribution, or labels) involved in an \emph{unseen evaluation task} are often unknown. Thus, it is not obvious what data is relevant for training or fine-tuning the model. Consider the following problem setting: An LLM owner is interested in fine-tuning their LLM to converse better with users but due to privacy concerns \citep{li2024humancenteredprivacyresearchage}, conversations between the deployed LLM and users are end-to-end encrypted  (\url{openai.com/enterprise-privacy}). Hence, the LLM owner does not know the actual evaluation data seen during test-time. Rather, they only receive coarse, noisy feedback on how well the LLM has performed in the conversation (e.g., user ratings or duration spent on the application).
Of course, a naive idea is to simply iterate through all possible data mixtures and observe the resulting LLM performance, which is too computationally expensive. A better question is to perhaps ask:  How can we
exploit the noisy feedback loop efficiently to improve and optimize the training data mixture?


This paper presents \textbf{DUET} (Fig.~\ref{fig:overview_approach}), an efficient algorithm that exploits a noisy feedback loop to optimize the training \underline{\textbf{D}}ata mixture for an \underline{\textbf{U}}nseen \underline{\textbf{E}}valuation \underline{\textbf{T}}ask. DUET is a \textit{global-to-local} algorithm that interleaves \emph{data selection} \citep{albalak2024surveydataselectionlanguage, ting2017optimalsubsamplinginfluencefunctions, koh2017influence} with \emph{Bayesian optimization} (BO) \citep{bo-practical, bo-gp-ucb-10} to optimize the training data mixture. Globally, BO in DUET uses coarse, noisy feedback from the unseen evaluation task to automatically refine the mixing ratio of data domains in the training data mixture iteratively. Locally, DUET uses data selection to retrieve high-quality data points from each data domain until the proposed mixing ratio is reached. This results in an algorithm that can optimize training data iteratively even without having access to fine-grained data information from the evaluation task.



\begin{figure}[t]
    \centering
    \vspace{0mm}
\includegraphics[scale=0.29]{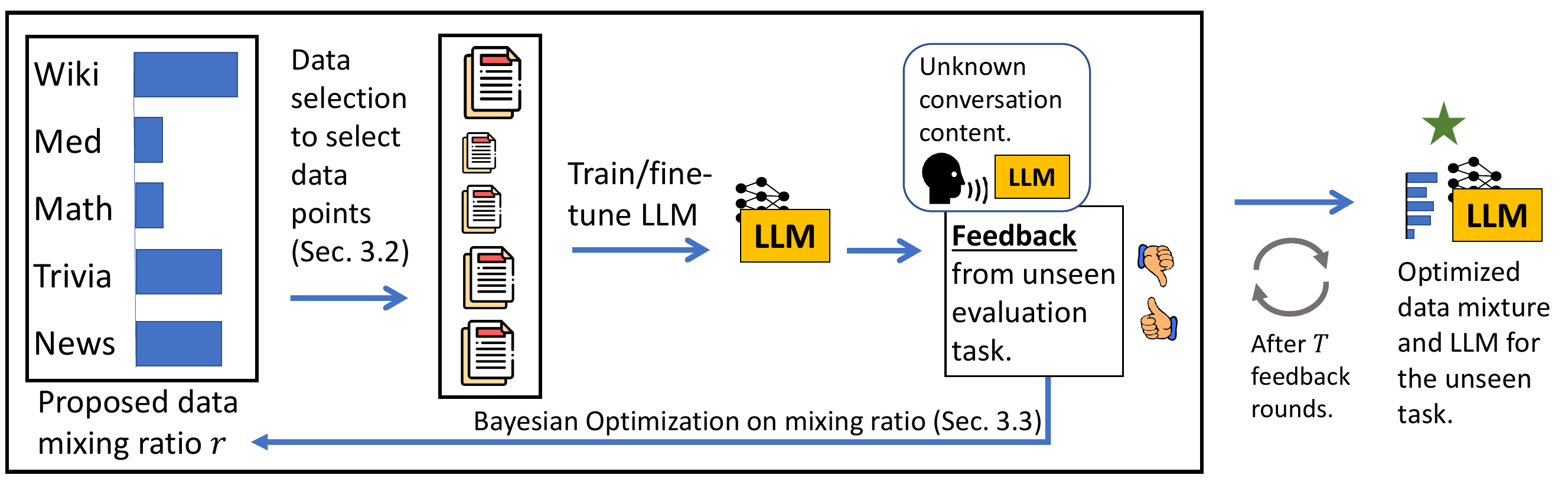}
\vspace{-1mm}
    \caption{\textbf{DUET} exploits a feedback loop to optimize the data mixture for an unseen evaluation task. In contrast, conventional data mixing and selection works require fine-grained data information of the task, which is not available here.}
    \label{fig:overview_approach}
    \vspace{-0mm}
\end{figure}

\textbf{Related works.} In our problem setting, (a) 
there is no direct access to the data (e.g., its domain, distribution, or labels)
involved in the unseen evaluation task but (b) we can gather multiple rounds of feedback (details covered in Sec.~\ref{section:problem setting}) from the task using an LLM. App.~\ref{app:real-world-examples} provides a few more practical examples of this setting. This setting is different from those considered in conventional domain adaptation (DA) and domain generalization (DG) works. Prior DA works assume fine-grained knowledge of data (e.g., labeled/unlabeled data \citep{zhang2022fewshotadaptationpretrainednetworks} or data distribution \citep{Domain-adapt-need-unlabeled,zhang-etal-2021-matching-distribution}) from the evaluation task for selecting relevant training data that match the evaluation data. 
On the other hand, DG considers a rigid setting with no knowledge (not even feedback) of the evaluation task \citep{invariant_feature_DG, DG_2024_iclr, DG_survey}.

Similarly, \textit{data mixing} works such as DoReMi \citep{xie2023doremi}, BiMix \citep{ge2025bimixbivariatedatamixing} and more \citep{data-mixing-framework-optimize, fan2024doge_domain_reweighting, xie2025chameleonflexibledatamixingframework, neurips_reimportance_sampling} introduced methods to optimize data mixtures, and \textit{data selection} works \citep{albalak2024surveydataselectionlanguage, xia2024lessselectinginfluentialdata, tracin,neurips_reimportance_sampling} explored ways to find high-quality data to improve an LLM's performance. However, these methods assume some availability of fine-grained evaluation data information, such as evaluation gradients, labels, distribution or naively assuming the training data shares the same distribution as the task. In practice (like in our setting), these are not always available. In fact, when we applied existing data mixing and selection methods directly to our setting, they perform worse than DUET (Sec.~\ref{sec:experiments}). We provide more discussion of the shortfalls of these prior works in App.~\ref{app:related-work}.





To the best of our knowledge, DUET is the first work that interleaves data selection with BO to iteratively optimize training data mixture based on feedback from an unseen evaluation task. At first glance, eliciting multiple rounds of feedback with BO seems expensive. However, BO is sample-efficient and is the only way we can exploit such coarse and noisy feedback iteratively, unlike prior methods that require much more fine-grained data information (see above). In fact, subjecting models to multiple rounds of training or fine-tuning in a feedback loop is a natural part of the deployment life-cycle to improve LLMs. Specifically, our contributions are:
\squishlisttwo
    \item We introduce a novel and realistic problem setting where 
    the data involved in an unseen evaluation task is unknown
    but we can deploy our LLM to gather multiple rounds of coarse and noisy feedback. Then, we introduce \textbf{DUET}, a novel algorithm that exploits the feedback loop to optimize training \underline{\textbf{D}}ata mixture for the \underline{\textbf{U}}nseen \underline{\textbf{E}}valuation \underline{\textbf{T}}ask. To achieve this, DUET interleaves data selection (Sec.~\ref{sec: solve inner}) with Bayesian optimization (Sec.~\ref{sec: solve outer}) to iteratively optimize the training data mixture. DUET is flexible enough to incorporate any data selection choice in its inner loop, and we qualitatively and quantitatively analyzed different choices in our paper.
    \item We provide a theoretical analysis of DUET's convergence to the optimal training data mixture by analyzing DUET's \emph{attained cumulative regret} \citep{chen2024towardsautoai,bo-kernelized-bandits} under the BO framework (Sec.~\ref{sec:theory regret}).
    \item We demonstrate the effectiveness of DUET on LLM fine-tuning for language tasks comprising both in-domain and out-of-domain unseen tasks spanning different domains. Compared to conventional data selection and mixing methods (e.g., DoReMi, LESS, Aioli \citep{data-mixing-framework-optimize} and more), DUET produces more optimal training data mixtures (Sec.~\ref{subsec:main-results}).
    
\squishend
\vspace{0mm}
\section{Preliminaries}
\vspace{0mm}\subsection{Bayesian optimization}
\label{sec:bo-preliminary}

We first provide an outline of how BO can be used to optimize a generic black-box objective function before explaining how BO is used in DUET (Sec.~\ref{sec: solve outer}). We consider a black-box objective function $f : \mathbb{R}^n \mapsto \mathbb{R}$ over the space of inputs $r \in \mathbb{R}^n$. As we show later (Sec.~\ref{section:problem setting}), we will use the data mixing ratio as $r$ in our setting. The goal is to find $r^* \triangleq \argmin_{r} f(r)$ which minimizes the objective function. BO is a query-efficient \textit{active algorithm} that strategically selects input points to query the black-box objective function, conditioned on previous function observations.
At each iteration $t=1,2,\dots,T$ of BO, we query the black-box function with a selected input $r_t$ to obtain a \textit{noisy} observation $\tilde{y}_t \triangleq f(r_t) + \epsilon_t$ with a sub-Gaussian noise $\epsilon_t$ (e.g., Gaussian or bounded noise) to form sample $(r_t,\tilde{y}_t)$. Consistent with \citet{bo-kernelized-bandits}, we model the unknown function $f$ as a realization of a 
\emph{Gaussian process} (GP) \citep{gp-for-ml} that is fully specified by its \emph{prior} mean $\mu(r)$ and covariance $\kappa(r,r')$ for all $r,r' \in \mathbb{R}^n$ 
where $\kappa$ is a \textit{kernel} function chosen to characterize the correlation of the observations between any two inputs $r$ and $r'$; a common choice is the \textit{squared exponential} (SE) kernel $\kappa(r,r')\triangleq \exp(-\lVert r-r'\rVert_2^2/(2m^2))$ with a  \textit{length-scale} hyperparameter $m$ that can be learned via maximum likelihood estimation.
Given a column vector $\bm{y}_t \triangleq [\tilde{y}_{\tau}]^{\top}_{\tau=1,\dots,t}$ of noisy observations at previous inputs $r_1,\dots,r_t$, the posterior belief of $f$ at any new input $r'$ is a Gaussian distribution with the following \emph{posterior} mean and variance:
\begin{equation}
\label{gp:posterior}
\begin{split}
     & \mu_t(r') \triangleq \kappa_t^{\top}(r')(K_t + \zeta I)^{-1}\bm{y}_t \\
     & \sigma_t(r') \triangleq \kappa(r',r')-\kappa_t^{\top}(r')(K_t + \zeta I)^{-1}\kappa_t(r')
\end{split}
\end{equation}
where $\kappa_t(r') \triangleq [\kappa(r', r_{\tau})]^{\top}_{\tau=1,\dots,t}$ is a column vector, $K_t \triangleq [\kappa(r_\tau,r_{\tau'})]_{\tau,\tau' \in 1,\ldots,t}$ is a $t \times t$ covariance matrix, and $\zeta > 0$ is viewed as a free hyperparameter that depends on the problem setting \citep{bo-kernelized-bandits}. 
Using~\eqref{gp:posterior}, 
the BO algorithm selects the next input query $r_{t+1}$ by optimizing an \textit{acquisition function}, such as 
minimizing the
\emph{lower confidence bound} (LCB) acquisition function \citep{bo-gp-ucb-10}: $r_{t+1} = \argmin_r\mu_t(r) - \beta_{t+1}\sigma_t(r)$ with an exploration parameter $\beta_{t+1}$. In addition, BO can also handle constraints on inputs $r$ \citep{boconstraints}. 
The cumulative regret (for $T$ BO iterations w.r.t.~a minimization problem) $R_T \triangleq \sum_{t=1}^T [f(r_t)-f(r^*)]$ is used to assess the performance of a BO algorithm \citep{tay2023bayesian_cost} given that $f(r^*)$ is the true function minimum. A lower cumulative regret indicates a faster  convergence rate. We provide a theoretical analysis of DUET's cumulative regret in Sec.~\ref{sec:theory regret}.

\vspace{0mm}\subsection{Problem setting: Optimizing data mixtures for an unseen task}\label{section:problem setting}
Now, we formally describe our problem setting. Suppose that we have $n$ training datasets $\mathcal{D}\triangleq\{D_1,D_2,\dots,D_n\}$ from $n$ different domains (e.g., Wikipedia, ArXiv), where $\mathcal{D}$ is the union of these training datasets. Let $\mathcal{L}_{\textrm{eval}}(\theta)$ be the unseen evaluation task loss w.r.t.~an LLM parameterized by $\theta$. This "loss" represents feedback from the unseen evaluation task and does not have a closed, mathematical form. Our goal is to find an optimal data mixture $\Xs^* \in \mathcal{D}$ (a set of training data points) and learn model parameters $\theta_{\Xs^*}$ such that the unseen evaluation task loss $\mathcal{L}_{\textrm{eval}}$ is minimized:

\begin{equation}
\vspace{-1mm}
\label{eq:original}
\begin{aligned}
\min_{\Xs \in \mathcal{D}} \quad & \mathcal{L}_{\textrm{eval}}(\theta_{\Xs})\\
\textrm{s.t.} \quad & |\Xs|=M,
\end{aligned}
\vspace{-1mm}
\end{equation}

where $\theta_{\Xs} \triangleq \argmin_{\theta}\mathcal{L}_{\textrm{train}}(\Xs,\theta)$ is the model parameters learned in a standard supervised learning manner (e.g., gradient descent) from a chosen data mixture $\Xs$ and $\mathcal{L}_{\textrm{train}}$ is a standard model training loss (e.g., cross-entropy loss for LLM prediction). To make our theoretical formulation and expository simpler, we consider the feedback $\mathcal{L}_{\textrm{eval}}$ deterministic. However, DUET works equally well for in noisy feedback setting, which we demonstrate empirically (Sec.~\ref{sec:experiments}) and elaborate in App.~\ref{app:extensions}. $M$ is a practical, pre-decided constraint \citep{mirzasoleiman2020coresetsdataefficienttrainingmachine} to ensure the selected data mixture is not too large. In practice, evaluation task loss $\mathcal{L}_{\textrm{eval}}$ is just a feedback that indicates how well the LLM is performing and does not contain any evaluation data information. It can also be interchanged with other measures to be maximized (e.g., accuracy, user ratings) with slight mathematical adjustment to later statements.


\vspace{0mm}
\section{Optimizing Training Data Mixtures using DUET}
\label{sec:introducing-abollo}

\begin{wrapfigure}{r}{0.25\textwidth} 
  \centering
  \vspace{-5mm}
  \includegraphics[width=0.25\textwidth]{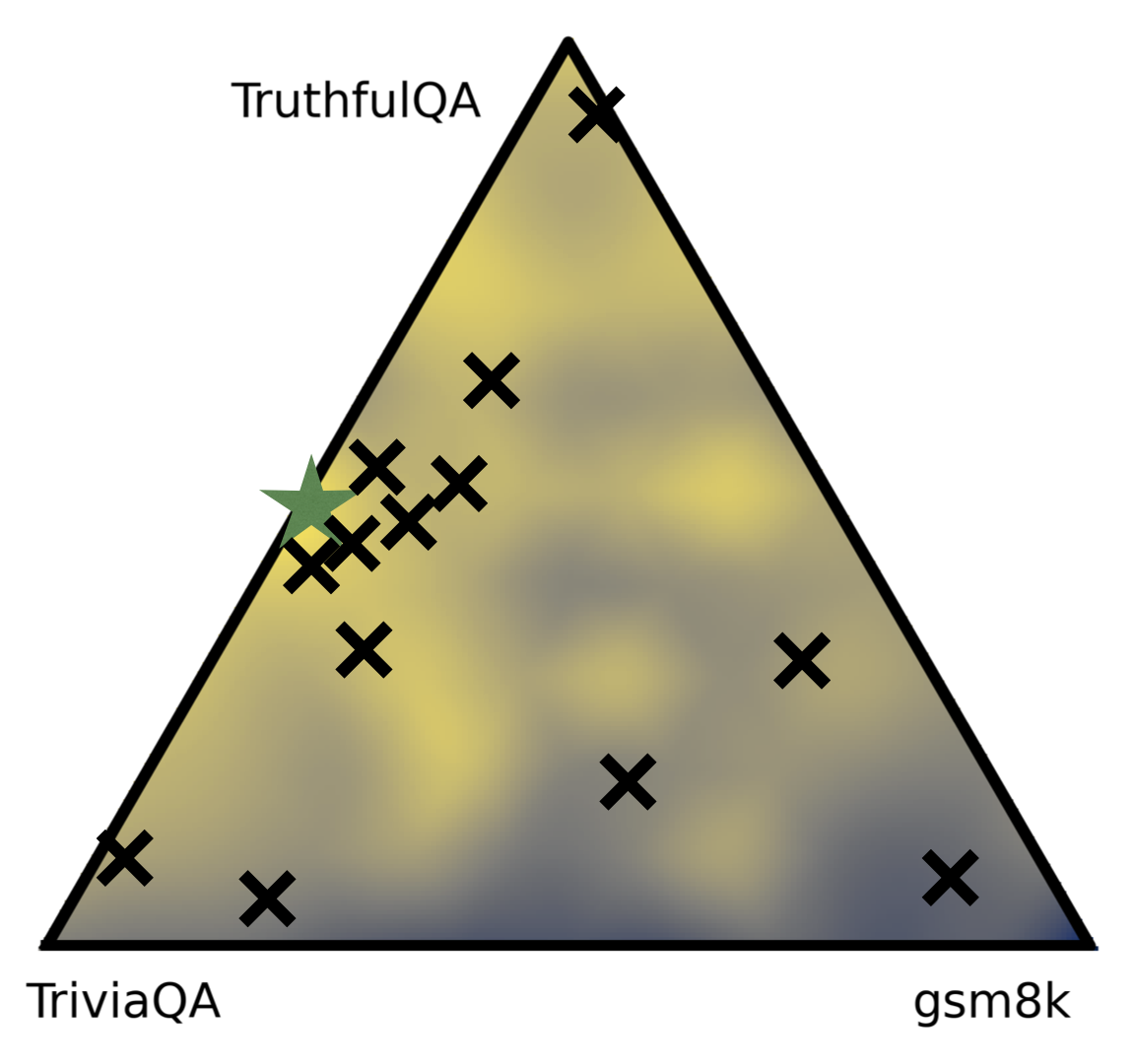}
  \vspace{-5mm}
  \caption{DUET finds the optimal data mixture iteratively and strategically.}
  \label{fig:simple-example}
  \vspace{-10mm}
\end{wrapfigure}

Unfortunately, solving problem $\ref{eq:original}$ is challenging because the unseen evaluation task loss $\Leval$ does not have a closed, mathematical form and finding the optimal data mixture $\Xs^*$ directly is a high-dimensional discrete optimization problem. 
To address this, DUET adopts a global-to-local approach to optimize the training data mixture. Globally, DUET uses BO to adjust the mixing ratio in the data mixture adaptively based on the task feedback. Locally, we interleave a data selection method of choice (depending on the practitioner's compute budget) to refine the data mixture every iteration. Fig.~\ref{fig:simple-example} illustrates, in a simple setting, how DUET progressively finds better data mixtures close to the optimal (\textcolor{darkgreen}{green star}). We also discuss several extensions of DUET in App.~\ref{app:extensions}.

\subsection{Reparameterization of the optimization problem} \label{sec:reparameterization}
We first reparameterize the objective function of problem $\ref{eq:original}$ into a bilevel optimization problem that, at the outer level, depends on the mixing ratio $r \in \mathbb{R}^n$ of training data domains (such reparameterization has been considered in AutoML works \citep{chen2024towardsautoai}). This reparameterized problem has a unique structure that aligns with DUET's global-to-local nature (Sec.~\ref{sec: solve inner} \& \ref{sec: solve outer}).

\begin{restatable}{theorem}{reparameterization}
\label{thm:bilevel}
    $\Xs^*$, the optimal set of data points from $\mathcal{D}$, is the solution of the original problem $\ref{eq:original}$ iff $r^*=\textrm{ratio}(\Xs^*)$ is the optimal mixing ratio solution of the reparameterized problem:
\begin{equation}
\begin{aligned}
\label{eq:reparameterized}
\min_{r \in \mathbb{R}^n} \min_{\Xs \in S_r}\quad & \mathcal{L}_{\textrm{eval}}(\theta_{\Xs}),
\end{aligned}
\end{equation}
where $S_r \triangleq \{ \Xs : \Xs \in \mathcal{D},  \text{ratio}(\Xs) = r, |\Xs|=M\}$ and $\text{ratio}(\Xs)=r$ means that the data points in $\Xs$ satisfies the mixing ratio $r \in \mathbb{R}^{N}$ from $n$ data domains and $\lVert r \rVert_1 = 1$.
\end{restatable}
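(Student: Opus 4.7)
The plan is to observe that the reparameterization does nothing more than partition the feasible set of problem $\ref{eq:original}$ by the equivalence relation ``having the same domain ratio,'' so problems $\ref{eq:original}$ and $\ref{eq:reparameterized}$ are really two different groupings of the same finite collection of candidate mixtures. The equivalence should therefore follow from a one-line identity about minima over a disjoint union, together with care about what it means for $r^*$ to be an ``optimal mixing ratio'' in the bilevel formulation.

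First, I will make the partitioning precise. The feasible region $\{\Xs \in \mathcal{D} : |\Xs| = M\}$ of problem $\ref{eq:original}$ is a finite discrete set, and every feasible $\Xs$ has a uniquely defined ratio $r = \text{ratio}(\Xs) \in \mathbb{R}^n$ satisfying $\lVert r \rVert_1 = 1$. Hence the map $\Xs \mapsto \text{ratio}(\Xs)$ partitions the feasible region into the nonempty fibres $S_r = \{\Xs \in \mathcal{D} : |\Xs|=M,\ \text{ratio}(\Xs) = r\}$, and taking minima over a disjoint union gives
\begin{equation*}
\min_{\Xs \in \mathcal{D},\, |\Xs|=M} \Leval(\theta_\Xs) \;=\; \min_{r}\; \min_{\Xs \in S_r} \Leval(\theta_\Xs).
\end{equation*}
Thus the two problems attain the same optimal value, and the joint minimizers of the bilevel problem correspond bijectively to the minimizers of the original one under $\Xs \leftrightarrow (\text{ratio}(\Xs), \Xs)$.

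From this identity the iff is immediate. For the forward direction, if $\Xs^*$ solves problem $\ref{eq:original}$, set $r^* = \text{ratio}(\Xs^*)$; then $\Xs^* \in S_{r^*}$ forces the inner value at $r^*$ to equal $\Leval(\theta_{\Xs^*})$, which by the identity above coincides with the outer minimum of problem $\ref{eq:reparameterized}$, so $r^*$ is an outer minimizer. For the reverse direction, if $r^* = \text{ratio}(\Xs^*)$ is an outer minimizer of problem $\ref{eq:reparameterized}$ and $\Xs^*$ attains the corresponding inner minimum over $S_{r^*}$, then $\Xs^*$ achieves the common optimum value and hence solves problem $\ref{eq:original}$.

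I do not expect a genuine technical obstacle, since the claim is essentially a bookkeeping lemma. The one subtlety I will treat carefully is that the inner level of problem $\ref{eq:reparameterized}$ may have multiple minimizers at a given $r$, so an outer-optimal ratio $r^*$ alone does not single out an $\Xs^*$. I will therefore phrase the equivalence at the level of joint optimizers $(r^*, \Xs^*)$ -- equivalently, assuming that $\Xs^*$ itself attains the inner minimum at $r^* = \text{ratio}(\Xs^*)$ -- so that the iff lifts cleanly back to the $\Xs$-space of problem $\ref{eq:original}$.
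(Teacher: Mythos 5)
Your proposal is correct, but it takes a more self-contained route than the paper. The paper does not argue the equivalence directly: it restates a general reparameterization result (Lemma C.1 of the cited AutoML work, reproduced as Lemma~\ref{lemma:reparameterize}), which says that $x^*$ minimizes $f(x)$ iff $y^*=g(x^*)$ solves the bilevel problem $\min_y \min_{x:\,g(x)=y} f(x)$, and then simply verifies the hypotheses by setting $x=\Xs$, $f(x)=\Leval(\theta_\Xs)$, $g(x)=\textrm{ratio}(\Xs)$, $y=r$, checking that $f$ and $g$ are well defined on the feasible set. Your argument instead proves the underlying fact from scratch: the ratio map partitions the finite feasible region $\{\Xs\in\mathcal{D}:|\Xs|=M\}$ into the fibres $S_r$, and a minimum over a disjoint union equals the iterated minimum, from which both directions of the iff follow. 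The two buy different things: the paper's route is shorter and leans on an established lemma (so any generality or measurability caveats are inherited from that source), while yours makes the elementary combinatorial content explicit and, usefully, flags the genuine subtlety that an outer-optimal $r^*$ alone does not pin down $\Xs^*$, so the reverse direction must be read at the level of joint optimizers (i.e., $\Xs^*$ attaining the inner minimum at $r^*=\textrm{ratio}(\Xs^*)$) --- a point the paper's statement and cited lemma leave implicit. No gap; your version is, if anything, the more careful phrasing of the same equivalence.
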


The proof can be found in App.~\ref{app:reparameterization}, where we show that $\Xs^*$, the optimal data mixture of original problem $\ref{eq:original}$, satisfies a mixing ratio $r^*$ that is also the solution of reparameterized problem \ref{eq:reparameterized}. DUET aims to solve problem $\ref{eq:reparameterized}$ in an iterative manner. At the outer optimization level (global), DUET uses BO to exploit feedback from the evaluation task to propose a promising mixing ratio $r_t$ at each iteration $t$. At the inner optimization level (local), we introduce a sampling strategy that uses local domain data selection to retrieve a high-quality data subset that satisfies mixing ratio $r_t$.

\vspace{0mm}\subsection{Using data selection methods for inner problem}\label{sec: solve inner}
\vspace{0mm}

In this section, we show how data selection methods can be used to solve the inner problem in DUET. \textbf{For ease of expository and illustration, we use Influence Function (IF) as the choice of data selection method to explain our method.} DUET is flexible enough to incorporate different data selection choice and we analyzed different data selection methods in our experiments (Sec.~\ref{sec:experiments}). Our inner optimization problem aims to find the best-performing data mixture that satisfies:

\begin{equation}
\begin{aligned}
\label{eq:inner}
\Xs_r^* \triangleq \argmin_{\Xs \in S_r}\quad & \mathcal{L}_{\textrm{eval}}(\theta_{\Xs}),
\end{aligned}
\end{equation}

where $S_r \triangleq \{ \Xs : \text{ratio}(\Xs) = r, |\Xs|=M\}$. In other words, we need to find a subset of data $\Xs_r^*$ that yields the lowest evaluation task loss $y_r^* = \mathcal{L}_{\textrm{eval}}(\theta_{\Xs^*_r})$ while constrained to mixing ratio $r$.

First, let's consider a simple approach, based on prior works on estimating distribution extrema \citep{order_statistics_estimate_function, german_tank}. We \emph{randomly} sample $k$ different data mixtures from $S_r$. This yields $k$ data mixture samples $\{\Xs_1,\dots,\Xs_k\}$ (each satisfying the mixing ratio $r$). A \textbf{uniform random estimator} for $y_r^*$ is obtained by evaluating the unseen task performance of an LLM trained on each data mixture sample and taking the minimum: $\widetilde{y_r^*} = \min_{\Xs_i} \{\Leval(\theta_{\Xs_1}),\dots,\Leval(\theta_{\Xs_k})\}$ 
with $\widetilde{\Xs}_r^*=\argmin_{\Xs_i} \{\Leval(\theta_{\Xs_1}),\dots,\Leval(\theta_{\Xs_k})\}$ as the solution estimate of inner problem \ref{eq:inner}. The estimator $ \tilde{y}_r^*$ is the 1st-order statistic \citep{order-statistics} and a random variable. While consistent (i.e., as we increase the sampling size $k$, we can estimate the solution of Eq.~\ref{eq:inner} more accurately), uniform random estimator $\tilde{y}_r^*$ has high variance (we provide empirical evidence in Fig.~\ref{fig:empirical-distribution}) because from $k$ uniformly random data mixture samples, it is unlikely we select the optimal data mixture.

\textbf{How can data selection help?} We aim to improve the quality of estimator $\widetilde{y_r^*}$ by incorporating data selection methods \citep{ijcai2022data-strategy-valuation,wangJTShapley2024} into our sampling process. Specifically, we want to increase the chance of sampling high-quality data points (conversely, reduce the chance of sampling low-quality data points) from each data domain, before using it to train an LLM. To do so, let us consider the use of influence function \citep{koh2017influence, saunshi2023understanding} (IF) as a data selection method into our estimator $\widetilde{y_r^*}$ to estimate the inner problem solution more accurately. In App.~\ref{app:other-data-selection}, we discuss the tradeoffs between different data selection methods, such as coresets \citep{mirzasoleiman2020coresetsdataefficienttrainingmachine}, diversity-driven measures \citep{wang2024diversitymeasurementsubsetselection} and LESS \citep{xia2024lessselectinginfluentialdata} when used in DUET. Our experimental results (Fig.~\ref{fig:ablation-data-selection}) also analyzed the performance of DUET paired with different data selection methods.

\textbf{IF-driven estimator}. We construct an IF-driven estimator in the following manner: \textit{first}, for each dataset $D_i \in \mathcal{D}$ from the training domains, we fine-tune a separate, potentially smaller, LLM on that dataset. \textit{Second}, we derive the IF score of each training data point w.r.t.~the trained LLM for its respective domain (this can be computed and stored beforehand; more details in App.~\ref{app:IF-details}). \textit{Lastly}, given a mixing ratio $r$ proposed at each iteration, we perform weighted sampling from each domain based on each data point's IF score within the domain dataset (instead 
of uniform sampling as mentioned previously) until we satisfy the mixing ratio $r$. From hereon, we refer to this sampling process as \emph{IF-weighted sampling}. For each data domain, there is a higher chance to sample a data point with a higher IF score. This yields a data mixture sample $\Xs^{IF}$. By performing IF-weighted sampling $k$ times, we obtain $k$ samples of IF-weighted data mixtures $\{\Xs_1^{IF},\dots,\Xs_k^{IF}\}$, producing a new \textbf{IF-driven estimator}:
\begin{equation}
\label{eq:IF-sample-estimator}
    \widetilde{y_{r}^*} = \min_{\Xs_i} \{\Leval(\theta_{\Xs_1^{IF}}),\dots,\Leval(\theta_{\Xs_k^{IF}})\},
\end{equation}
which estimates the solution of inner optimization problem \ref{eq:inner}. Unlike the uniform random estimator mentioned earlier, IF-driven estimator emphasizes selecting data with high IF scores, and prior works \citep{saunshi2023understanding} have regarded data points with higher IF scores as of higher quality. Next, we will discuss the empirical distribution of the IF-driven estimator.



\begin{wrapfigure}{r}{5cm}
\vspace{-5mm}
\includegraphics[width=5cm]{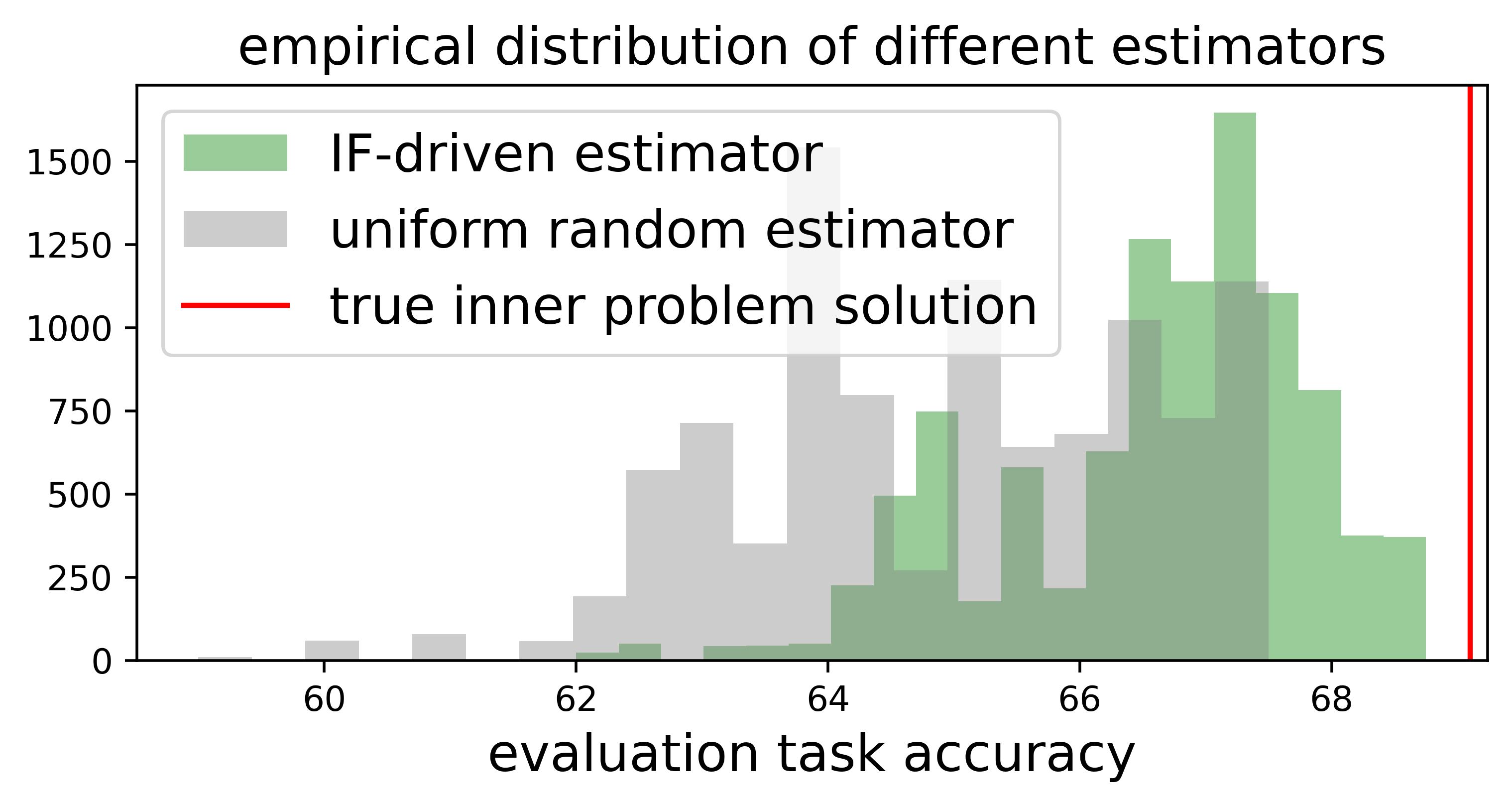}
    \caption{Empirical distribution of the uniform random and IF-driven estimator $\widetilde{y_r^*}$. \textcolor{red}{Red line} is the true inner problem solution.}
    \label{fig:estimator-noise}
    \vspace{-3mm}
\end{wrapfigure}

\textbf{Empirical distribution}. In Fig.~\ref{fig:estimator-noise}, we mixed data from two training domains to train an LLM to maximize an unseen task accuracy (while Eq.~\ref{eq:inner} \& \ref{eq:IF-sample-estimator} consider the minimization case, we can use $\max$ instead of $\min$ for the maximization case). We used a fixed mixing ratio $r=[0.5,0.5]$. The optimal data mixture satisfying this ratio attains a task accuracy indicated by the \textcolor{red}{red line} (obtained by iterating through all possible data mixtures in a brute-force manner). Ideally, we want our estimator to be as close to the red line as possible. Next, we plot the empirical distribution of the \textbf{uniform random estimator} and \textbf{IF-driven estimator}. Empirically, the IF-driven estimator (\textcolor{teal}{green histogram}) has a lower variance and bias than the uniform random estimator (\textcolor{gray}{gray histogram}), producing a closer estimate to the true solution (\textcolor{red}{red line}). This suggests that the IF-driven estimator $\widetilde{y_{r}^*}$ estimates the solution of problem~\ref{eq:inner} more accurately.


\textbf{Theoretical distribution}. Exactly how well does the IF-driven estimator $ \widetilde{y_{r}^*}$ estimate the optimal unseen evaluation task loss $y_r^*$ w.r.t.~a given data ratio $r$?
To answer this, we theoretically analyze this estimator's empirical distribution. Empirically (App.~\ref{app:empirical-sampling}), the negative of the sampling distribution of the unseen task accuracy (we consider the negative because we are looking to maximize the accuracy, instead of minimizing the loss) of each sample $\Leval(\theta_{\Xs^{IF}})$ resembles a truncated exponential distribution. Based on this, we characterize how well the IF-driven estimator $ \widetilde{y_{r}^*}$ estimates $y_r^*$:

\begin{restatable}{theorem}{innererror}
\label{Theorem:inner error distribution}
    Let $\{\Xs_1^{IF},\dots,\Xs_k^{IF}\}$ be $k$ data mixture samples drawn from $S_r$ using IF-weighted sampling. Furthermore, assume each independent sample  $\Leval(\theta_{\Xs_i^{IF}})$ follows the shifted truncated exponential distribution $y_{r}^* + \exp_t(\lambda,c)$, for  $i=1,2,\dots,k$ where $\exp_t(\lambda, c)$ is a truncated exponential distribution governed by rate parameter $\lambda$ and truncated at $c > 0$. Then, the IF-driven estimator $\widetilde{y_{r}^*}$ defined in Eq.~\ref{eq:IF-sample-estimator} is a random variable: $y_r^* + \epsilon$, where $y_r^*$ is the true inner problem solution of Eq.~\ref{eq:inner} and $\epsilon$ is a random noise variable with probability density function (PDF): \vspace{0mm} $$\textrm{PDF}_\epsilon(u)=\frac{\lambda ke^{-\lambda u}}{1-e^{-\lambda c}} \left(\frac{e^{-\lambda u} - e^{-\lambda c}}{1-e^{-\lambda c }}\right)^{\hspace{-1mm}k-1} \text{on}\ u\in[0,c]\ .$$
    
        
\end{restatable}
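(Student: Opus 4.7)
The plan is to recognize this as a straightforward first-order-statistic computation, so the proof reduces to three steps: translate the problem to i.i.d.\ truncated exponentials, compute the survival function of the minimum, and differentiate to recover the density.

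First I would observe that writing $\mathcal{L}_{\textrm{eval}}(\theta_{\Xs_i^{IF}}) = y_r^* + \epsilon_i$ with $\epsilon_i \sim \exp_t(\lambda,c)$ i.i.d.\ immediately yields $\widetilde{y_r^*} = y_r^* + \min_{i \in \{1,\dots,k\}} \epsilon_i$, so the additive constant $y_r^*$ does not interact with the order-statistic calculation. Thus it suffices to characterize $\epsilon \triangleq \min_i \epsilon_i$.

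Next I would record the density and cumulative distribution function of the truncated exponential on $[0,c]$, namely
\begin{equation*}
f(u) = \frac{\lambda e^{-\lambda u}}{1-e^{-\lambda c}}, \qquad F(u) = \frac{1 - e^{-\lambda u}}{1 - e^{-\lambda c}},
\end{equation*}
so that the survival function simplifies nicely to
\begin{equation*}
1 - F(u) = \frac{e^{-\lambda u} - e^{-\lambda c}}{1 - e^{-\lambda c}}.
\end{equation*}
Using independence, $\Pr(\epsilon > u) = (1-F(u))^k$, from which the density of the minimum is
\begin{equation*}
\mathrm{PDF}_\epsilon(u) = -\frac{d}{du}(1-F(u))^k = k\,(1-F(u))^{k-1}\,f(u),
\end{equation*}
and plugging in the expressions for $f$ and $1-F$ reproduces the claimed PDF exactly on $u \in [0,c]$.

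There is really no hard step here; the entire argument is a textbook application of the minimum-order-statistic identity $f_{\min}(u)=k(1-F(u))^{k-1}f(u)$. The only conceptual subtlety worth spelling out explicitly is why the shift $y_r^*$ can be factored out cleanly (a consequence of the samples being conditionally i.i.d.\ given $r$, so that translation by the deterministic quantity $y_r^*$ commutes with the $\min$ operator), and why the support of $\epsilon$ is $[0,c]$ rather than something larger (inherited directly from the truncation of each $\epsilon_i$). I would close by noting that the density integrates to one on $[0,c]$, which follows from $(1-F(0))^k=1$ and $(1-F(c))^k=0$, confirming consistency with the survival-function derivation.
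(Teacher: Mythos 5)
Your proposal is correct and follows essentially the same route as the paper: both isolate the minimum of $k$ i.i.d.\ truncated exponentials, compute its survival/cumulative distribution via independence, and differentiate to obtain the stated PDF, with the deterministic shift $y_r^*$ factored out of the $\min$. No gaps.
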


The proof is shown in App.~\ref{app:proof-inner-error-distribution} and computes the probability distribution of the 1st order statistic (in which our estimator uses) of a truncated exponential distribution. Theorem~\ref{Theorem:inner error distribution} is used in DUET's convergence analysis in Sec.~\ref{sec:theory regret}. In App.~\ref{app:extension-other-distributions}, we also provide details to help readers extend our analysis to other empirical sampling distributions. This also indicates that estimation error $\epsilon$ of the IF-driven estimator reduces to 0 as the sampling size $k$ increases asymptotically. Surprisingly, our experiments (Sec.~\ref{sec:experiments}) show that using $k=1$ is enough to select good data mixtures, underscoring the effectiveness of using data selection as opposed to random sampling. We also found that given sufficient budget, using varying $k$ gives us granular control of DUET's performance (Sec.~\ref{sec:ablation}).


\vspace{0mm}\subsection{Using Bayesian optimization for outer problem} \label{sec: solve outer}

With the IF-driven estimator introduced to estimate the inner optimization problem solution (or any estimator using a desired data selection method of choice), we shift our focus to solving the outer optimization problem of problem \ref{eq:reparameterized}, which aims to find the optimal data mixing ratio $r^*$ for the unseen evaluation task. Since the solution of the inner problem
$y_r^* = \min_{\Xs \in S_r} \mathcal{L}_{\textrm{eval}}(\theta_{\Xs})$  depends only on the mixing ratio $r$, we can define a function $f(r) \triangleq y_r^* = \min_{\Xs \in S_r} \mathcal{L}_{\textrm{eval}}(\theta_{\Xs})$, where for a given mixing ratio $r$, we use the IF-driven estimator to estimate a solution for the inner problem, producing $f(r)$. As such, the outer optimization problem of problem \ref{eq:reparameterized} can be rewritten into $\textstyle  \min_{r} f(r)$
where $r \in \mathbb{R}^{n}$ is a probability simplex representing the mixing ratio over the $n$ training domains. DUET uses BO constrained to $\lVert r\rVert _ 1 = 1$ (Sec.~\ref{sec:bo-preliminary}) to find the optimal mixing ratio $r^*$ for the outer problem.

BO is suitable for solving this problem for a few reasons. First, evaluating $f$ requires us to use the IF-driven estimator to estimate the inner optimization problem solution and thus $f$ is a black-box function with no closed, mathematical form; BO is a principled and popular framework to optimize such black-box functions \citep{Book_garnett2023bayesian,BO-drug}. 
Second, we are estimating the inner problem solution (Theorem.~\ref{Theorem:inner error distribution}) using data selection. This implies we can only obtain \textit{noisy observations} $f(r) + \epsilon$, where $\epsilon$ is a random noise variable with the same distribution as that in Theorem~\ref{Theorem:inner error distribution}; fortunately, BO handles noisy function observations gracefully \citep{bo-gp-ucb-10, bo-kernelized-bandits} during the optimization process, allowing us to find the optimal mixing ratio eventually (theoretical results shown in Sec.~\ref{sec:theory regret}).


\subsection{Interleaving the IF-driven estimator and BO}
DUET uses BO at the outer level and IF-driven estimator at the inner level to iteratively optimize the data mixture, solving problem \ref{eq:reparameterized}. We formally describe DUET in Algorithm~\ref{alg:ABOLLO}.

\begin{algorithm}[h]
   \caption{\textbf{DUET}: Optimizing training \textbf{\underline{D}}ata Mixtures for an \textbf{\underline{U}}nseen \textbf{\underline{E}}valuation \textbf{\underline{T}}ask}
   \label{alg:ABOLLO}
\begin{algorithmic}[1]
   \STATE {\bfseries Input:} $n$ training datasets from $n$ domains $\{D_1,\dots,D_n\}$. Computed IF scores of each data point (App.~\ref{app:IF-details}) w.r.t.~its domain dataset and locally trained model. Initial observation of data mixing ratio and evaluation task performance: $\mathcal{D}_{0}\triangleq\{(r_0, \tilde{y}_0)\}$, SE kernel $\kappa$, sampling size $k$, parameter $\beta_t$ for acquisition step and total number of BO iterations $T$.
   \FOR{$t=1,\dots,T$}
   
   \STATE $r_{t} =  \argmin_{r} \mu_t(r) - \beta_t \sigma_t(r)$ (BO acquisition step)
   
   \STATE
    IF-weighted sampling to obtain $k$ samples of data mixtures $\{\Xs_1^{IF},\dots,\Xs_k^{IF}\}$ (Sec.~\ref{sec: solve inner}).

    \STATE
    \textbf{IF-driven estimator} at iteration $t$: \\ $\widetilde{y_{r_t}^*} = \min_{\Xs_i} \{\Leval(\theta_{\Xs_1^{IF}}),\dots,\Leval(\theta_{\Xs_k^{IF}})\}$.

    \STATE
    Keep track of best performing data mixture $\Xs_t^* = \argmin_{\Xs_i} \{\Leval(\theta_{\Xs_1^{IF}}),\dots,\Leval(\theta_{\Xs_k^{IF}})\}$.

    \STATE
    $\mathcal{D}_{t} = \mathcal{D}_{t-1} \cup \left\{\left(r_{t}, \widetilde{y_{r_t}^*}\right)\right\}$

    \STATE
    Update the GP posterior and $\kappa$ with updated observations $\mathcal{D}_{t+1}$ (Sec.~\ref{sec:bo-preliminary}).

   \ENDFOR
   \STATE $\Xs^* = \argmin_{\Xs_i^* \in \{ \Xs_1^*,\dots,\Xs_T^*\}} \Leval(\theta_{\Xs_i^*})$
\vspace{0mm}
\end{algorithmic}
\end{algorithm}

At iteration $t$, DUET uses the LCB acquisition function \citep{bo-gp-ucb-10} on the GP posterior to propose a candidate mixing ratio $r_t$ for our data domains (Line 3). Using the proposed mixing ratio $r_t$, we use IF scores of each data point to compute the IF-driven estimator $\widetilde{y_{r_t}^*}$ and fine-tune an LLM with the selected data points and observe the feedback from the downstream unseen task based on the fine-tuned LLM. We keep track of the best performing data mixture sample $\Xs_t^*$ at every iteration $t$ (Line 4, 5 and 6). Next, we include $(r_{t+1}, \widetilde{y_{r_t}^*})$ into our historical observations $\mathcal{D}_{t+1}$ (Line 7) and update our GP posterior (Line 8). After which, we repeat the entire feedback process to select the next LLM fine-tuning data mixture, until the budget of $T$ BO iterations is exhausted. In the end, we recover the best performing data mixture $\Xs^*$ for the unseen evaluation task (Line 10).

\section{Theoretical Analysis}\label{sec:theory regret}

\subsection{Convergence analysis of DUET using cumulative regret}
We analyze the convergence rate of DUET using the growth of \textit{attained cumulative regret} \citep{chen2024towardsautoai} $\tilde{R}_T = \sum_{t=1}^T |\widetilde{y^*_{r_t}}-f(r_t)| = \sum_{t=1}^T |f(r^*) + \epsilon_t - f(r_t)|$ for $T$ BO iterations. The attained cumulative regret consists of two terms, where $|f(r^*)-f(r_{t})|$ indicates the quality of mixing ratio $r_t$ proposed at each iteration while $\epsilon_t$ indicates how well we can estimate the inner problem solution at every iteration. By analyzing the attained \textit{average} regret $\tilde{R}_T/T$ with $T \rightarrow \infty$, the following Theorem helps us understand how close our algorithm converges \citep{BO-unknown-param-regret-explanation}.


\begin{restatable}{theorem}{regret}
\label{regret-abollo}
Let $f$ be the outer problem objective defined in Sec.~\ref{sec: solve outer} with bounded RKHS norm: $||f||_{\kappa}=\sqrt{	\langle f,f \rangle_\kappa}$. Also, let our IF-driven estimator for the inner problem solution be governed by the error distribution introduced in Theorem~\ref{Theorem:inner error distribution} with constant $c$ and $\lambda=1$. Let $A_{c,k} = \frac{c^2(1-e^{-c}-\frac{c}{2})^{k-1}}{(1-e^{-c})^k}$, where $k$ is a fixed predecided sampling size. Then, running DUET over $f$ using the LCB acquisition function found in \citep{bo-kernelized-bandits} at each BO iteration $t=1,\dots,T$ yields the following \textbf{attained average regret} \citep{chen2024towardsautoai} upper bound with probability at least $1-\delta$:

$$
    \displaystyle \lim_{T\xrightarrow{}\infty}\frac{\tilde{R}_T}{T} \leq \frac{6(\sqrt[4]{\delta}+\sqrt{k})}{\sqrt[4]{\delta}k} + 2A_{c,k} + \frac{\sqrt{2A_{c,k}}}{\sqrt[4]{\delta}}.
$$
\end{restatable}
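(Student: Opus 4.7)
The plan is to decompose the attained cumulative regret by the triangle inequality into a standard BO cumulative regret term and an additive noise sum, and to bound each contribution separately. Since $f(r_t)\ge f(r^*)$ for every $t$ and each $\epsilon_t\in[0,c]$ is non-negative, one has
\[
\tilde{R}_T \;\le\; \underbrace{\sum_{t=1}^{T}\bigl[f(r_t)-f(r^*)\bigr]}_{=:R_T} \;+\; \sum_{t=1}^{T}\epsilon_t,
\]
so the target bound reduces to controlling $R_T/T$ (the BO side) and $\frac{1}{T}\sum_{t=1}^T \epsilon_t$ (the noise side) in the limit.

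For the BO side I would invoke the high-probability kernelized-bandit regret bound of \citep{bo-kernelized-bandits}: since $\|f\|_\kappa$ is bounded and $r_t$ is selected by the LCB acquisition rule with the SE kernel, $R_T = \tilde O\bigl(\sqrt{T\gamma_T\beta_T}\bigr)$ with probability at least $1-\delta/2$, where $\gamma_T$ is the maximum information gain (polylogarithmic in $T$ for SE). Dividing by $T$ and sending $T\to\infty$ drives this contribution to $0$, so every surviving term in the final bound must come from the noise. The only subtlety here is that the estimator noise is non-negative and bounded in $[0,c]$ rather than zero-mean; bounded noise is sub-Gaussian, and a non-zero mean only shifts the GP posterior mean by a constant, so the standard $\sqrt{T\gamma_T\beta_T}$ rate still applies.

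For the noise side I plan a two-step Markov--then--Cauchy-Schwarz argument, which is the origin of the distinctive $\sqrt[4]{\delta}$ exponent. The first step is the moment estimate $\mathbb{E}[\epsilon^2]\le A_{c,k}$. Using Theorem~\ref{Theorem:inner error distribution} with $\lambda=1$ and the elementary inequality $e^{-u}\le 1-u/2$ on the relevant subinterval of $[0,c]$, the density of the 1st-order statistic satisfies
\[
f_{\min}(u)\;\le\;\frac{k\,e^{-u}}{(1-e^{-c})^{k}}\bigl(1-e^{-c}-u/2\bigr)^{k-1},
\]
and I would integrate against $u^2$ using the joint endpoint envelope $u^2 \le c^2$ together with a carefully-placed bound on $(1-e^{-c}-u/2)^{k-1}$ at $u=c$ to extract exactly the $A_{c,k}=c^2(1-e^{-c}-c/2)^{k-1}/(1-e^{-c})^k$ form. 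The second step applies Markov to the sample second moment, $\Pr\bigl(\frac{1}{T}\sum_t\epsilon_t^2 > A_{c,k}/\sqrt{\delta}\bigr) \le \sqrt{\delta}$, and then Cauchy--Schwarz $\frac{1}{T}\sum_t\epsilon_t \le \sqrt{\frac{1}{T}\sum_t\epsilon_t^2}$, delivering $\frac{1}{T}\sum_t\epsilon_t \le \sqrt{A_{c,k}}/\sqrt[4]{\delta}$ on an event of probability at least $1-\sqrt{\delta}$. The additive $2A_{c,k}$ and $\sqrt{2A_{c,k}}/\sqrt[4]{\delta}$ pieces then correspond respectively to an expectation offset and this Markov--Cauchy deviation, while the $6/k + 6/(\sqrt[4]{\delta}\sqrt{k})$ piece arises from a parallel bound $\mathbb{E}[\epsilon] = \int_0^c\bigl((e^{-u}-e^{-c})/(1-e^{-c})\bigr)^k\,du = O(1/k)$ combined with the same Markov-then-Cauchy step. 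Taking a union bound between the GP-LCB confidence event and the Markov events and then sending $T\to\infty$ kills $R_T/T$ and leaves the stated asymptotic bound at overall probability at least $1-\delta$.

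The main obstacle will be proving $\mathbb{E}[\epsilon^2]\le A_{c,k}$ with the correct $k$-dependence. The integrand $u^2\,(1-e^{-c}-u/2)^{k-1}$ is unimodal in $u$, so the naive approach of upper bounding each factor at its own maximum (e.g.~$u^2\le c^2$ at $u=c$ but $(1-e^{-c}-u/2)^{k-1}\le (1-e^{-c})^{k-1}$ at $u=0$) loses exactly the $k$-contraction that makes $A_{c,k}\to 0$ as $k\to\infty$; the integral has to be structured so that both factors are simultaneously evaluated at the same endpoint in order to retain the $(1-e^{-c}-c/2)^{k-1}$ factor that defines $A_{c,k}$. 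A secondary book-keeping challenge is carefully partitioning the total failure probability $\delta$ between the GP-LCB event and the two Markov events so that the exponents on $\delta$ in the final expression compose to exactly $1/4$ as stated.
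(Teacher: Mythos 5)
Your global structure matches the paper's: you split $\tilde{R}_T$ into the standard LCB regret term (which, after dividing by $T$, vanishes via the kernelized-bandit bound $\sum_{t}\sigma_{t-1}(r_t)=O(\sqrt{T\gamma_T})$ of \citep{bo-kernelized-bandits}) plus the sum of the non-negative estimation errors $\epsilon_t$, and you control the latter through moment bounds applied at confidence level $\sqrt{\delta}$, which is exactly where the $\sqrt[4]{\delta}$ arises. The paper does the noise side with Chebyshev after proving $\mathbb{E}[\epsilon_t]\le \frac{6}{k}+2A_{c,k}$ and $\Var(\epsilon_t)\le\mathbb{E}[\epsilon_t]$ (its Lemma~\ref{lemma:noise expectation variance}, with $\delta_1=\delta_2=\sqrt{\delta}$), whereas you propose Markov on the empirical second moment followed by Cauchy--Schwarz; that substitution is a legitimate variant. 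The genuine gap is the moment estimate you yourself flag as the main obstacle: $\mathbb{E}[\epsilon^2]\le A_{c,k}$ is false in general, so no rearrangement of the integral can deliver it. The minimum of $k$ i.i.d.\ truncated exponentials concentrates at scale $(1-e^{-c})/k$, so $\mathbb{E}[\epsilon^2]=\Theta(1/k^2)$ decays only polynomially in $k$, while $A_{c,k}=\frac{c^2}{1-e^{-c}}\,\rho^{\,k-1}$ with $\rho=\frac{1-e^{-c}-c/2}{1-e^{-c}}\in(0,1)$ decays geometrically; concretely, for $c=1$ one already has $\mathbb{E}[\epsilon^2]\approx 0.035$ versus $A_{1,4}\approx 0.014$ at $k=4$, and the ratio diverges as $k$ grows. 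This is precisely why the paper's Lemma never attempts a pure $A_{c,k}$ bound and instead carries the additive $6/k$ term alongside $2A_{c,k}$.

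The repair is to follow the paper's moment bound and keep the $O(1/k)$ term: show $\mathbb{E}[\epsilon_t]\le \frac{6}{k}+2A_{c,k}$ (the paper integrates the first moment using $e^{-u}\le 1-\frac{u}{2}$ on $[0,c]$ with $c\le 1$) and then use $\mathbb{E}[\epsilon_t^2]\le c\,\mathbb{E}[\epsilon_t]\le \mathbb{E}[\epsilon_t]$; feeding this into your Markov--Cauchy--Schwarz step and splitting the square root, $\sqrt{(6/k+2A_{c,k})/\sqrt{\delta}}\le \sqrt{6}/(\sqrt[4]{\delta}\sqrt{k})+\sqrt{2A_{c,k}}/\sqrt[4]{\delta}$, recovers all four terms of the stated bound. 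Two smaller bookkeeping points: your attribution of the final terms is off (the $2A_{c,k}$ ``expectation offset'' cannot come from $\mathbb{E}[\epsilon]=O(1/k)$ alone; it enters through the looser mean bound above), and your probability split of $\delta/2$ for the GP confidence event plus $\sqrt{\delta}$ for the Markov event does not compose to an overall $1-\delta$; the paper allocates $\sqrt{\delta}$ to each of the two events, which is what forces the $\sqrt[4]{\delta}$ in the deviation terms.
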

The proof is provided in App.~\ref{proof-regret} and bounds $|f(r^*)-f(r_{t})|$  and $\epsilon_t$ independently using  BO regret analysis \citep{chen2024towardsautoai, bo-kernelized-bandits} and the error distribution defined in Theorem~\ref{Theorem:inner error distribution}. Our Theorem's average regret indicates how close our algorithm converges to the optimal evaluation task loss with increasing BO iteration $T$ and different choices of sampling size $k$. Notice that because $c$ characterizes the error of our estimator in Theorem~\ref{Theorem:inner error distribution}, a larger $c$ would decrease $A_{c,k}$ and our average regret. In addition, a larger sampling size $k$ reduces the estimation error of the inner problem (Theorem.~\ref{Theorem:inner error distribution}), decreasing $A_{c,k}$ and reducing our regret bound, although our experiments (Sec.~\ref{subsec:main-results}) show that setting $k=1$ is sufficient to achieve good performance.

\section{Practical considerations} \label{sec:practical-considerations}

We are free to use any data selection methods in DUET's inner loop. We specifically highlighted IF as a data selection method because in our experiments, IF worked slightly better when paired with BO (see Fig.~\ref{fig:ablation-data-selection} for detailed ablation) as compared to other selection methods. It also has some interpretable advantages (Sec.~\ref{app:further-benefits-if-sampling}). Even though computing IF scores could be budget-intensive, practical tricks, such as parallel computation, Hessian approximation \citep{agarwal2017secondorderstochasticoptimizationmachine}, pre-computation, or a smaller surrogate model can speed up computation.

If scaling to large-scale datasets is too compute-intensive, one could also use cheaper data selection methods in DUET, such as LESS \citep{xia2024lessselectinginfluentialdata} or TracIn \citep{tracin} with some performance-tradeoff. In the extreme case, one can even resort to the uniform random estimator introduced in Sec.~\ref{sec: solve inner}, which does not perform any data selection. We experimented with DUET paired with different data selection methods in Sec.~\ref{sec:ablation} and discussed their actual compute-time in App.~\ref{experiments_detail_extra}. In addition, DUET's iterative optimization process is a feature: subjecting LLMs to multiple rounds of training in a feedback loop is a natural part of its deployment life-cycle.



\section{Experiments and Discussion}\label{sec:experiments}
We conduct extensive experiments to showcase the effectiveness of DUET compared to other baselines. We optimize data mixtures with different methods based on multiple rounds of evaluation task performance feedback. Then, we fine-tune an LLM with the optimized data mixture. Lastly, we deploy the LLM on the evaluation task to evaluate how well the model has performed. We provide more details of our experimental setup and our algorithm computational cost in App.~\ref{experiments_detail_extra}.


\subsection{Experimental setup}\label{setup}
Our experiments are carried out by performing PEFT \citep{hu2021loralowrankadaptationlarge} of \texttt{Llama-3-8b-Instruct} \citep{llama} across different LLM knowledge domains. We also ran our experiments with \texttt{Qwen2.5-7B-Instruct} \citep{qwen2025qwen25technicalreport} and present the results in App.~\ref{app:addition-results-qwen}. Our findings were similar even for different LLMs. The training data domains for LLM evaluation consists of 9 topics: \textbf{Wikitext} \citep{wikitext-data}, \textbf{gsm8k} \citep{gsm8k}, \textbf{PubmedQA} \citep{pubmedqa}, \textbf{HeadQA} \citep{headqa} , \textbf{SciQ} \citep{sciq}, \textbf{TriviaQA} \citep{triviaQA}, \textbf{TruthfulQA} \citep{truthfulQA}, \textbf{Hellaswag} \citep{hellaswag}, and \textbf{CommonsenseQA} \citep{talmor2019commonsenseqaquestionansweringchallenge}. We also varied the difficulty of the unseen task by making them out-of-domain (see captions of Fig.~\ref{experiment:main-llm}). Our LLM performance might have slight differences from existing papers, most likely due to evaluation setup differences, which we elaborate in App.~\ref{experiments_detail_extra}. For DUET, we "warm-started" the BO processes by evaluating 50 different random data mixtures and updated the GP with the performance observations .

We ran several baselines:
\textbf{DoReMi} \citep{xie2023doremi} is a data-mixing approach that optimizes the data mixture in a distributionally robust manner. \textbf{LESS} \citep{xia2024lessselectinginfluentialdata} is a data-selection method based on data gradient similarities.
The \textbf{Uniform weights} baseline uses a data mixture of uniform ratio across different domains. We ran our baselines for the same number of iterations as DUET and take the best performing result to ensure similar compute comparison. We also used DUET with a few different data selection methods: \textbf{DUET-IF} uses our IF-driven estimator (Eq.~\ref{eq:IF-sample-estimator}) to select data mixtures at each BO iteration; \textbf{DUET-UR}, introduced in Sec.~\ref{sec: solve inner}, uses the uniform random estimator and randomly selects data mixtures that satisfy the proposed mixing ratio; \textbf{DUET-RH} (\textbf{R}emove \textbf{H}armful) removes 20\% of data points with the lowest IF scores from each data domain, before performing sampling. \textbf{DUET-LESS} \citep{xia2024lessselectinginfluentialdata} and \textbf{DUET-logdet} \citep{wang2024diversitymeasurementsubsetselection}, which incorporate different data selection methods into DUET, were also used in our ablation studies (Fig.~\ref{fig:ablation-data-selection}). We used a sampling size of $k=1$ and BO iterations $T=10$. We also constrained the total number of selected data points to $M=10000$ with a temperature of $0.75$ in our LLMs. This makes the "feedback" (performance) of all valuation tasks \textit{noisy}, similar to real-world tasks.

We also compared DUET with other baselines, such as \textbf{Aioli} \citep{data-mixing-framework-optimize}, \textbf{Multi-fidelity BO} \citep{yen2025datamixtureoptimizationmultifidelity}, \textbf{online data-mixing} \citep{ODM}, alongside naive approaches: e.g., using more training tokens, random search or only data selection. Due to space constraints, we show these results in Table.~\ref{table:other-baselines}. In general, DUET still finds better data mixtures than these baselines.

\subsection{Main result}\label{subsec:main-results}

\begin{figure*}[h]
\centering
\subfloat[\textbf{TruthfulQA}]
{\includegraphics[width=0.241\textwidth]
{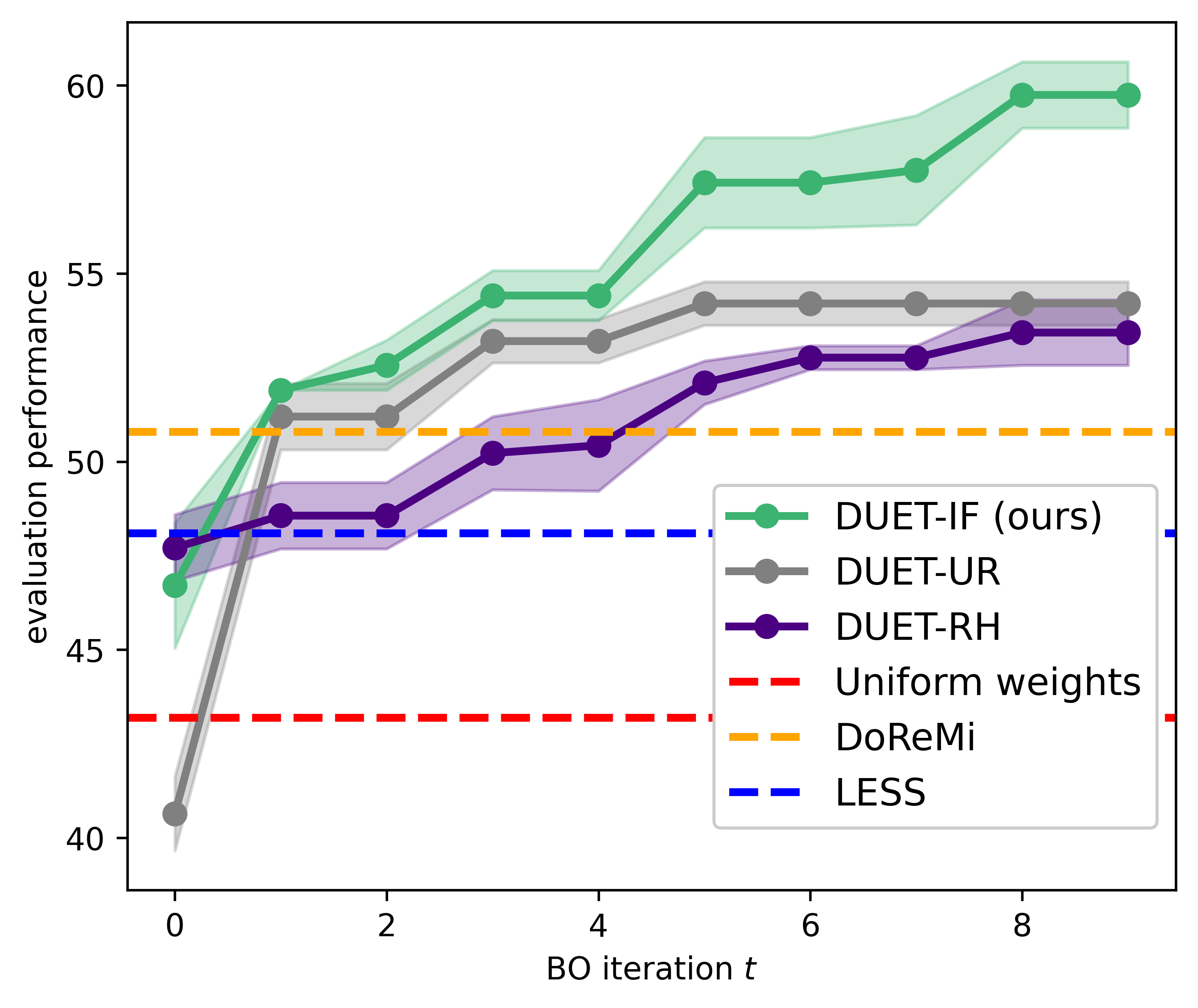}\label{fig:sub1}
}
\subfloat[\textbf{\underline{gsm8k}}]
{\includegraphics[width=0.241\textwidth]
{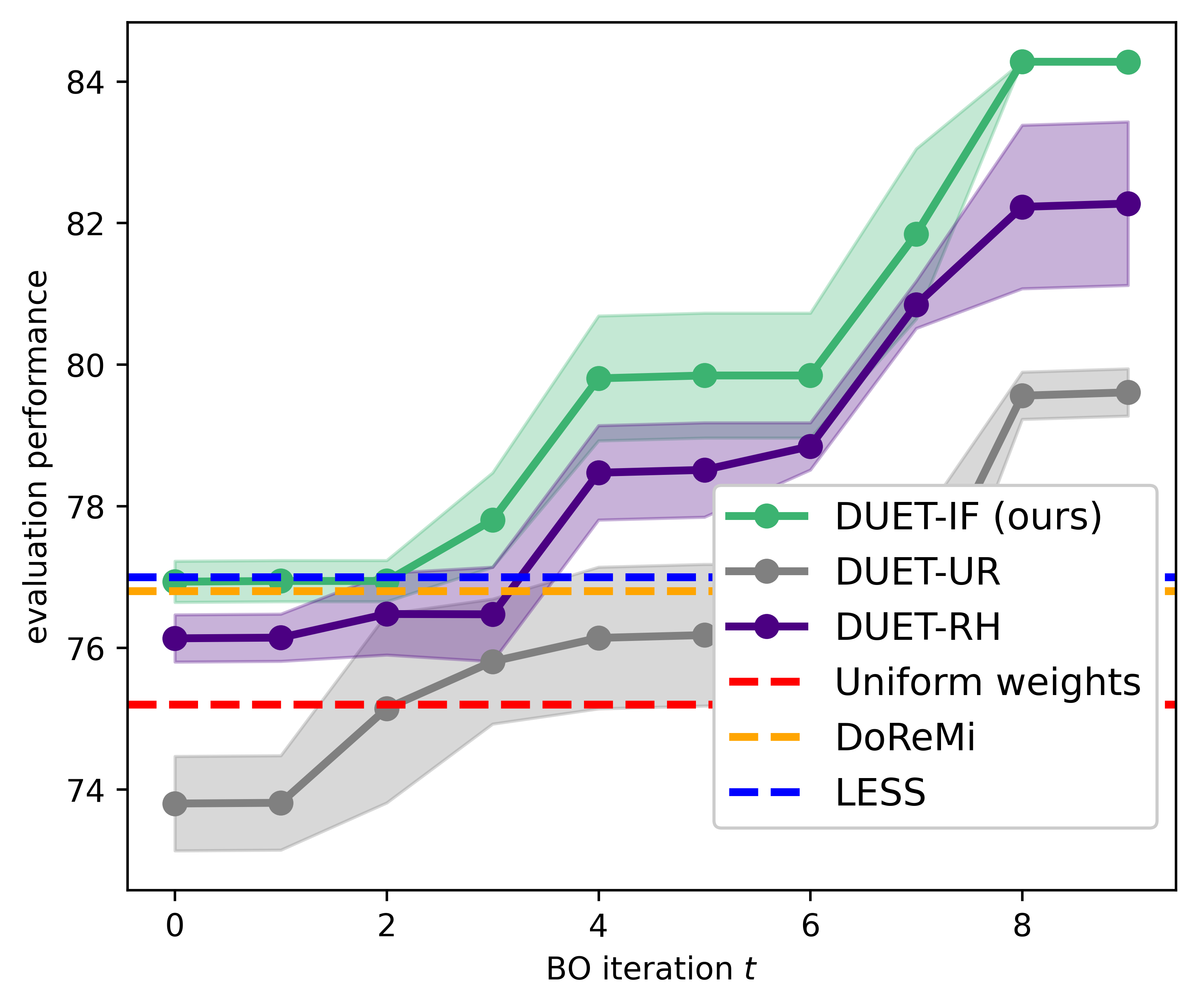}\label{fig:sub2}
}
\subfloat[\textbf{\underline{PubMedQA, HeadQA}}]
{\includegraphics[width=0.241\textwidth]
{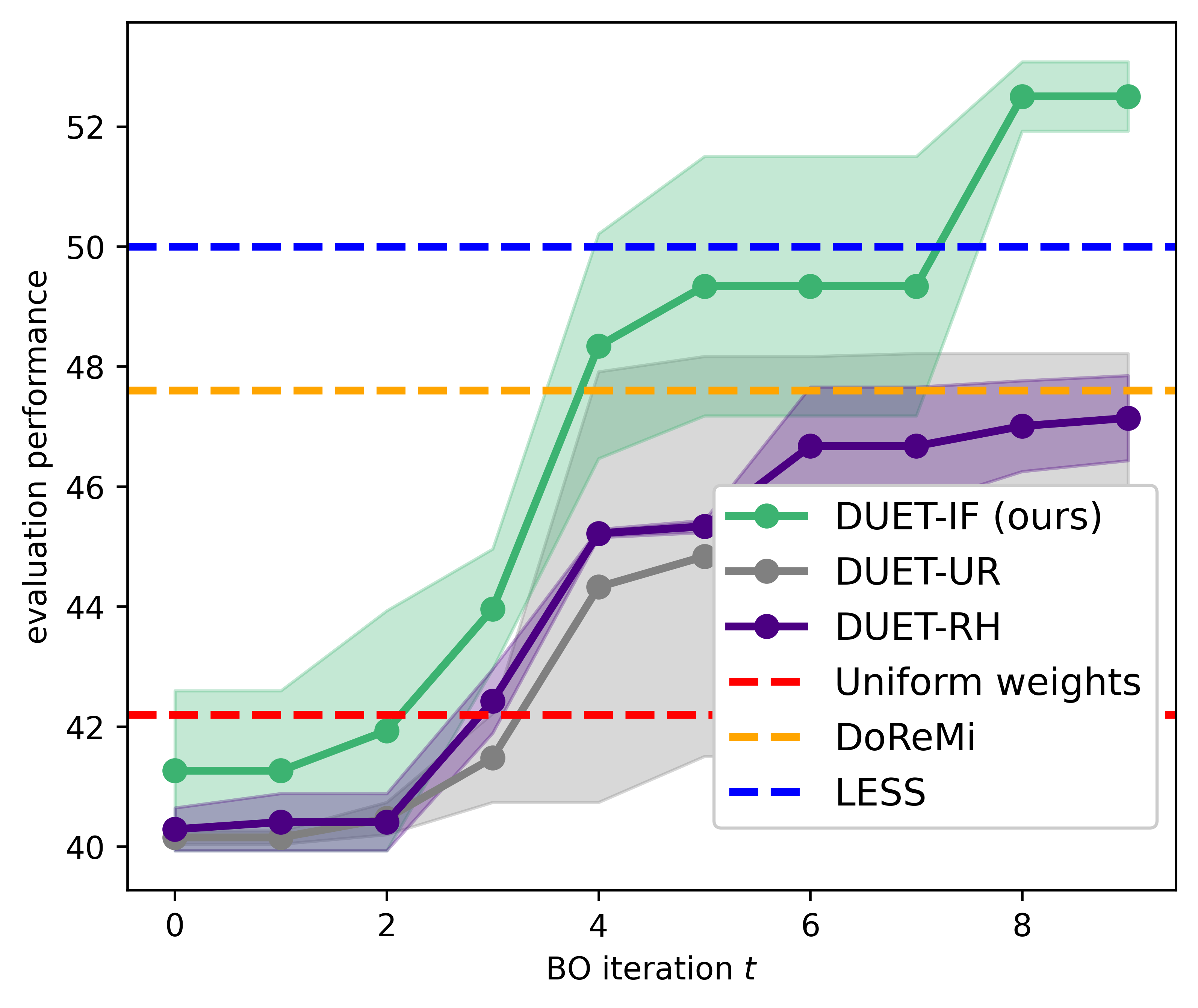}\label{fig:sub3}
}
\subfloat[\textbf{\underline{Commonsense, Trivia}}]
{\includegraphics[width=0.241\textwidth]
{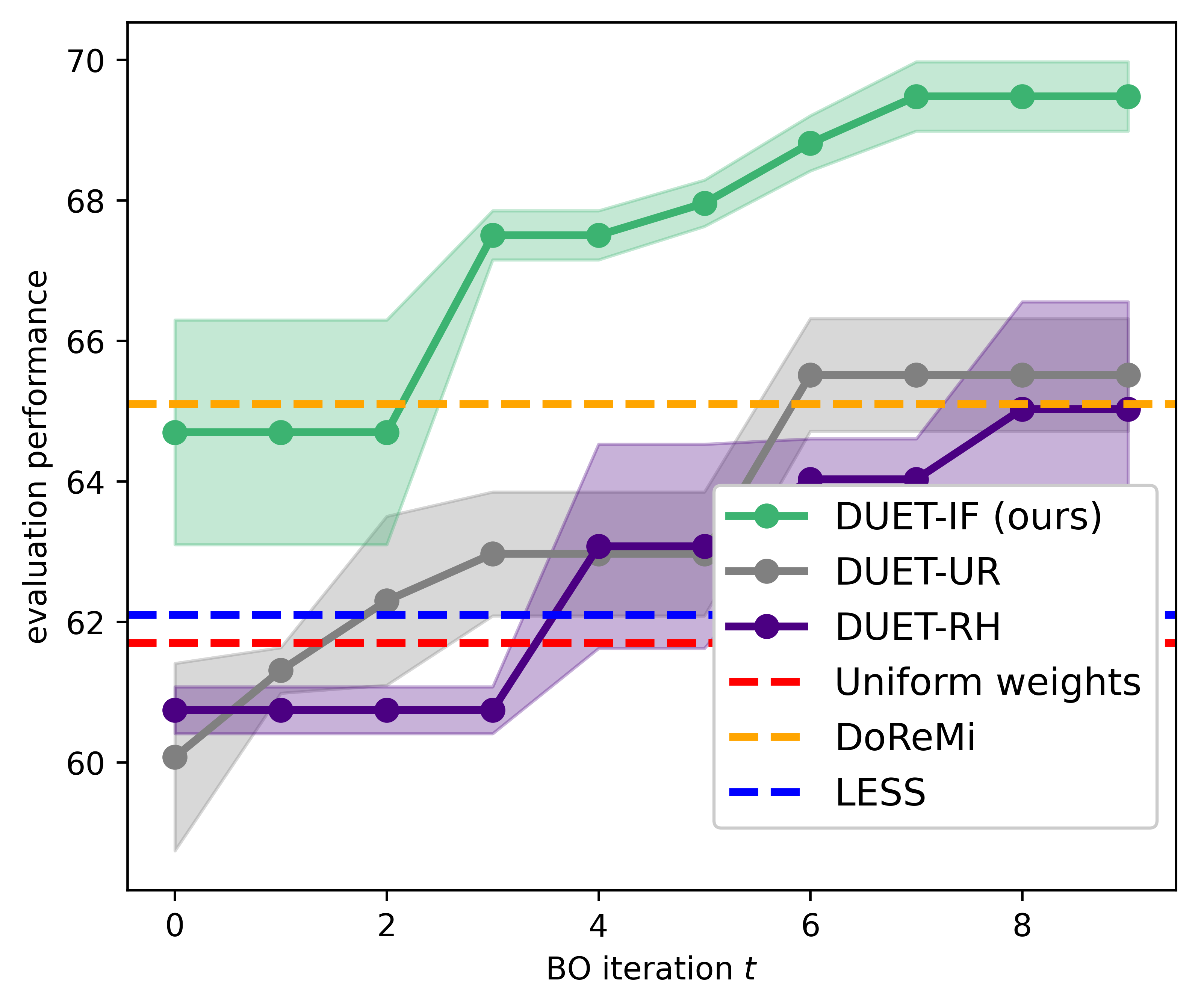}\label{fig:sub4}
}
\vspace{0mm}
\caption{Results on unseen LLM evaluation task domains over 10 iterations (higher is better) for \texttt{Llama-3-8b-Instruct}. Experiments were repeated with \texttt{Qwen2.5-7b-Instruct} in App.~\ref{app:addition-results-qwen}. The caption shows the evaluation task. \underline{\textbf{Underlined evaluation tasks are harder}} because the evaluation task domains are removed from the training data (i.e., out-of-domain). Results for more baselines are presented in Table.~\ref{table:naive-eval}.} 
\label{experiment:main-llm}
\end{figure*}

\textbf{DUET finds more optimal data mixtures.} Our result (Fig.~\ref{experiment:main-llm}) shows that DUET finds better data mixtures within a few iterations of feedback loops. The first column in Fig.~\ref{experiment:main-llm} consists of a relatively easier task where the evaluation domain is part of the training task domains. In this case, DUET (\textcolor{teal}{green plot}) uses feedback from the evaluation task to find the optimal data mixture with more weights on the relevant training data domain, TruthfulQA. On the other hand, we observe the weakness of conventional methods which cannot exploit coarse feedback: DoReMi (\textcolor{orange}{orange dotted line}) and LESS (\textcolor{blue}{orange dotted line}) both cannot specifically adapt to the evaluation task and hence do not perform as well. In the 2nd, 3rd and 4th columns, we increased the difficulty of our evaluation task by removing the evaluation task domain from our training domains (\textbf{the evaluation task is now out-of-domain}). Surprisingly, DUET still can use feedback from the unseen task to automatically optimize the data mixture, achieving better LLM performance than other baselines. This suggests data from another training domain is still useful for the out-of-domain evaluation task (e.g., \textbf{Wikitext} data can still be helpful for mathematical questions in \textbf{gsm8k}). Hence, DUET is effective in both in-domain and out-of-domain tasks. In App.~\ref{app:mixing ratio}, we qualitatively discuss the optimal mixing ratios found by DUET.

\vspace{-2mm}
\subsection{Ablation experiments}\label{sec:ablation}

\begin{wrapfigure}{r}{4cm}
\vspace{-3mm}
\includegraphics[width=4cm]{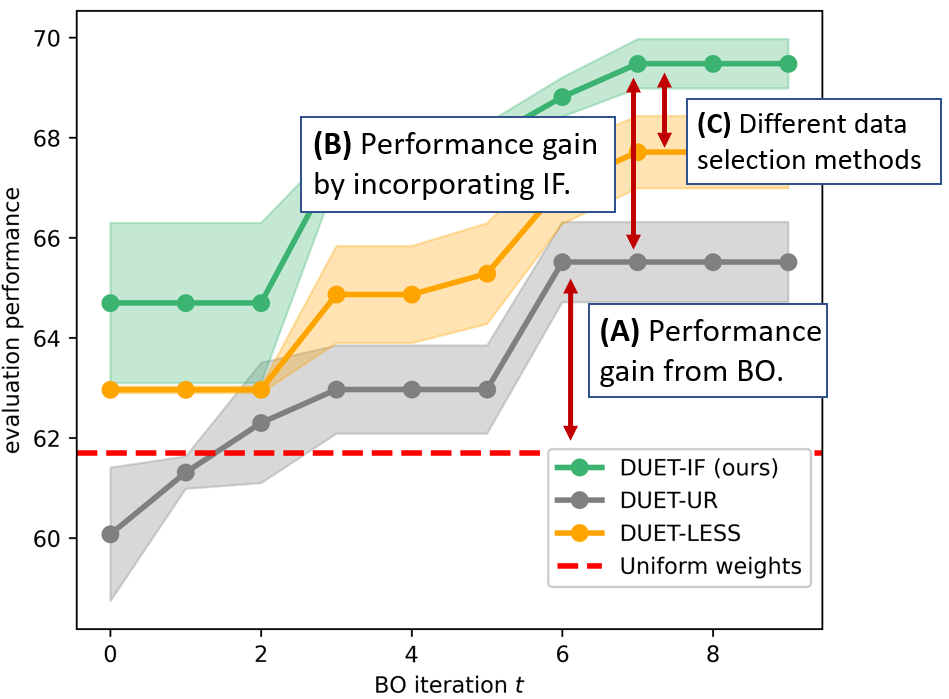}
\caption{Ablation of different components of DUET.}
\label{fig:ablation-detailed-effect}
\includegraphics[width=4cm]{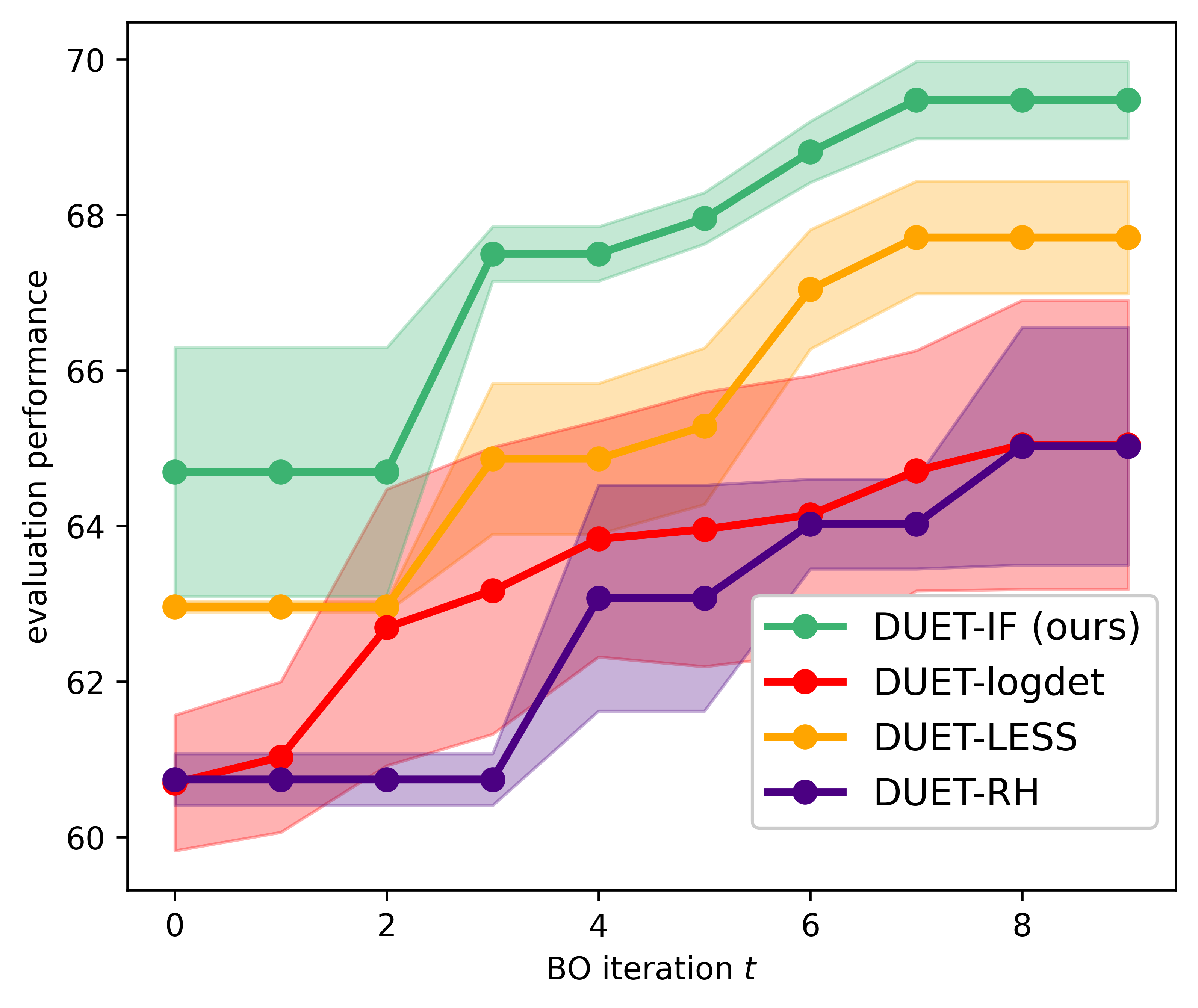}
\caption{Ablation of using different data selection methods in DUET.}
\label{fig:ablation-data-selection}
\includegraphics[width=4cm]{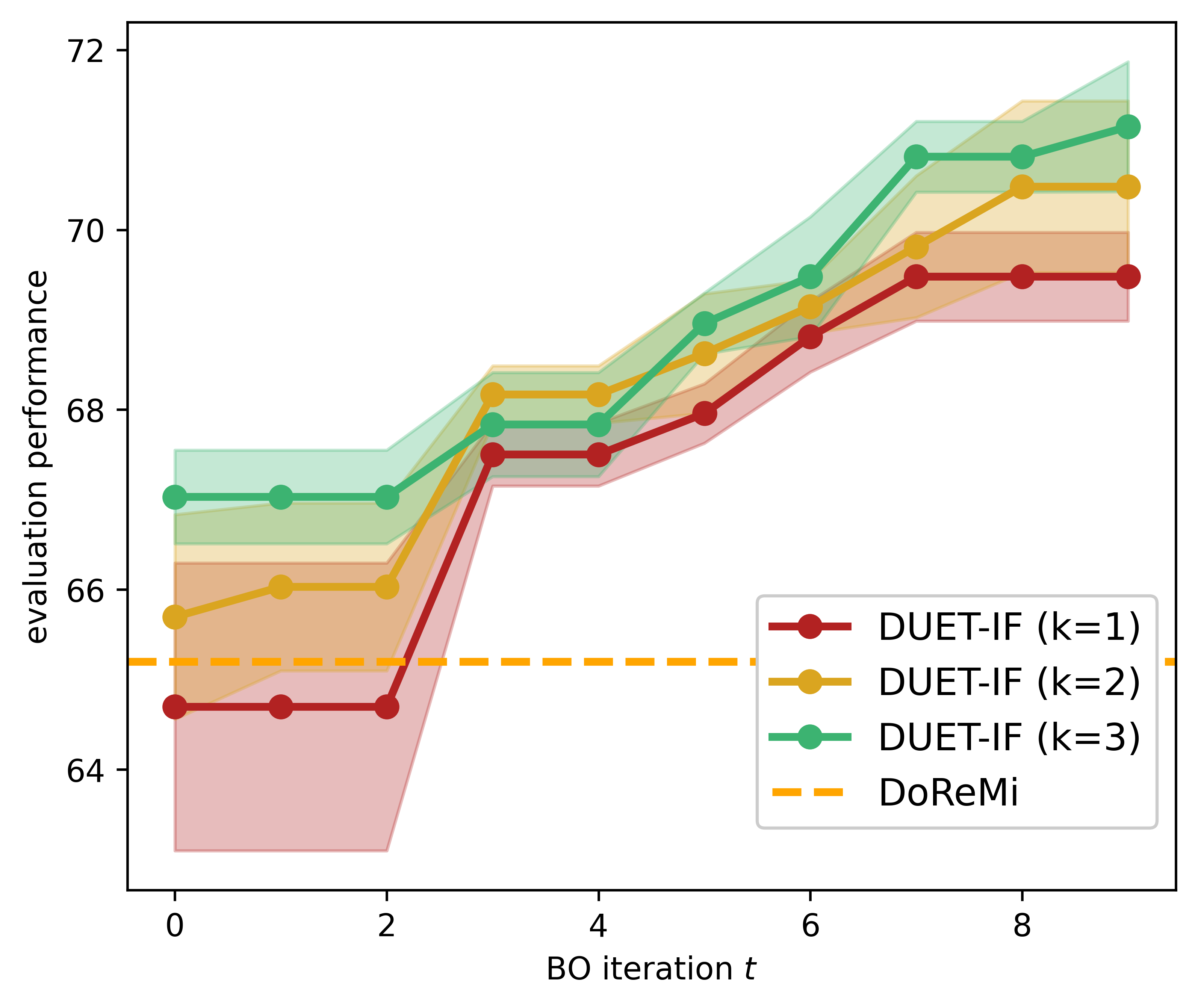}
\caption{Ablation of sampling size $k$ in DUET.}
\vspace{-25mm}
\label{fig:ablation-sample-size}
\end{wrapfigure}

While we have shown that DUET outperforms existing baselines, we also ran several ablations (using Fig.~\ref{fig:sub4}) setting) to tease apart several components in DUET.

\textbf{Ablation of different components in DUET}. Fig.~\ref{fig:ablation-detailed-effect} shows the importance of both BO and data selection techniques in DUET. If we used a uniform data mixture to train an LLM, we can only achieve a baseline performance given by the \textcolor{red}{red dotted line}. With just BO, DUET automatically reconfigures the mixing ratio and attains performance gain (\textbf{A}). Next, by incorporating data selection methods, such as using IF in DUET-IF, we attain further performance gains (\textbf{B}) indicated by the \textcolor{teal}{green plot}. Different data selection methods used in DUET also improves the LLM's performance to a different extent (\textbf{C}). Therefore, this affirms the importance of interleaving data selection and BO.


\textbf{Ablation of using different data selection methods in DUET.} How do different data selection methods fare when used in DUET's inner loop? In Fig.~\ref{fig:ablation-data-selection}, we found that IF outperforms other data selection methods (LESS, RH, log-det \citep{wang2024diversitymeasurementsubsetselection}) when used in DUET's inner loop. This suggests that IF retrieves higher-quality (or remove lower-quality) data points at each iteration better than other methods. This aligns with our discussion in App.~\ref{app:other-data-selection} where we explained how IF, being able to remove low-quality data, yields better training data mixture in our unseen task setting. All in all, we are free to use different data selection techniques (each with different computational cost, performance) in DUET's inner loop.

\textbf{Ablation of varying sampling size $k$}. Lastly, we also found that increasing sampling size $k$ in DUET's inner loop (Fig.~\ref{fig:ablation-sample-size}) helps DUET find more optimal training data mixtures. This aligns with our theoretical findings from Theorem~\ref{regret-abollo}, which shows that larger $k$ improves DUET's convergence. In practical settings, if budget permits, LLM owners can fine-tune multiple copies of LLMs (i.e., increase $k$) to improve DUET's performance. However, our results in Fig.~\ref{experiment:main-llm} showed that even with $k=1$, DUET outperforms other baselines.

\vspace{-2mm}
\section{Conclusion and Limitations}
Our paper proposes DUET, a novel algorithm that exploits multiple rounds of coarse, noisy feedback from a downstream unseen evaluation task to automatically optimize training data mixture for LLMs. Our approach offers an effective solution to address the unseen task setting, where fine-grained data information is unavailable (and conventional approaches fail). It is also quite flexible, allowing us to choose amongst different data selection methods in its inner loop. We provide theoretical guarantees of DUET and empirically show that it optimizes data mixtures in a variety of LLM evaluation tasks better than other baselines. One limitation is that our paper focused on LLM fine-tuning, but broadly speaking, we believe that DUET can be adapted to and would work equally well for pre-training. This leaves room for fruitful future research.

\subsubsection*{Acknowledgments}
This research is supported by the National Research Foundation, Singapore under its National Large Language Models Funding Initiative (AISG Award No: AISG-NMLP-$2024$-$001$). In addition, this research is supported by the National Research Foundation, Singapore under its AI Singapore Programme (AISG Award No: AISG2-PhD/$2023$-$01$-$039$J). This research is part of the programme DesCartes and is supported by the National Research Foundation, Prime Minister’s Office, Singapore under its Campus for Research Excellence and Technological Enterprise (CREATE) programme. Zhiliang Chen is supported by the Agency for Science, Technology and Research (A$^\star$STAR), Singapore.

\section{Ethics statement}
Our work strives to improve the performance of LLMs for the greater good. We do not foresee any ethical concerns related to our work. From our theoretical findings and experiments, our work can also handle noisy real-world feedbacks robustly.

\bibliography{iclr2026_conference}

@article{invariant_feature_DG,
      title={Domain Generalization via Invariant Feature Representation}, 
      author={Krikamol Muandet and David Balduzzi and Bernhard Schölkopf},
      year={2013},
      journal={arXiv:1301.2115}, 
}

@article{ge2025bimixbivariatedatamixing,
      title={BiMix: A Bivariate Data Mixing Law for Language Model Pretraining}, 
      author={Ce Ge and Zhijian Ma and Daoyuan Chen and Yaliang Li and Bolin Ding},
      year={2025},
      journal={arXiv:2405.14908},
}

@article{jethani2022fastshaprealtimeshapleyvalue,
      title={FastSHAP: Real-Time Shapley Value Estimation}, 
      author={Neil Jethani and Mukund Sudarshan and Ian Covert and Su-In Lee and Rajesh Ranganath},
      year={2022},
      journal={arXiv:2107.07436},
}

@article{tirumala2022memorizationoverfittinganalyzingtraining,
      title={Memorization Without Overfitting: Analyzing the Training Dynamics of Large Language Models}, 
      author={Kushal Tirumala and Aram H. Markosyan and Luke Zettlemoyer and Armen Aghajanyan},
      year={2022},
      journal={arXiv:2205.10770}
}

@article{efficient_data_mixing,
      title={Efficient Online Data Mixing For Language Model Pre-Training}, 
      author={Alon Albalak and Liangming Pan and Colin Raffel and William Yang Wang},
      year={2023},
      journal={arXiv:2312.02406},
}

@article{qwen2025qwen25technicalreport,
      title={Qwen2.5 Technical Report}, 
      author={Qwen and : and An Yang and Baosong Yang and Beichen Zhang and Binyuan Hui and Bo Zheng and Bowen Yu and Chengyuan Li and Dayiheng Liu and Fei Huang and Haoran Wei and Huan Lin and Jian Yang and Jianhong Tu and Jianwei Zhang and Jianxin Yang and Jiaxi Yang and Jingren Zhou and Junyang Lin and Kai Dang and Keming Lu and Keqin Bao and Kexin Yang and Le Yu and Mei Li and Mingfeng Xue and Pei Zhang and Qin Zhu and Rui Men and Runji Lin and Tianhao Li and Tianyi Tang and Tingyu Xia and Xingzhang Ren and Xuancheng Ren and Yang Fan and Yang Su and Yichang Zhang and Yu Wan and Yuqiong Liu and Zeyu Cui and Zhenru Zhang and Zihan Qiu},
      year={2025},
      journal={arXiv:2412.15115},
}

@article{ting2017optimalsubsamplinginfluencefunctions,
      title={Optimal Sub-sampling with Influence Functions}, 
      author={Daniel Ting and Eric Brochu},
      year={2017},
      journal={arXiv:1709.01716},
}

@article{albalak2024surveydataselectionlanguage,
      title={A Survey on Data Selection for Language Models}, 
      author={Alon Albalak and Yanai Elazar and Sang Michael Xie and Shayne Longpre and Nathan Lambert and Xinyi Wang and Niklas Muennighoff and Bairu Hou and Liangming Pan and Haewon Jeong and Colin Raffel and Shiyu Chang and Tatsunori Hashimoto and William Yang Wang},
      year={2024},
      journal={arXiv:2402.16827},
}

@inproceedings{
DG_2024_iclr,
title={Unknown Domain Inconsistency Minimization for Domain Generalization},
author={Seungjae Shin and HeeSun Bae and Byeonghu Na and Yoon-Yeong Kim and Il-chul Moon},
booktitle={Proc. ICLR},
year={2024},
}

@article{DG_survey,
      title={Generalizing to Unseen Domains: A Survey on Domain Generalization}, 
      author={Jindong Wang and Cuiling Lan and Chang Liu and Yidong Ouyang and Tao Qin and Wang Lu and Yiqiang Chen and Wenjun Zeng and Philip S. Yu},
      year={2022},
      journal={arXiv:2103.03097}, 
}

@article{hu2021loralowrankadaptationlarge,
      title={LoRA: Low-Rank Adaptation of Large Language Models}, 
      author={Edward J. Hu and Yelong Shen and Phillip Wallis and Zeyuan Allen-Zhu and Yuanzhi Li and Shean Wang and Lu Wang and Weizhu Chen},
      year={2021},
      journal={arXiv:2106.09685}, 
}

@inproceedings{
xie2023doremi,
title={DoReMi: Optimizing Data Mixtures Speeds Up Language Model Pretraining},
author={Sang Michael Xie and Hieu Pham and Xuanyi Dong and Nan Du and Hanxiao Liu and Yifeng Lu and Percy Liang and Quoc V Le and Tengyu Ma and Adams Wei Yu},
  booktitle={Proc. NeurIPS},
  year={2023}
}

@inproceedings{bo-gp-ucb-10,
author = {Srinivas, Niranjan and Krause, Andreas and Kakade, Sham and Seeger, Matthias},
title = {Gaussian Process Optimization in the Bandit Setting: No Regret and Experimental Design},
year = {2010},
booktitle = {Proc. ICML},
}

@book{order-statistics,
  title={A first course in order statistics},
  author={Arnold, Barry C and Balakrishnan, Narayanaswamy and Nagaraja, Haikady Navada},
  year={2008},
  publisher={SIAM}
}

@article{llama,
      title={LLaMA: Open and Efficient Foundation Language Models}, 
      author={Hugo Touvron and Thibaut Lavril and Gautier Izacard and Xavier Martinet and Marie-Anne Lachaux and Timothée Lacroix and Baptiste Rozière and Naman Goyal and Eric Hambro and Faisal Azhar and Aurelien Rodriguez and Armand Joulin and Edouard Grave and Guillaume Lample},
      year={2023},
      journal={arXiv:2302.13971},
}

@inproceedings{
saunshi2023understanding,
title={Understanding Influence Functions and Datamodels via Harmonic Analysis},
author={Nikunj Saunshi and Arushi Gupta and Mark Braverman and Sanjeev Arora},
booktitle={Proc. ICLR},
year={2023}
}

@misc{eval-harness,
  author       = {Gao, Leo and Tow, Jonathan and Abbasi, Baber and Biderman, Stella and Black, Sid and DiPofi, Anthony and Foster, Charles and Golding, Laurence and Hsu, Jeffrey and Le Noac'h, Alain and Li, Haonan and McDonell, Kyle and Muennighoff, Niklas and Ociepa, Chris and Phang, Jason and Reynolds, Laria and Schoelkopf, Hailey and Skowron, Aviya and Sutawika, Lintang and Tang, Eric and Thite, Anish and Wang, Ben and Wang, Kevin and Zou, Andy},
  title        = {A framework for few-shot language model evaluation},
  month        = 07,
  year         = 2024,
  publisher    = {Zenodo},
  version      = {v0.4.3},
  doi          = {10.5281/zenodo.12608602},
  url          = {https://zenodo.org/records/12608602}
}

@article{zhang2022fewshotadaptationpretrainednetworks,
      title={Few-Shot Adaptation of Pre-Trained Networks for Domain Shift}, 
      author={Wenyu Zhang and Li Shen and Wanyue Zhang and Chuan-Sheng Foo},
      year={2022},
      journal={arXiv:2205.15234},
}

@misc{subgaussian-bounded,
      title={On strict sub-Gaussianity, optimal proxy variance and symmetry for bounded random variables}, 
      author={Julyan Arbel and Olivier Marchal and Hien D. Nguyen},
      year={2019},
      eprint={1901.09188},
      archivePrefix={arXiv},
      primaryClass={math.PR}
}

@inproceedings{bo-kernelized-bandits,
  title={On kernelized multi-armed bandits},
  author={Chowdhury, Sayak Ray and Gopalan, Aditya},
  booktitle={Proc. ICML},
  year={2017},
}

@ARTICLE{BO-experimental,
  author={Greenhill, Stewart and Rana, Santu and Gupta, Sunil and Vellanki, Pratibha and Venkatesh, Svetha},
  journal={IEEE Access}, 
  title={Bayesian Optimization for Adaptive Experimental Design: A Review}, 
  year={2020},
  volume={8},
  number={},
  pages={13937-13948},
  doi={10.1109/ACCESS.2020.2966228}}

@article{BO-drug,
  title={Bayesian optimization for accelerated drug discovery},
  author={Pyzer-Knapp, Edward O},
  journal={IBM Journal of Research and Development},
  volume={62},
  number={6},
  pages={2--1},
  year={2018},
  publisher={IBM}
}

@misc{kingma2017adam,
      title={Adam: A Method for Stochastic Optimization}, 
      author={Diederik P. Kingma and Jimmy Ba},
      year={2017},
      eprint={1412.6980},
      archivePrefix={arXiv},
      primaryClass={cs.LG}
}

@article{tracin,
      title={Estimating Training Data Influence by Tracing Gradient Descent}, 
      author={Garima Pruthi and Frederick Liu and Mukund Sundararajan and Satyen Kale},
      year={2020},
      journal={arXiv:2002.08484},
}

@misc{yen2025datamixtureoptimizationmultifidelity,
      title={Data Mixture Optimization: A Multi-fidelity Multi-scale Bayesian Framework}, 
      author={Thomson Yen and Andrew Wei Tung Siah and Haozhe Chen and Tianyi Peng and Daniel Guetta and Hongseok Namkoong},
      year={2025},
      eprint={2503.21023},
      archivePrefix={arXiv},
      primaryClass={cs.LG},
      url={https://arxiv.org/abs/2503.21023}, 
}

@article{order_statistics_estimate_function,
 abstract = {Suppose A is a bounded set in Rd and f a real function defined on A. Suppose m = min{f(x; | x ε A} exists. Using a random sample X1, X2,..., Xn from a uniform distribution over A we construct a confidence interval for m using asymptotic theory. Our results contain some statistical results valid in general extreme value theory (estimation of the main parameter of extreme value distributions).},
 author = {Laurens de Haan},
 journal = {Journal of the American Statistical Association},
 number = {374},
 pages = {467--469},
 publisher = {[American Statistical Association, Taylor & Francis, Ltd.]},
 title = {Estimation of the Minimum of a Function Using Order Statistics},
 volume = {76},
 year = {1981}
}

@article{german_tank,
  title={Generalizing the German tank problem},
  author={Lee, Anthony and Miller, Steven J},
  journal={arXiv:2210.15339},
  year={2022}
}

@inproceedings{tay2023bayesian_cost,
  title={Bayesian Optimization with Cost-varying Variable Subsets},
  author={Tay, Sebastian Shenghong and Foo, Chuan-Sheng and Urano, Daisuke and Leong, Richalynn and Low, Bryan Kian Hsiang},
  booktitle={Proc. NeurIPS},
  year={2023}
}

@book{gp-for-ml,
  title={Gaussian processes for machine learning},
  author={Williams, Christopher KI and Rasmussen, Carl Edward},
  volume={2},
  year={2006},
  publisher={MIT press Cambridge, MA}
}

@book{Book_garnett2023bayesian,
  title={Bayesian Optimization},
  author={Roman Garnett},
  year={2023},
  publisher={Cambridge Univ. Press}
}

@article{frazier2018tutorial,
  title={A tutorial on {Bayesian} optimization},
  author={Frazier, Peter I},
  journal={arXiv:1807.02811},
  year={2018}
}

@inproceedings{bo-practical,
  title={Practical bayesian optimization of machine learning algorithms},
  author={Snoek, Jasper and Larochelle, Hugo and Adams, Ryan P},
  booktitle={Proc. NeurIPS},
  year={2012}
}

@inproceedings{Domain-adapt-need-unlabeled,
  title = 	 {Unsupervised Domain Adaptation by Backpropagation},
  author = 	 {Ganin, Yaroslav and Lempitsky, Victor},
  booktitle={Proc. ICML},
  year={2015}
}

@inproceedings{koh2017influence,
  title={Understanding black-box predictions via influence functions},
  author={Koh, Pang Wei and Liang, Percy},
  booktitle={Proc. ICML},
  year={2017}
}

@inproceedings{Domain-adapt-transfer-learning,
  title = 	 {Deep Transfer Learning with Joint Adaptation Networks},
  author =       {Mingsheng Long and Han Zhu and Jianmin Wang and Michael I. Jordan},
  author = 	 {Ganin, Yaroslav and Lempitsky, Victor},
  booktitle={Proc. ICML},
  year={2017}
}

@inproceedings{chen2024towardsautoai,
  title = 	 {Towards {A}uto{AI}: Optimizing a Machine Learning System with Black-box and Differentiable Components},
  author =       {Chen, Zhiliang and Foo, Chuan-Sheng and Low, Bryan Kian Hsiang},
  booktitle = 	 {Proc. ICML},
  year = 	 {2024}}

@inproceedings{BO-unknown-param-regret-explanation,
  author  = {Felix Berkenkamp and Angela P. Schoellig and Andreas Krause},
  title   = {No-Regret Bayesian Optimization with Unknown Hyperparameters},
  booktitle = {Proc. ICML},
  year    = {2019},
}

@article{wikitext-data,
      title={Pointer Sentinel Mixture Models}, 
      author={Stephen Merity and Caiming Xiong and James Bradbury and Richard Socher},
      year={2016},
      journal={arXiv:1609.07843},
}

@article{gsm8k,
      title={Training Verifiers to Solve Math Word Problems}, 
      author={Karl Cobbe and Vineet Kosaraju and Mohammad Bavarian and Mark Chen and Heewoo Jun and Lukasz Kaiser and Matthias Plappert and Jerry Tworek and Jacob Hilton and Reiichiro Nakano and Christopher Hesse and John Schulman},
      year={2021},
      journal={arXiv:2110.14168},
}

@article{pubmedqa,
      title={PubMedQA: A Dataset for Biomedical Research Question Answering}, 
      author={Qiao Jin and Bhuwan Dhingra and Zhengping Liu and William W. Cohen and Xinghua Lu},
      year={2019},
      journal={arXiv:1909.06146},
}

@inproceedings{headqa,
    title = "{HEAD}-{QA}: A Healthcare Dataset for Complex Reasoning",
    author = "Vilares, David  and
      G{\'o}mez-Rodr{\'\i}guez, Carlos",
    booktitle= "Proc. ACL",
    year = {2019},
}

@inproceedings{sciq,
    title = "Crowdsourcing Multiple Choice Science Questions",
    author = "Welbl, Johannes  and
      Liu, Nelson F.  and
      Gardner, Matt",
    booktitle = "Workshop on Noisy User-generated Text",
    year = {2017},
}

@article{triviaQA,
      title={TriviaQA: A Large Scale Distantly Supervised Challenge Dataset for Reading Comprehension}, 
      author={Mandar Joshi and Eunsol Choi and Daniel S. Weld and Luke Zettlemoyer},
      year={2017},
      journal={arXiv:1705.03551},
}

@inproceedings{neurips_reimportance_sampling,
 author = {Xie, Sang Michael and Santurkar, Shibani and Ma, Tengyu and Liang, Percy S},
 booktitle = {Proc. NeurIPS},
 publisher = {Curran Associates, Inc.},
 title = {Data Selection for Language Models via Importance Resampling},
 year = {2023}
}

@article{guo2024losslessdatasetdistillationdifficultyaligned,
      title={Towards Lossless Dataset Distillation via Difficulty-Aligned Trajectory Matching}, 
      author={Ziyao Guo and Kai Wang and George Cazenavette and Hui Li and Kaipeng Zhang and Yang You},
      year={2024},
      journal={arXiv:2310.05773},
}

@article{wang2024diversitymeasurementsubsetselection,
      title={Diversity Measurement and Subset Selection for Instruction Tuning Datasets}, 
      author={Peiqi Wang and Yikang Shen and Zhen Guo and Matthew Stallone and Yoon Kim and Polina Golland and Rameswar Panda},
      year={2024},
      journal={arXiv:2402.02318},
}

@article{zhang2024speculativecoresetselectiontaskspecific,
      title={Speculative Coreset Selection for Task-Specific Fine-tuning}, 
      author={Xiaoyu Zhang and Juan Zhai and Shiqing Ma and Chao Shen and Tianlin Li and Weipeng Jiang and Yang Liu},
      year={2024},
      journal={arXiv:2410.01296},
}

@article{mirzasoleiman2020coresetsdataefficienttrainingmachine,
      title={Coresets for Data-efficient Training of Machine Learning Models}, 
      author={Baharan Mirzasoleiman and Jeff Bilmes and Jure Leskovec},
      year={2020},
      journal={arXiv:1906.01827}
}

@article{xia2024lessselectinginfluentialdata,
      title={LESS: Selecting Influential Data for Targeted Instruction Tuning}, 
      author={Mengzhou Xia and Sadhika Malladi and Suchin Gururangan and Sanjeev Arora and Danqi Chen},
      year={2024},
      journal={arXiv:2402.04333}
}

@inproceedings{boconstraints,
  title = 	 {Bayesian Optimization with Inequality Constraints},
  author = 	 {Gardner, Jacob and Kusner, Matt and Zhixiang, Xu and Weinberger, Kilian and Cunningham, John},
  booktitle = 	 {Proc. ICML},
  year = 	 {2014},
}

@article{truthfulQA,
      title={TruthfulQA: Measuring How Models Mimic Human Falsehoods}, 
      author={Stephanie Lin and Jacob Hilton and Owain Evans},
      year={2022},
      journal={arXiv:2109.07958},
}

@article{hellaswag,
      title={HellaSwag: Can a Machine Really Finish Your Sentence?}, 
      author={Rowan Zellers and Ari Holtzman and Yonatan Bisk and Ali Farhadi and Yejin Choi},
      year={2019},
      journal={arXiv:1905.07830}
}

@article{talmor2019commonsenseqaquestionansweringchallenge,
      title={CommonsenseQA: A Question Answering Challenge Targeting Commonsense Knowledge}, 
      author={Alon Talmor and Jonathan Herzig and Nicholas Lourie and Jonathan Berant},
      year={2019},
      journal={arXiv:1811.00937}
}

@article{fan2024doge_domain_reweighting,
      title={DoGE: Domain Reweighting with Generalization Estimation}, 
      author={Simin Fan and Matteo Pagliardini and Martin Jaggi},
      year={2024},
      journal={arXiv:2310.15393},
}

@inproceedings{zhang-etal-2021-matching-distribution,
    title = "Matching Distributions between Model and Data: Cross-domain Knowledge Distillation for Unsupervised Domain Adaptation",
    author = "Zhang, Bo  and
      Zhang, Xiaoming  and
      Liu, Yun  and
      Cheng, Lei  and
      Li, Zhoujun",
    year=2021,
    booktitle={Proc. ACL},
}

@article{li2024humancenteredprivacyresearchage,
      title={Human-Centered Privacy Research in the Age of Large Language Models}, 
      author={Tianshi Li and Sauvik Das and Hao-Ping Lee and Dakuo Wang and Bingsheng Yao and Zhiping Zhang},
      year={2024},
      journal={arXiv:2402.01994} 
}

@inproceedings{
wangJTShapley2024,
title={Helpful or Harmful Data? Fine-tuning-free Shapley Attribution for Explaining Language Model Predictions},
author={Jingtan Wang and Xiaoqiang Lin and Rui Qiao and Chuan-Sheng Foo and Bryan Kian Hsiang Low},
booktitle={Proc. ICML},
year={2024}
}

@inproceedings{ijcai2022data-strategy-valuation,
  title     = {Data Valuation in Machine Learning: "Ingredients", Strategies, and Open Challenges},
  author    = {Sim, Rachael Hwee Ling and Xu, Xinyi and Low, Bryan Kian Hsiang},
  booktitle = {Proc. IJCAI},
  year      = {2022},
}

@article{data-mixing-framework-optimize,
      title={Aioli: A Unified Optimization Framework for Language Model Data Mixing}, 
      author={Mayee F. Chen and Michael Y. Hu and Nicholas Lourie and Kyunghyun Cho and Christopher Ré},
      year={2024},
      journal={arXiv:2411.05735},
}

@article{agarwal2017secondorderstochasticoptimizationmachine,
      title={Second-Order Stochastic Optimization for Machine Learning in Linear Time}, 
      author={Naman Agarwal and Brian Bullins and Elad Hazan},
      year={2017},
      journal={arXiv:1602.03943},
}

@article{ODM,
      title={Efficient Online Data Mixing For Language Model Pre-Training}, 
      author={Alon Albalak and Liangming Pan and Colin Raffel and William Yang Wang},
      year={2023},
      journal={arXiv:2312.02406},
}

@article{xie2025chameleonflexibledatamixingframework,
      title={Chameleon: A Flexible Data-mixing Framework for Language Model Pretraining and Finetuning}, 
      author={Wanyun Xie and Francesco Tonin and Volkan Cevher},
      year={2025},
      journal={arXiv:2505.24844},
}

@inproceedings{
hoffmann2022an,
title={An empirical analysis of compute-optimal large language model training},
author={Jordan Hoffmann and Sebastian Borgeaud and Arthur Mensch and Elena Buchatskaya and Trevor Cai and Eliza Rutherford and Diego de las Casas and Lisa Anne Hendricks and Johannes Welbl and Aidan Clark and Tom Hennigan and Eric Noland and Katherine Millican and George van den Driessche and Bogdan Damoc and Aurelia Guy and Simon Osindero and Karen Simonyan and Erich Elsen and Oriol Vinyals and Jack William Rae and Laurent Sifre},
booktitle={Proc. NeurIPS},
year={2022},
}
\bibliographystyle{iclr2026_conference}

\appendix

\section{Supplementary Material}

\subsection{Real-world examples of our problem setting} \label{app:real-world-examples}
In our problem setting, (a) there is no direct access to the data  (e.g., its domain, distribution, or labels) involved in the unseen evaluation task but (b) multiple rounds of coarse feedback (details covered in Sec.~\ref{section:problem setting}) can be gathered from the task using a trained LLM. Here, we provide several real-world examples in which such a setting occurs.

\textbf{End-to-end encrypted conversations between LLM and users.}
This setting is specific to the conversational setting between a trained LLM and human users. LLM owners are interested in fine-tuning an LLM to converse well
with some human-user demographics but due to real-world
privacy concerns \citep{li2024humancenteredprivacyresearchage}, conversations between a
deployed LLM and users are end-to-end encrypted during
test-time (\url{openai.com/enterprise-privacy}). So,
an LLM owner does not have any knowledge of the conversation domain or the (unlabeled or labeled) data seen during
test-time. Instead, they only receive a feedback on
how well the LLM has performed in the conversation (e.g.,
ratings from the human user, how long each user stays on the applicaton). The LLM owner can collect multiple
rounds of feedback over a period of time. Hence, they can exploit this feedback to iteratively refine the training data mixture. Many chat-driven applications (e.g., whatsapp, telegram) nowadays use end-to-end encrypted chats, so our problem setting is relevant here.

\textbf{Model marketplace.} In addition, there are other scenarios in which a model owner needs to improve an ML model without having access to the data involved in the unseen evaluation task. For instance, an ML model owner might rent or sell an ML model in a model marketplace (e.g., \url{https://aws.amazon.com/marketplace/solutions/machine-learning}). However, the consumer might give feedback (e.g., how often the model makes mistakes) to the ML model owner in hope that the ML model owner can improve the model's performance on its own evaluation task. Furthermore, the data used by the consumer in its evaluation task are considered sensitive data, so the ML model owner does not know any data involved in the unseen evaluation task. Hence, the ML owner can only rely on feedback from the consumer to improve the training data mixture.

\subsection{More related works on data mixing and selection}\label{app:related-work}
Recently, a large class of data selection methods utilizing coresets \citep{zhang2024speculativecoresetselectiontaskspecific}, diversity measures \citep{wang2024diversitymeasurementsubsetselection}, gradient information \citep{xia2024lessselectinginfluentialdata} or influence function \citep{koh2017influence} has been introduced to retrieve a smaller subset of data from an existing dataset. These data selection methods have become popular because they reduce training dataset size (which is an attractive feature when traning LLMs) and prior work \citep{xia2024lessselectinginfluentialdata} showed that training a model with strategically selected data points allows it to perform better. In addition, data mixing works \citep{xie2023doremi, ge2025bimixbivariatedatamixing,efficient_data_mixing} have studied how to reweigh different data domains to produce optimal data mixtures, using distributionally robust optimization or entropy-based signals. However, these works, when used in isolation, \textbf{do not work well in our setting because they do not exploit feedback from an unseen evaluation task}. For example, even if we can retrieve a high-quality data subset from the training data domain, this domain might not even be relevant to the unseen evaluation task. Hence, data mixing and selection methods on their own \textbf{are not applicable to our setting} because they have no way to discern how relevant the training data domain is to the unseen task. Instead, our paper's algorithm interleaves BO and data selection method together to exploit feedback from the unseen evaluation task to optimize our training data mixture. Indeed, our experimental results in Sec.~\ref{subsec:main-results} show that DUET performs better than other data mixing and selection works.

\subsection{Extensions and discussion of DUET in other special settings}\label{app:extensions}

Here, we discuss some extensions of DUET to other settings that fall beyond the scope of our paper. However, we find them insightful and useful when implementing DUET in practice.

\textbf{Should we re-fine-tune/re-train the LLM from scratch each time in DUET or continue training the model from the previous iteration?} Our problem formulation in Sec.~\ref{section:problem setting} and theoretical findings (Sec.~\ref{sec:theory regret}) assumes DUET re-train the LLM from the same initial checkpoint at every iteration. This is necessary to ensure our surrogate function landscape in GP remains consistent throughout the BO process, allowing DUET to converge. From a practical perspective, we speculate that DUET will be less effective if we continue training an LLM from the previous iteration. This is because training data mixtures from earlier iterations might not be useful for the unseen evaluation task and the model might memorize \citep{tirumala2022memorizationoverfittinganalyzingtraining} irrelevant information that are difficult to be overwritten in later BO iterations.

\textbf{DUET for extremely large datasets used in pre-training.} We can amortize the computational cost of IF computation by pre-computing and storing them beforehand (App.~\ref{app:IF-details}) in our paper's fine-tuning setting. However, the size of datasets used in pre-training could be extremely large, which might still lead to large computational cost when computing IF scores of every data point in such datasets. To make computation faster, we can adopt methods in \citep{koh2017influence} to approximate hessian inversions when computing IF scores. We can also sample a smaller subset of data to compute IF scores, before training a neural network \citep{jethani2022fastshaprealtimeshapleyvalue} to predict the IF scores of other data points.

\textbf{Noisy feedback setting.} In some practical settings, the feedback from the unseen task is noisy. For instance, user ratings have a variance even within the same user demographics. How does DUET fare when the feedback from the unseen evaluation task is noisy? Fortunately, DUET is equally effective even when feedback is noisy. Feedback noise becomes part of the observation noise (Sec.~\ref{sec: solve outer}) under the BO framework in DUET. In our experiments (Sec.~\ref{subsec:main-results}), the evaluation task feedback is inherently noisy since LLM responses are probabilistic in nature, but DUET still performs well empirically.

\subsection{Influence function and its calculations} \label{app:IF-details}
Influence function (IF) \citep{koh2017influence} has been developed to study the influence of a single data point on an ML model's predictions. In this section we provide a summary of IF and its derivation. The influence of
a data point $z$ on the loss of a test data point (or a set of test data points) $z_{\textrm{test}}$ for an ML model parameterized by $\theta$ is given by the
closed-form expression:

\begin{equation}
    \textrm{IF}_{z,z_{\text{test}}} = -\nabla_{\theta}L(z_{\text{test}}, \theta)^T H_{\theta}^{-1} \nabla_{\theta}L(z,\theta),
\end{equation}
where $L$ is the loss function of the ML model and $H$ is the hessian of the ML model w.r.t.~parameters $\theta$. In short, a data point is deemed more "influential" in reducing the model loss on a test data point if it has a higher IF score. As such, IF scores have also become a popular method in selecting data points which are more helpful in training an ML model.

In our work, we segregated a validation dataset from each data domain's dataset, in which we use to derive the IF score of every training data point in that domain w.r.t. the validation dataset (after fine-tuning an LLM over the training data till convergence). Then, we normalize these IF scores (for data points in each data domain), allowing us to perform weighted random sampling at every BO iteration of our algorithm, obtaining a data subset of size $n$ for a given data domain. This IF-weighted sampling is repeated for every data domain until we sample a dataset fulfilling the proposed mixing ratio at every BO iteration. Hence, the resulting data mixture contains more proportion of high-quality data points (based on IF scores). A summary of the IF-weighted sampling process for one data domain is given in Alg.~\ref{alg:IF-weighted sampling}. In our algorithm, we repeat this procedure for every data domain.

\begin{algorithm}[h]
   \caption{IF-weighted sampling for \textbf{one data domain containing} dataset $D$}
   \label{alg:IF-weighted sampling}
\begin{algorithmic}[1]
   \STATE {\bfseries Input:} number of data points $n$ required for the given data domain (taken from the mixing ratio proposed at current iteration). Dataset $D = \{x_1,x_2,...,x_{|D|}\}$, Influence value of each data point in data domain dataset $D$: $\mathcal{I} \triangleq [I_1,I_2,\dots,I_{|D|}]$, small constant $\epsilon$ to avoid degenerate-case normalization.
   
   \STATE Normalize the IF scores into probabilities: $\mathcal{I}_{\text{normalized}} \triangleq [\frac{I_1 + \min{(\mathcal{I})} + \epsilon}{\sum \mathcal{I}}, \frac{I_2 + \min{(\mathcal{I})} + \epsilon}{\sum \mathcal{I}}, \dots, \frac{I_{|D|} + \min{(\mathcal{I})} + \epsilon}{\sum \mathcal{I}}]$

   \STATE Perform weighed sampling from dataset $D$ according to weights given by $\mathcal{I}_{\text{normalized}}$ $n$ times.

\end{algorithmic}
\end{algorithm}

\textbf{IF scores can be pre-computed and stored}. In addition, we just need to pre-compute the IF scores of every data point once before reusing them repeatedly at every BO iteration to perform IF-weighted sampling. This greatly improves our algorithm's efficiency and runtime, as compared to other methods (see next section) which requires us to perform expensive computation every iteration. We provide the computation runtime of calculating IF scores in App.~\ref{experiments_detail_extra}.

\subsection{Discussion of using other data selection methods to solve inner optimization problem in DUET} \label{app:other-data-selection}
Data selection methods \citep{albalak2024surveydataselectionlanguage, guo2024losslessdatasetdistillationdifficultyaligned, wang2024diversitymeasurementsubsetselection} have been used to retrieve a representative subset of data from larger datasets. We note that in our work different data selection methods can be interchanged to produce different estimators for the inner problem solution in line 4 and 5 of Algorithm \ref{alg:ABOLLO}. For example, instead of using the IF-driven estimator which performs weighted sampling based on each data point's IF scores, one could use LESS \citep{xia2024lessselectinginfluentialdata} to retrieve data subsets for the inner optimization problem. However, our experiments (Fig.~\ref{fig:ablation-data-selection}) have shown that other data selection methods perform slightly worse than IF when used in DUET's inner loop. We speculate that this occurs for a few reasons.

Specifically, IF \citep{koh2017influence} is effective at identifying non-useful data (e.g., nonsensical text, text with lots of spelling mistakes, blur images) and so IF will down-weigh low-quality datapoints when we sample from that data domain. Doing so is effective in DUET's setting because these nonsensical training data are unlikely to be useful for any tasks, so their removal can boost the performance of the selected data mixture on the unseen task. While LESS contains a similar formulation as IF, it merely consider the gradient dot-products and ignores the hessian of the loss function \citep{koh2017influence} during computation. Hence, it does not contain as much information as IF.

In addition, diversity-driven methods \citep{wang2024diversitymeasurementsubsetselection, zhang2024speculativecoresetselectiontaskspecific} tend to select training data subsets that are "most representative" of the training data domain. However, from observation, they tend to keep nonsensical data points in the final data mixture, which is not as effective as IF, which down-samples these points. Also, representative data of a training domain might not be useful for an unseen task if the task is not related. Lastly, when calculating the data log-determinant, we need to project data into embedding space with an embedding model, and hence the effectiveness of the embedding model also affects the selection process. Effectiveness aside, diversity-driven methods are also dependent on the mixing ratio chosen at each iteration. Therefore, we need to recompute the log-determinant \citep{wang2024diversitymeasurementsubsetselection} or coreset \citep{zhang2024speculativecoresetselectiontaskspecific} at every iteration. On the contrary, IF scores can be pre-computed and stored prior to running DUET.

Therefore, different selection methods have different properties (above), but conceptually and empirically, we found IF to work better in our unseen task setting. Our ablation studies (Fig.~\ref{fig:ablation-data-selection}) affirms our claim: using IF in DUET attains the highest performance as compared to selection methods such as LESS and log-determinant. We hope DUET can serve as a testbed for more advanced data-selection methods in the future.

\subsection{More discussion on the IF-driven estimator}
\label{app:further-benefits-if-sampling}

Here, we provide more justification behind our choice of the IF-driven estimator.

\textbf{Why not take the data with the top-N IF scores instead of sampling?} One obvious alternative is to pick the data subset with the top IF scores (satisfying the given data ratio) or remove the datasubset with the lowest IF scores. We did not find this selection method effective in practice. Because IF-values are pre-computed and independent (App.~\ref{app:IF-details}), we end up selecting the same few datapoints with top IF scores at every BO iteration in DUET. With the IF-weighted sampling, we select more diverse data points, yielding better data mixtures. This becomes more apparent when many data points have high IF scores and sampling provides us access to these data points at every iteration. Empirically, we also found that the deterministic approach performs worse (See DUET-RH in Sec.~\ref{subsec:main-results}) than IF-weighted sampling.

Performing weighted-sampling with IF-scores not only retains the benefits of using IF-scores (we upweigh higher quality data while still having access to data points with moderate IF-scores), but also allows us to exploit additional computational resources to reduce the estimation error by increasing the sampling size. For instance, Theorem \ref{Theorem:inner error distribution} shows that higher sampling size 
reduces the inner problem estimation variance and bias. Intuitively, an LLM owner could exploit more compute to increase sampling size $k$ and sample more data mixtures and reduce the estimation error at every BO iteration. These estimation error variances are also handled gracefully in the BO framework.

In our experiments, we demonstrate that even with limited resource to sample and train an LLM once ($k=1$), DUET still outperforms other baselines in our setting. In fact, Our ablation (Fig.~\ref{fig:ablation-sample-size}) shows that increasing $k$
results in a performance boost, showcasing the benefits of sampling in real-world settings (where computational resources are available to make multiple queries each BO step).

\subsection{Empirical distributions of estimators from different data selection methods}\label{app:empirical-sampling}

\begin{figure} [h]
    \centering
  \subfloat[\textbf{Empirical sampling distribution of $\Leval(\theta_{\Xs})$}\label{fig-app:sampling_dist_plot}]{%
       \includegraphics[width=0.37\linewidth]{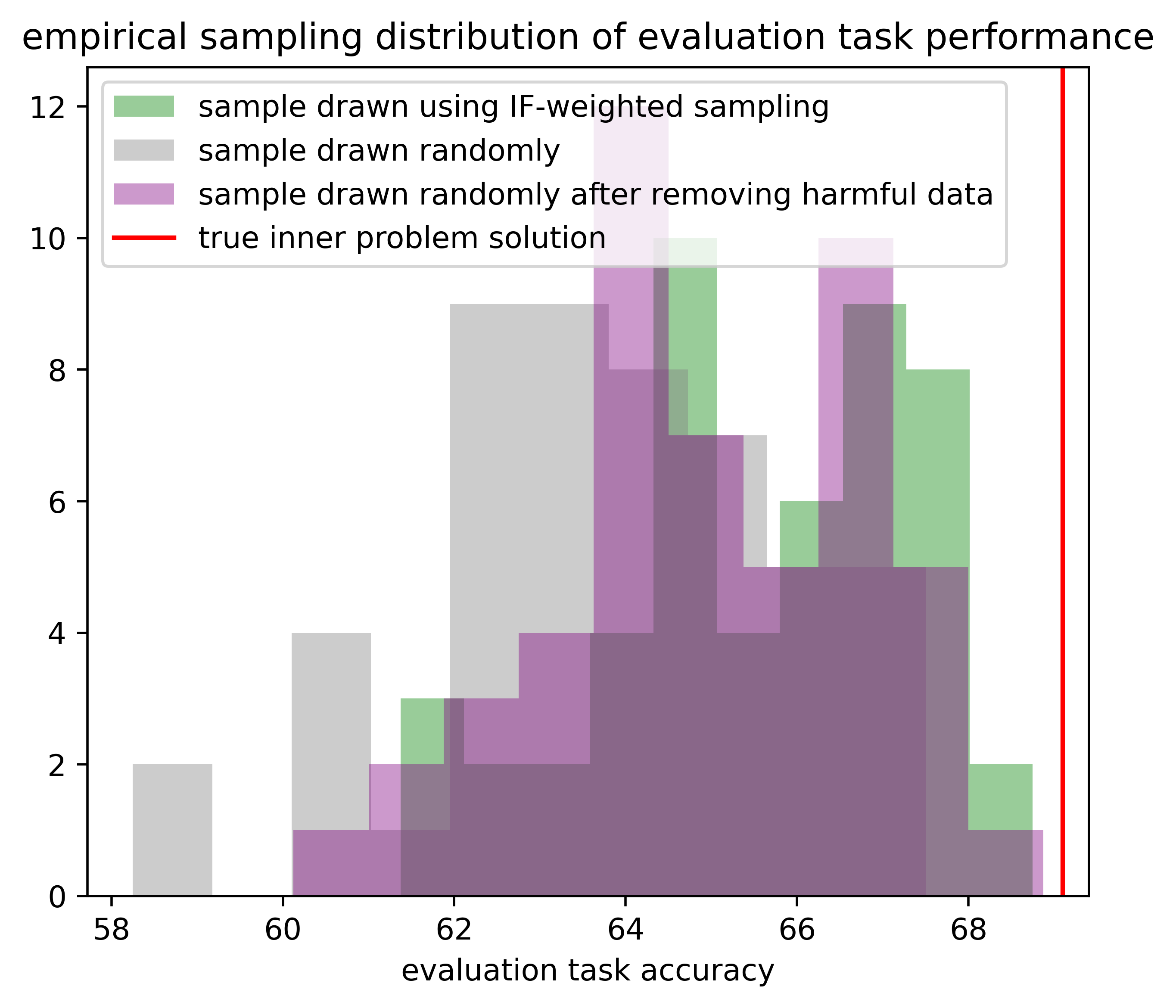}}
  \subfloat[\textbf{Empirical estimator distribution}\label{fig-app:estimator_dist_plot}]{%
        \includegraphics[width=0.36\linewidth]{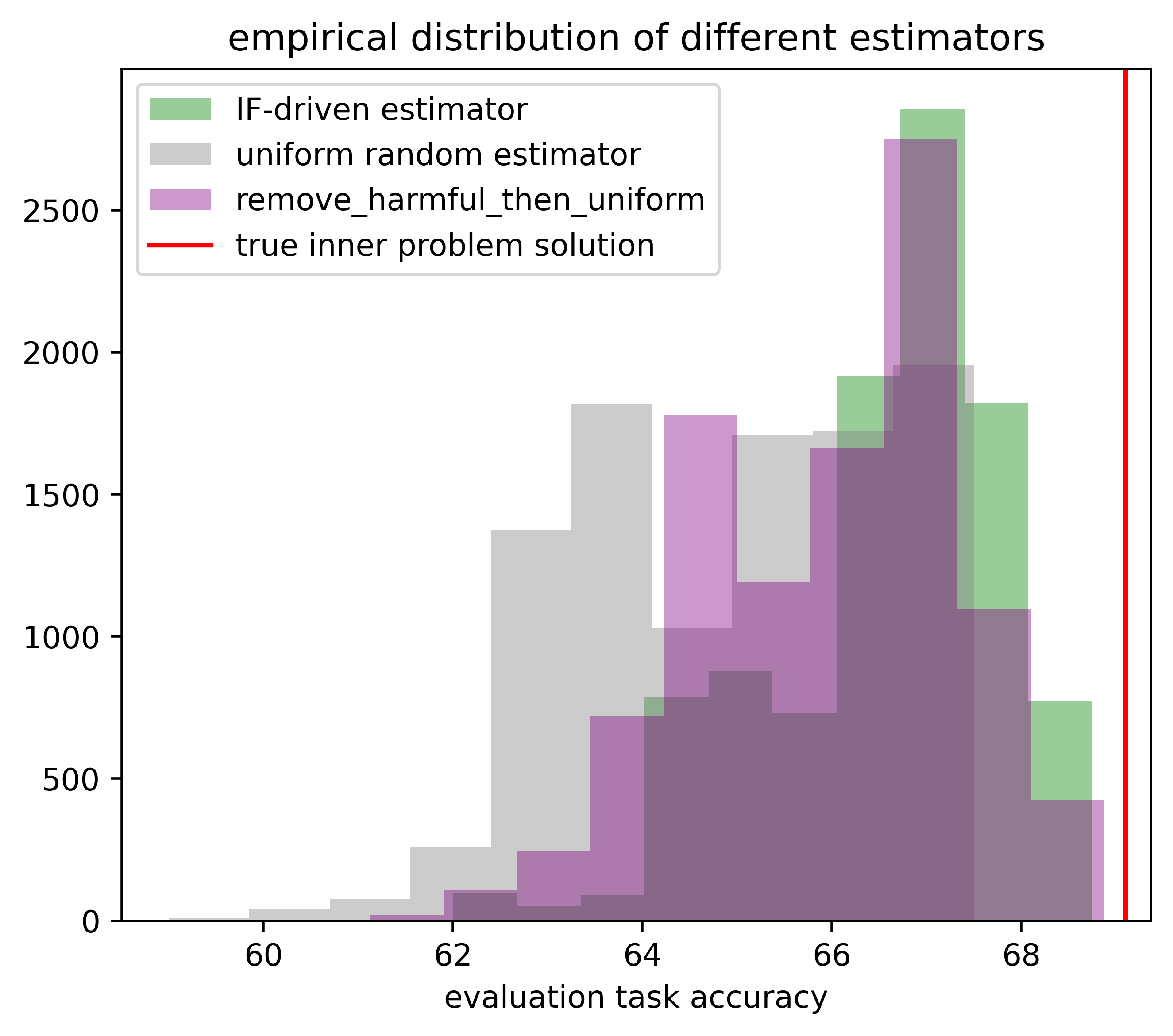}}
    
  \caption{(a): Empirical distribution of evaluation task accuracy $\Leval(\theta_{\Xs})$ from each data mixture sample $\Xs$ (b): empirical distribution of the estimators introduced in Sec.~\ref{sec: solve inner}. The \textcolor{teal}{green histogram} is our method of performing IF-weighted sampling to obtain data mixtures. The \textcolor{gray}{gray histogram} is simply randomly sampling data mixtures with no data selection methods. The \textcolor{purple}{purple histogram} is the method of removing 20\% of the data points with the lowest IF scores.} 
  \label{fig:empirical-distribution}
\end{figure}

We have introduced the IF-driven estimator in Sec.~\ref{sec: solve inner} to estimate the solution of the inner problem. The IF-driven estimator performs IF-weighted sampling on data points from each data domain to produce data mixture samples (Eq.~\ref{eq:IF-sample-estimator}) constrained to a data mixing ratio $r$. Each data mixture sample is then used to train/fine-tune an LLM before obtaining a feedback on how well it has performed on the unseen evaluation task. Hence, this feedback based on each data mixture sample is also a sampling distribution that we can empirically observe. Fig.~\ref{fig-app:sampling_dist_plot} shows the sampling distribution of the evaluation task performance obtained from each data mixture. Empirically, we see that the negative of this distribution is similar to a truncate exponential distribution mentioned in Theorem~\ref{Theorem:inner error distribution} (We consider the negative of this random variable because our paper considers the evaluation task loss, but empirically we maximize the evaluation task accuracy). In addition, the truncated exponential distribution is appropriate because it implies the unseen evaluation task loss is upper bounded at $y_r^* + c$ for a non-negative constant $c$; this is a reasonable assumption because many real-world feedbacks are bounded (e.g. user ratings).

We also plot the empirical distribution of the IF-driven estimator introduced in Eq.~\ref{eq:IF-sample-estimator} in Fig.~\ref{fig-app:estimator_dist_plot}. The distribution coincides with the estimator's distribution (formally, $y^*_r + \epsilon$) introduced in Theorem~\ref{Theorem:inner error distribution}. From the estimator's distribution, we see that the IF-driven estimator (\textcolor{teal}{green histogram}) has the lower bias and variance as compared to other estimators.

\section{Proofs}

\vspace{0mm}\subsection{Proof of Theorem~\ref{thm:bilevel}}\label{app:reparameterization}
\reparameterization*
\begin{proof}
Theorem.~\ref{thm:bilevel} can be proven in two steps. First, we restate the theoretical results from \citep{chen2024towardsautoai} in Lemma \ref{lemma:reparameterize}. This Lemma reparameterizes any optimization problem ($\min_x f(x)$) (while retaining the solution set \textit{exactly}) under some regular assumptions:
\begin{restatable}{lemma}{bilevel}
\label{lemma:reparameterize}
    Let $x \in \mathbb{R}^d$ and $y \in \mathbb{R}^n$. Also, consider well-defined functions $f$ over $\mathbb{R}^d \xrightarrow{} \mathbb{R}$ and $g$ over $\mathbb{R}^d \xrightarrow{} \mathbb{R}^n$. Then $x^*$ is a solution of $\argmin_x f(x)$ if and only if $y^* = g(x^*)$ is a solution of the second optimization problem over domain $\{y \mid \exists x, g(x)=y\}:$
    \begin{equation*}
    \begin{aligned}
    \min_{y} \min_{x} \quad & f(x)  \\
    \textrm{s.t.} \quad & g(x) = y    \\
    \end{aligned}
    \end{equation*}
\end{restatable}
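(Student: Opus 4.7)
The plan is to prove the lemma by partitioning the domain of $f$ according to the level sets of $g$ and collapsing the inner minimization into a value function. Concretely, for each $y$ in the image $\mathcal{Y}\triangleq\{y : \exists x,\ g(x)=y\}$, I would introduce the level-set value function $h(y) \triangleq \min_{x : g(x)=y} f(x)$, which is well-defined on $\mathcal{Y}$ by assumption. The central identity to establish is
$$\min_{y\in\mathcal{Y}} h(y) \;=\; \min_{x\in\mathbb{R}^d} f(x),$$
which follows because every $x\in\mathbb{R}^d$ belongs to exactly one level set, namely the one indexed by $y = g(x)$, so the outer minimization over $\mathcal{Y}$ simply re-groups the single minimization over $x$ into a two-stage minimization. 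This identity justifies the "value of the reparameterized problem equals the value of the original problem" side of the equivalence.

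Next, I would argue the two implications of the "iff" separately. For the forward direction, assume $x^*\in\argmin_x f(x)$ and set $y^*=g(x^*)$. Then $y^*\in\mathcal{Y}$, and by definition $h(y^*)\leq f(x^*)$; conversely, for any $y\in\mathcal{Y}$, any feasible $x$ with $g(x)=y$ satisfies $f(x)\geq f(x^*)$, so $h(y)\geq f(x^*)\geq h(y^*)$. Hence $y^*$ minimizes $h$, and moreover the inner minimizer at $y^*$ can be taken to be $x^*$, so $(y^*,x^*)$ solves the bilevel problem. For the reverse direction, suppose $y^*$ is a minimizer of $h$ over $\mathcal{Y}$, and let $x^*$ be any inner minimizer, i.e., $g(x^*)=y^*$ and $f(x^*)=h(y^*)$. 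Then by the central identity, $f(x^*)=\min_{y\in\mathcal{Y}} h(y)=\min_x f(x)$, so $x^*$ solves the original problem.

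I expect no genuine obstacle here: the only subtlety is making sure the inner $\min$ is attained (rather than merely an infimum) whenever it is written as such, which is implicit in the lemma's phrasing of "well-defined functions" and in the fact that the original problem is assumed to admit a minimizer $x^*$ in the forward direction, and in the fact that the bilevel minimizer is witnessed by some $x^*$ in the reverse direction. If one wanted to be pedantic, one could phrase everything in terms of infima and argmin-sets, in which case the statement becomes
$$\argmin_{x} f(x) \;=\; \bigcup_{y^*\in\argmin_{y\in\mathcal{Y}} h(y)} \bigl\{x : g(x)=y^*,\ f(x)=h(y^*)\bigr\},$$
but for the purposes of this paper the stated pointwise equivalence is all that is needed, and both directions collapse to one-line arguments once the level-set identity is in hand.
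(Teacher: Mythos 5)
Your proof is correct, and it is worth noting that the paper itself does not actually prove this lemma: it defers entirely to Lemma C.1 of the cited reference \citep{chen2024towardsautoai}, so your argument is a self-contained replacement rather than a variant of an in-paper proof. Your route --- introducing the level-set value function $h(y)\triangleq\min_{x:g(x)=y}f(x)$ on the image $\mathcal{Y}$, observing that the level sets $\{x: g(x)=y\}$ partition $\mathbb{R}^d$ so that $\min_{y\in\mathcal{Y}}h(y)=\min_x f(x)$, and then reading off both implications --- is the standard and arguably cleanest way to justify the reparameterization. One point you handle correctly but that deserves emphasis: the literal ``if'' direction of the lemma (that $y^*=g(x^*)$ minimizing the bilevel objective implies $x^*$ minimizes $f$) is false for an arbitrary $x^*$ in the fiber of $y^*$, since $x^*$ could fail to attain the inner minimum on its own level set; your reverse direction takes $x^*$ to be an inner minimizer witnessing $f(x^*)=h(y^*)$, which is the interpretation under which the equivalence holds and the one the algorithm actually uses (DUET recovers the data mixture as the inner $\argmin$, not as an arbitrary set with the right ratio). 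Your final argmin-set identity makes this precise and is, if anything, a sharper statement than the lemma as phrased; the only implicit assumption, as you note, is attainment of the inner minimum, which is automatic in the paper's application since each fiber $S_r$ is a finite set of data mixtures.
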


The proof of Lemma \ref{lemma:reparameterize} can be found in Lemma C.1 of \citep{chen2024towardsautoai}.
Next, we show that the objective function of problem ~\ref{eq:reparameterized} introduced in our optimization problem satisfies these assumptions, allowing us to apply the Lemma \ref{lemma:reparameterize} directly.

In our setting, we set $x \triangleq \Xs$, $f(x) \triangleq \Leval(\theta_{\Xs})$ and $g(x) \triangleq \textrm{ratio}(\Xs)$. We can see that both functions are well-defined, where for any chosen input $\Xs$, there certainly exists an observed evaluation task loss $\Leval(\theta_{\Xs})$ and mixing ratio $\textrm{ratio}(\Xs)$. Lastly, by setting $y \triangleq r$, our optimization problem in problem \ref{eq:reparameterized} is of the identical form of the optimization problem shown in Lemma~\ref{lemma:reparameterize}. Therefore, our reparameterization process is valid.

\end{proof}

\vspace{0mm}\subsection{Proof of Theorem~\ref{Theorem:inner error distribution}}
\label{app:proof-inner-error-distribution}

\innererror*

\begin{proof}
Let $X_1,X_2,\dots,X_k$ be $k$ samples randomly drawn from a sampling distribution and $X_{\textrm{min}}=\min\{X_1,X_2,\dots,X_k\}$. This scenario mirrors the setting in Theorem~\ref{Theorem:inner error distribution}. Our goal is to derive the distribution of $X_{\textrm{min}}$ and show that it is exactly the same as the distribution of $\widetilde{y_r^*}$ shown in the Theorem~\ref{Theorem:inner error distribution}.


If each random sample $X_i \sim \exp_t(\lambda, c)$, we first use a commonly known result \citep{chen2024towardsautoai} that the CDF of any truncated distribution on $[0,c]$ is $\frac{F(u)-F(0)}{F(c)-F(0)}$  where $F$ is the CDF of the original distribution. Also, we note that for the untruncated exponential distribution, $F(u)=1-e^{-\lambda u}$. Hence, The CDF of $X_{\textrm{min}}$ is

\begin{equation*}
\begin{split}
\displaystyle \textrm{cdf}_{(X_{\textrm{min}})}(u) & = 1-\mathbb{P}(X_{\textrm{min}} \geq u)\\
& = 1 - \mathbb{P}(X_1 \geq u, X_2 \geq u, \dots, X_k \geq u) \\
 & = 1 - \left(1-\frac{1-e^{-\lambda u}}{1-e^{-\lambda c}}\right)^k, \quad  0 \leq u \leq c.
\end{split}
\end{equation*}
and so the PDF of $X_{\textrm{min}}$ can be computed as
\begin{equation*}
\begin{split}
\displaystyle \textrm{PDF}_{(X_{\textrm{min}})}(u) & = \frac{\partial}{\partial u} F_{(X_{\textrm{min}})}(u) \\
 & = \frac{\lambda ke^{-\lambda u}}{1-e^{-\lambda c}} \left(\frac{e^{-\lambda u} - e^{-\lambda c}}{1-e^{-\lambda c }}\right)^{k-1}, \quad 0 \leq u \leq c. 
\end{split}
\end{equation*}

In the original Theorem, each sample $X_i$ follows the shifted truncated exponential distribution $y_r^* + \text{exp}_t(\lambda,c)$ where $y_r^*$ is a constant. Hence, we can see that our estimator has the distribution of $y_r^* + X_{\text{min}}$ where $X_{\text{min}}$ has the PDF above. Hence, the Theorem is proven by setting the random variable $\epsilon = X_{\text{min}}$.

\end{proof}

\subsection{Proof of Theorem~\ref{regret-abollo}}\label{proof-regret}
\regret*
\begin{proof}
We provide the proof of the sub-linear $\tilde{R}_T$ growth of DUET in Theorem~\ref{regret-abollo} by establishing upper bounds of $|\mu_t(x)-f(x)|$ and $\epsilon_t$ separately at each BO iteration $t$ and use the independence rule to bound their sum. To do so, we introduce the following two Lemmas.

Our first Lemma is taken from from known literature on Kernelized Bandits \citep{bo-kernelized-bandits} and provides the upper bound on difference between $f(x_t)$ and $\mu_t(x)$ at each BO iteration $t$.

\begin{lemma}
\label{lemma:concentration}
    Let $||f||_{\kappa}=\sqrt{	\langle f,f \rangle_\kappa} \leq B$. Also, assume that the observation noise associated with each BO iteration is $R$-sub-Gaussian with $R>0$. Then with probability at least $1-\delta$, the following holds for BO iteration $t \leq T$:
    \begin{equation}
        |\mu_t(x)-f(x)| \leq \left( B + R \sqrt{2(\gamma_t + 1 + \ln(1/\delta)}\right)\sigma_t(x)
    \end{equation}
    where $\gamma_{t}$ is the maximum information gain after $t$ observations and $\mu_t(x), \sigma_t^2(x)$ are mean and variance of posteror distribution of GP defined in Equation \ref{gp:posterior}, with $\lambda=1+2/T$.
\end{lemma}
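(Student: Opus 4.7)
\textbf{Proof plan for Lemma~\ref{lemma:concentration}.}

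The plan is to follow the self-normalized martingale approach used for RKHS-valued kernel regression (as in the standard analysis of kernelized bandits). First I would move the problem into the feature-space picture: because $\|f\|_\kappa \le B$, the representer theorem gives a weight element $w$ in the RKHS with $f(x) = \langle w,\phi(x)\rangle_\kappa$ and $\|w\|_\kappa \le B$, where $\phi(x)=\kappa(\cdot,x)$. Writing the observation vector as $\bm y_t = \Phi_t w + \bm \epsilon_t$ with $\Phi_t$ the operator whose rows are $\phi(r_\tau)^\top$, the GP posterior mean from \eqref{gp:posterior} can be re-expressed, via the matrix identity $\kappa_t^\top(x)(K_t+\zeta I)^{-1}=\phi(x)^\top\Phi_t^\top(\Phi_t\Phi_t^\top+\zeta I)^{-1}$ and the push-through identity, as $\mu_t(x)=\phi(x)^\top(\Phi_t^\top\Phi_t + \zeta I)^{-1}\Phi_t^\top \bm y_t$, so that the deviation decomposes cleanly:
\begin{equation*}
\mu_t(x)-f(x) \;=\; \underbrace{\phi(x)^\top (\Phi_t^\top\Phi_t+\zeta I)^{-1}\Phi_t^\top \bm \epsilon_t}_{\text{stochastic}} \;-\; \underbrace{\zeta\,\phi(x)^\top(\Phi_t^\top\Phi_t+\zeta I)^{-1} w}_{\text{approximation}}.
\end{equation*}

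Next I would bound each piece using a weighted Cauchy--Schwarz argument in the metric induced by $V_t := \Phi_t^\top\Phi_t+\zeta I$. Writing $\|u\|_{V_t^{-1}}^2 = u^\top V_t^{-1} u$, a direct calculation shows that the posterior variance satisfies $\sigma_t^2(x) = \phi(x)^\top V_t^{-1}\phi(x)\cdot(\text{factor from }\zeta)$, so both terms above are bounded by $\sigma_t(x)$ times an RKHS-style norm. The approximation term is controlled in closed form by $\sqrt{\zeta}\,\|w\|_\kappa \le \sqrt{\zeta} B$; with $\zeta = 1+2/T$ this contributes a leading constant $B$ (absorbing the $\sqrt{1+2/T}$ into $B$ as done in the BO literature). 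This is the easy piece.

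The main obstacle, as usual, is controlling the stochastic term $\|\Phi_t^\top\bm\epsilon_t\|_{V_t^{-1}}$ uniformly in $t$. I would apply the self-normalized concentration inequality for $R$-sub-Gaussian martingale difference noise (the Abbasi-Yadkori--P\'al--Szepesv\'ari bound, adapted to Hilbert-space features as in Chowdhury--Gopalan), which yields, with probability at least $1-\delta$ simultaneously for all $t$,
\begin{equation*}
\|\Phi_t^\top\bm\epsilon_t\|_{V_t^{-1}}^2 \;\le\; 2R^2\log\!\left(\frac{\det(I+\zeta^{-1}K_t)^{1/2}}{\delta}\right).
\end{equation*}
The log-determinant is exactly (twice) the mutual information between $\bm y_t$ and $f$ at the queried points, so $\tfrac12\log\det(I+\zeta^{-1}K_t) \le \gamma_t$ by the definition of the maximum information gain. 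Substituting this bound back and combining with the deterministic term via the triangle inequality gives
\begin{equation*}
|\mu_t(x)-f(x)| \;\le\; \Bigl(B + R\sqrt{2(\gamma_t + 1 + \ln(1/\delta))}\Bigr)\,\sigma_t(x),
\end{equation*}
which is the target bound. The extra $+1$ inside the square root comes from the standard slackening step used to absorb the $\log\det$ constant and the $\sqrt{\zeta}$ factor into a single clean expression; the choice $\zeta=1+2/T$ is what makes this absorption work uniformly over $t\le T$. The nontrivial part of the plan, and the only place where anything delicate happens, is the self-normalized concentration step; everything else is algebraic manipulation of the kernel formulas.
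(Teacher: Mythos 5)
Your plan is correct and coincides with the paper's route: the paper offers no independent proof of this lemma, importing it verbatim from the kernelized-bandits literature \citep{bo-kernelized-bandits} (Theorem 2 of Chowdhury--Gopalan), and your sketch is exactly the standard proof of that imported result --- the bias/noise decomposition, weighted Cauchy--Schwarz in the $V_t=\Phi_t^{\top}\Phi_t+\zeta I$ metric, the self-normalized concentration bound for $R$-sub-Gaussian martingale noise, the information-gain bound $\tfrac12\log\det(I+\zeta^{-1}K_t)\le\gamma_t$, and the $+1$ under the square root arising from $\tfrac{t}{2}\log(1+2/T)\le 1$ with the choice $\zeta=1+2/T$. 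One cosmetic correction: the bias term requires no absorption of $\sqrt{\zeta}$ into $B$, since $\sigma_t^2(x)=\zeta\,\phi(x)^{\top}V_t^{-1}\phi(x)$ makes the factors cancel exactly, giving $\bigl|\zeta\,\phi(x)^{\top}V_t^{-1}w\bigr|\le\sqrt{\zeta}\,B\,\sigma_t(x)/\sqrt{\zeta}=B\,\sigma_t(x)$.
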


Our second Lemma attempts to bound the expectation and variance of $\epsilon_t$, the non-negative observation noise (in our case, it corresponds to the estimation error involved in solving the inner problem) at each BO iteration $t$. These expectation and variance will be used later to bound our cumulative regret.

\begin{lemma}
\label{lemma:noise expectation variance}
    Let each observation noise $\epsilon_t$ of BO iteration $t$ follow the same probability distribution as $\epsilon$ defined in Theorem~\ref{Theorem:inner error distribution} with sampling size $k$ probability density function $f_{\epsilon_t}(u)=\frac{\lambda ke^{-\lambda u}}{1-e^{-\lambda c}} \left(\frac{e^{-\lambda u} - e^{-\lambda c}}{1-e^{-\lambda c }}\right)^{k-1}$ with $0<c\leq1$, $\lambda=1$ and $u \in [0,c]$, then $\mathbb{E}(\epsilon_t) \leq \frac{6}{k} +  \frac{2c^2((1-e^{-c})-\frac{c}{2})^{k-1}}{(1-e^{- c})^k}$ and $\Var(\epsilon_t) \leq \mathbb{E}(\epsilon_t)$.
\end{lemma}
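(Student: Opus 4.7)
The plan is to handle the two bounds separately. The variance bound is essentially free: since $c \leq 1$, the support $[0,c]$ of $\epsilon_t$ lies in $[0,1]$, so $\epsilon_t^2 \leq \epsilon_t$ pointwise. Taking expectation and dropping the nonnegative $(\mathbb{E}(\epsilon_t))^2$ term then gives $\Var(\epsilon_t) \leq \mathbb{E}(\epsilon_t^2) \leq \mathbb{E}(\epsilon_t)$. Thus the only nontrivial task is bounding $\mathbb{E}(\epsilon_t)$.

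For the expectation, I would first rewrite it using the tail-integral identity for non-negative random variables together with the survival function computed in the proof of Theorem~\ref{Theorem:inner error distribution}, obtaining $\mathbb{E}(\epsilon_t) = \int_0^c P(\epsilon_t > u)\, du = \int_0^c \left((e^{-u}-e^{-c})/(1-e^{-c})\right)^k du$. I would then change variables via $s = e^{-u}-e^{-c}$, which maps $[0,c]$ to $[0,a]$ with $a \triangleq 1-e^{-c}$ and $b \triangleq e^{-c}$ (so $a+b=1$) and gives $du = -ds/(s+b)$, yielding the cleaner form $\mathbb{E}(\epsilon_t) = \int_0^a \frac{s^k}{a^k(s+b)}\,ds$.

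Next, I would split this integral at $s^{*} = a - c/2$ and bound each piece. On the upper piece $[s^{*},a]$, the denominator satisfies $s+b \geq s^{*}+b = 1-c/2 \geq 1/2$, so $\int_{s^{*}}^{a} s^k/(a^k(s+b))\,ds \leq a/((k+1)(1-c/2)) \leq 2a/(k+1) \leq 6/k$ after using $a \leq c \leq 1$. On the lower piece $[0,s^{*}]$, I would bound $s^k \leq (s^{*})^{k-1} s$ and compute $\int_0^{s^{*}} s/(s+b)\,ds = s^{*} - b\log(1+s^{*}/b) \leq (s^{*})^2/(2b)$ via the elementary inequality $\log(1+x) \geq x - x^2/2$. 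Using $s^{*} \leq c/2$ (since $a \leq c$) and $b \geq 1/e$ then produces a bound of at most $2A_{c,k}$. Summing the two pieces gives the stated expectation bound.

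The main obstacle is identifying the correct split point. The choice $s^{*} = a - c/2$ is forced by two simultaneous requirements: the factor $(a-c/2)^{k-1}$ must appear in the lower-piece bound so as to match the exponent in $A_{c,k}$, while $s^{*}+b = 1-c/2$ must be bounded below by a positive constant on the upper piece in order to extract the $O(1/k)$ decay. Once this split is identified, the remaining steps reduce to routine elementary manipulations, and the slack in the stated constants (e.g.\ $6/k$ rather than the sharper $2/(k+1)$) absorbs the factor of $e$ that appears from $b \geq e^{-1}$.
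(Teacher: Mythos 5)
Your proposal is correct, and for the expectation bound it takes a genuinely different route from the paper's. The paper works directly with the density: it bounds $\mathbb{E}(\epsilon_t)=\int_0^c u f_{\epsilon_t}(u)\,du$ by dropping a factor via $e^{-u}\le 1$, replacing $e^{-u}$ inside the power by $1-\tfrac{u}{2}$ (valid on $[0,c]$ with $c\le 1$), computing $\int_0^c u\left(\left(1-\tfrac{u}{2}\right)-e^{-c}\right)^{k-1}du$ exactly by parts, and then algebraically massaging the antiderivative into $\tfrac{6}{k}+2A_{c,k}$, where $A_{c,k}$ is as in Theorem~\ref{regret-abollo} (so the lemma's second term is exactly $2A_{c,k}$). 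You instead use the layer-cake identity with the survival function $\left(\frac{e^{-u}-e^{-c}}{1-e^{-c}}\right)^k$ obtained in the proof of Theorem~\ref{Theorem:inner error distribution}, substitute $s=e^{-u}-e^{-c}$, and split at $s^*=(1-e^{-c})-\tfrac{c}{2}$ --- the same quantity that in the paper only appears after evaluating the antiderivative at $u=c$. Your split cleanly isolates the $O(1/k)$ contribution (using $s+b\ge 1-\tfrac{c}{2}\ge\tfrac12$) from the $A_{c,k}$ contribution (using $s^k\le (s^*)^{k-1}s$, $\log(1+x)\ge x-\tfrac{x^2}{2}$, $s^*\le\tfrac{c}{2}$ and $b\ge e^{-1}$), and in fact gives the sharper constants $\tfrac{2}{k+1}$ and $\tfrac{e}{8}A_{c,k}$, so the stated bound holds with room to spare; what it costs is the CDF bookkeeping and the change of variables, whereas the paper needs only one (admittedly opaque) explicit definite integral. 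One small check you leave implicit: the split requires $s^*\ge 0$, i.e.\ $1-e^{-c}\ge \tfrac{c}{2}$, which does follow from $c\le 1$ (and if it failed, the lower piece would be empty and the bound would be immediate anyway). Your variance step is essentially the paper's pointwise argument $u^2\le cu\le u$ on $[0,c]$, stated slightly more carefully since you correctly write $\Var(\epsilon_t)\le\mathbb{E}(\epsilon_t^2)\le\mathbb{E}(\epsilon_t)$ rather than equating the variance with the second moment.
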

\begin{proof}

For $\lambda=1$, we have that $f_{\epsilon_t}(u)=\frac{ ke^{- u}}{1-e^{- c}} \left(\frac{e^{- u} - e^{- c}}{1-e^{- c }}\right)^{k-1}$ with $0<c<1$ and $u \in [0,c]$. Then, the expectation:

\begin{equation}
\label{eq:expectation-bound}
    \begin{split}
        \mathbb{E}(\epsilon_t) &= \int_{0}^{c}u f_{\epsilon_t}(u)\,du
        \\ 
        &= \int_{0}^{c}  \frac{u  k e^{- u}}{1-e^{-  c}} \left(\frac{e^{-  u} - e^{-  c}}{1-e^{-  c }}\right)^{k-1} \,du
        \\
        &= \frac{k }{(1-e^{- c})^k}\int_{0}^{c}  ue^{-  u} \left(e^{-  u} - e^{-  c}\right)^{k-1} \,du
        \\
        & \stackrel{(1)}{\leq} \frac{k }{(1-e^{- c})^k}\int_{0}^{c}  u \left(e^{-  u} - e^{-  c}\right)^{k-1} \,du
        \\
        & \stackrel{(2)}{\leq} \frac{k }{(1-e^{- c})^k}\int_{0}^{c}  u \left( \left(1-\frac{u}{2} \right) - e^{-  c}\right)^{k-1} \,du
        \\
        & \stackrel{(3)}{\leq} \frac{k }{(1-e^{- c})^k} \left( 
        \frac{(u-2(1-e^{-c}))((1-e^{-c})-\frac{u}{2})^{k-1}(2(1-e^{-c}) + (k-1)u + u)}{k(k+1)} 
        \right) \Bigg| ^{u=c}_{u=0}
        \\
        & \stackrel{(4)}{=} \frac{1 }{(1-e^{- c})^k}\left( \frac{(c-2(1-e^{-c}))((1-e^{-c})-\frac{c}{2})^{k-1}(2(1-e^{-c}) + kc) + 4(1-e^{-c})^{k+1}}{k+1}\right)
        \\
        & \stackrel{(5)}{\leq} \frac{4(1-e^{-c})^{k+1}}{(k+1)(1-e^{- c})^k} +  \frac{2kc^2((1-e^{-c})-\frac{c}{2})^{k-1}}{(k+1)(1-e^{- c})^k} + \frac{2((1-e^{-c})-\frac{c}{2})^{k-1}(1-e^{-c})}{(k+1)(1-e^{- c})^k}
        \\
        & \stackrel{(6)}{\leq} \frac{6}{k} +  \frac{2c^2((1-e^{-c})-\frac{c}{2})^{k-1}}{(1-e^{- c})^k}
    \end{split}
\end{equation}


where $\stackrel{(1)}{\leq}$ makes use of the fact that $e^{-\lambda u} \leq 1$ for $u \in [0,c]$ with $c>0$, $\stackrel{(2)}{\leq}$ uses the inequality $e^{-u} \leq 1-\frac{u}{2}$ for $u \in [0,c]$, and $c \leq 1$, $\stackrel{(3)}{=}$ uses the fact that $e^{-\lambda c} < 1$, $\stackrel{(4)}{=}$ is derived by solving the definite integral by parts and substitution and $\stackrel{(4)}{=}$ simplifies the upper bound with algebraic manipulation.

Next, the upper bound of the variance of $\epsilon_t$ can be derived by

\begin{equation}
\label{eq:variance-bound}
    \begin{split}
        \Var(\epsilon_t) &= \int_{0}^{c}u^2 f_{\epsilon_t}(u)\,du \\
        &\stackrel{(1)}{\leq} c \int_{0}^{c}u f_{\epsilon_t}(u)\,du \\
        &\stackrel{(2)}{\leq}  \int_{0}^{c}u f_{\epsilon_t}(u)\,du \\
        &= \mathbb{E}(\epsilon_t)
    \end{split}
\end{equation}
where $\stackrel{(1)}{\leq}$ makes use of the fact that $\epsilon_t$ lies in $[0,c]$ and $\stackrel{(2)}{\leq}$ makes use of the fact that $0 < c \leq 1$. This completes the proof on the bounds on $\mathbb{E}(\epsilon_t)$ and $\Var(\epsilon_t)$.
\end{proof}
Next, we observe that $x_t$ at each BO iteration $t$ is chosen via the IGP-LCB acquisition function (i.e., $x_t = \argmin_{x} \mu_{t-1}(x) - \beta_t \sigma_{t-1}(x)$ and $\beta_{t} = B + R \sqrt{2(\gamma_{t-1}+1+\ln(1/\delta_1))}$ where the observation noise associated with each BO iteration is $R$-sub Gaussian). Thus, we can see that at each iteration $t \geq 1$, we have $-\mu_{t-1}(x_t) + \beta_t \sigma_{t-1}(x_t) \geq -\mu_{t-1}(x^*) + \beta_t \sigma_{t-1}(x^*)$. It then follows that for all $t \geq 1$ and with probability at least $1-\delta_1$,
\begin{equation}
\label{eq:concentration}
    \begin{split}
        |f(x_t) - f(x^*)| &\stackrel{(1)}{\leq} f(x_t) - \mu_{t-1}(x^*) - \beta_t \sigma_{t-1}(x^*) \\
        &\stackrel{(2)}{\leq} f(x_t) - \mu_{t-1}(x_t) + \beta_t \sigma_{t-1}(x_t) \\
        &\leq \beta_t \sigma_{t-1}(x_t) + |\mu_{t-1}(x_t) - f(x_t)| \\
        &\leq 2\beta_t \sigma_{t-1}(x_t)
    \end{split}
\end{equation}

Therefore, by setting $\delta_1 = \delta_2 = \sqrt{\delta}$, it follows that with probability $1-\delta$ (this follows by rule of independence applied to the upper bound of events $\sum_{t=1}^T |f(x_t)-f(x^*)|$ and $\sum_{t=1}^T \epsilon_t$) that our \textbf{attained cumulative regret} can be bounded as
\begin{equation}
\label{eq:cumulative_regret_general}
    \begin{split}
        \tilde{R}_T &= \sum_{t=1}^T |\tilde{y}_t-f(x^*)| \\
        &= \sum_{t=1}^T |f(x_t)-f(x^*) + \epsilon_t| \\
        &\stackrel{(1)}{=} \sum_{t=1}^T |f(x_t)-f(x^*)| + \sum_{t=1}^T \epsilon_t \\
        &\stackrel{(2)}{\leq} 2\beta_T\sum_{t=1}^T \sigma_{t-1}(x_t) + \sum_{t=1}^T \epsilon_t \\
        &\stackrel{(3)}{=} 2\left(B + R \sqrt{2(\gamma_{T}+1+\ln(1/\sqrt{\delta}))}\right)\sum_{t=1}^T \sigma_{t-1}(x_t) + \sum_{t=1}^T \epsilon_t \\
        &\stackrel{(4)}{\leq}2\left(B + R \sqrt{2(\gamma_{T}+1+\ln(1/\sqrt{\delta}))}\right)\sum_{t=1}^T \sigma_{t-1}(x_t) + \sum_{t=1}^T\mathbb{E}(\epsilon_t) + \sum_{t=1}^T\sqrt{\frac{\Var(\epsilon_t)}{\delta_2}} \\
        &\stackrel{(5)}{=}2\left(B + R \sqrt{2(\gamma_{T}+1+\ln(1/\sqrt{\delta}))}\right)O(\sqrt{T\gamma_T}) + \sum_{t=1}^T\mathbb{E}(\epsilon_t) + \sum_{t=1}^T\sqrt{\frac{\Var(\epsilon_t)}{\delta_2}} \\
        &=O\left(\sqrt{T}(B\sqrt{\gamma_T}+R\gamma_T)\right) + \sum_{t=1}^T\mathbb{E}(\epsilon_t) + \sum_{t=1}^T\sqrt{\frac{\Var(\epsilon_t)}{\delta_2}} \\
        &\stackrel{(6)}{=} O\left(\sqrt{T}(B\sqrt{\gamma_T}+\frac{c^2\gamma_T}{4})\right) + \sum_{t=1}^T\mathbb{E}(\epsilon_t) + \sum_{t=1}^T\sqrt{\frac{\Var(\epsilon_t)}{\delta_2}}
    \end{split}
\end{equation}
where we have followed the attained cumulative regret proof in \citep{chen2024towardsautoai} closely and used the following facts: 
\begin{itemize}
    \item $\stackrel{(1)}{=}$ uses the fact that $\epsilon_t$ is non-negative in our problem setting (Theorem.~\ref{Theorem:inner error distribution}).
    \item
    $\stackrel{(2)}{\leq}$ is derived from Eq.~\eqref{eq:concentration}.
    \item 
    $\stackrel{(3)}{=}$ uses the definition of $\beta_T$ in IGP-LCB acquisition function \citep{bo-kernelized-bandits} w.r.t.~$\delta_1 = \sqrt{\delta}$
    \item 
    $\stackrel{(4)}{\leq}$ uses Chebyshev's inequality over $\epsilon_t$ with probability at least $1-\delta_2$. 
    \item
    $\stackrel{(5)}{=}$ uses $\sum_{t=1}^T \sigma_{t-1}(x_t) \leq O(\sqrt{T \gamma_T})$ as shown in \textbf{Lemma 4} by Chowdhury \& Gopalan \citep{bo-kernelized-bandits}.
    \item $\stackrel{(6)}{=}$ uses the fact that $\epsilon_t$ is bounded on $[0,c]$ and all bounded random variables are R-sub-Gaussian with  $R=\frac{c^2}{4}$ \citep{subgaussian-bounded}. 
\end{itemize}
Next, we need to derive the upper bound of $\sum_{t=1}^T\mathbb{E}(\epsilon_t) + \sum_{t=1}^T\sqrt{\frac{\Var(\epsilon_t)}{\delta_2}}$ w.r.t.~$T$. This can be done by using the upper bound of the expectation and variance of $\epsilon_t$ proven in Lemma \ref{lemma:noise expectation variance}:
\begin{equation}
\label{eq:sum-expectation-variance}
    \begin{split}
         \sum_{t=1}^T\mathbb{E}(\epsilon_t) + \sum_{t=1}^T\sqrt{\frac{\Var(\epsilon_t)}
         {\delta_2}}& \stackrel{(1)}{\leq} \sum_{t=1}^T \left(\frac{6}{k} +\frac{2c^2((1-e^{-c})-\frac{c}{2})^{k-1}}{(1-e^{- c})^k} \right) + \sum_{t=1}^T \sqrt{\frac{6}{\delta_2 k} +  \frac{2c^2((1-e^{-c})-\frac{c}{2})^{k-1}}{\delta_2(1-e^{- c})^k}}
         \\
         &= \frac{6T}{k} + \frac{2Tc^2((1-e^{-c})-\frac{c}{2})^{k-1}}{(1-e^{- c})^k} + T \sqrt{\frac{6}{\delta_2 k} +  \frac{2c^2((1-e^{-c})-\frac{c}{2})^{k-1}}{\delta_2(1-e^{- c})^k}}
    \end{split}
\end{equation}
where $\stackrel{(1)}{\leq}$ uses Lemma \ref{lemma:noise expectation variance} directly.

Then, it follows from Eq.~\ref{eq:cumulative_regret_general} and \ref{eq:sum-expectation-variance} that with probability $1-\delta$ and $\delta_2=\sqrt{\delta}$, the \textbf{attained cumulative regret} $\tilde{R}_T$ at iteration $T$ is upper bounded by:

\begin{equation}
\label{eq:attained-cumulative-regret}
    \tilde{R}_T \leq O\left(\sqrt{T}(B\sqrt{\gamma_T}+\frac{c^2\gamma_T}{4})\right) + \frac{6T}{k} + \frac{2Tc^2((1-e^{-c})-\frac{c}{2})^{k-1}}{(1-e^{- c})^k} + T \sqrt{\frac{6}{\delta_2 k} +  \frac{2c^2((1-e^{-c})-\frac{c}{2})^{k-1}}{\delta_2(1-e^{- c})^k}}
\end{equation}

Finally we set $A_{c,k} = \frac{c^2(1-e^{-c}-\frac{c}{2})^{k-1}}{(1-e^{-c})^k}$. As  $T \to \infty$, with probability $1-\delta$ and $\delta_2=\sqrt{\delta}$, the attained \textit{average} regret converges to:
\begin{equation}
\label{eq:attained-average-regret}
\begin{split}
    \displaystyle \lim_{T \to \infty} \frac{\tilde{R}_T}{T} & \stackrel{(1)}{\leq} 
    \frac{6}{k} + \frac{2((1-e^{-c})-\frac{c}{2})^{k-1}}{(1-e^{- c})^k} + \sqrt{\frac{6}{\delta_2 k} +  \frac{2((1-e^{-c})-\frac{c}{2})^{k-1}}{\delta_2(1-e^{- c})^k}} \\
    & \stackrel{(2)}{\leq}
    \frac{6}{k} + \sqrt{\frac{6}{\delta_2 k}} +  2A_{c,k} + \sqrt{\frac{2A_{c,k}}{\delta_2}} \\
    & \leq \frac{6(\sqrt[4]{\delta}+\sqrt{k})}{\sqrt[4]{\delta}k} + 2A_{c,k} + \sqrt{\frac{2A_{c,k}}{\delta_2}}
\end{split}
\end{equation}
where $\stackrel{(1)}{\leq}$ divides Eq.~\ref{eq:attained-cumulative-regret} by $T$ throughout, eliminating the $O$ expression and $\stackrel{(2)}{\leq}$ uses the subsitition of $A_{c,k}$ and triangle inequality. This completes our proof for the attained average regret in Theorem~\ref{regret-abollo}.
\end{proof}

\subsection{Extending theoretical analysis based on different data selection methods}\label{app:extension-other-distributions}

Readers might be interested in how different data selection methods used to create different estimators affect our theoretical analysis. Here, we provide details on how one could replicate our paper's theoretical analysis to different estimators.

\textbf{Step 1. Establish the sampling distribution of $\Leval(\theta_{\Xs})$}. Using a particular data selection method, one obtains $k$ data mixture samples $\{\Xs_1,\dots,\Xs_k\}$ (in our paper, these samples are obtained via weighted sampling based on each data point's IF scores). Then, one trains an LLM for each data mixture and obtain the evaluation task loss for each resulting LLM, yielding $\{\Leval(\theta_{\Xs_1}),\dots,\Leval{\theta_{\Xs_k}}\}$. From this set, one can empirically derive the sampling distribution of each sample $\Leval(\theta_{\Xs_i})$. In Theorem~\ref{Theorem:inner error distribution}, we assumed that each sample $\Leval(\theta_{\Xs_i})$ follows the truncated exponential distribution. However, different data selection methods would certainly lead to different empirical sampling distributions.

\textbf{Step 2. Derive an estimator's empirical distribution}.
Next, we need to theoretically derive the 1st-order statistic \citep{order-statistics} of the empirical sampling distribution from Step 1, since we use the 1st-order statistic as our estimator. The procedure to do so is shown in App.~\ref{app:proof-inner-error-distribution} and uses a fairly standard procedure to derive the distribution of order statistics. For subsequent analysis to be tractable, the PDF of the 1st-order statistic should have a closed form (hence, a simpler sampling distribution in Step 1 is preferred). More importantly, the estimator's empirical distribution \textbf{should be R-sub-gaussian} for a fixed $R>0$. This is because for the regret-analysis proof in Eq.~\ref{eq:cumulative_regret_general} to hold true, the observation noise in the BO process should be R-sub-Gaussian. Fortunately, a large family of random distributions, including our IF-driven estimator introduced in this paper, are all R-sub-Gaussian (e.g., exponential family, all bounded random variables).

\textbf{Step 3. Derive the upper bound of estimator's expectation and variance}. Next, we derive the upper bound of the 1st-order statistic's expectation and variance
as shown in Lemma.~\ref{lemma:noise expectation variance}.

\textbf{Step 4. Derive attainable cumulative regret}. Lastly, we analyze the convergence rate of our algorithm using the growth of \textit{attained cumulative regret} \citep{chen2024towardsautoai} $\tilde{R}_T = \sum_{t=1}^T |\widetilde{y^*_{r_t}}-f(r_t)| = \sum_{t=1}^T |f(r^*) + \epsilon_t - f(r_t)|$ for $T$ BO iterations. Since the error term $\epsilon_t$ has the same expectation and variance of our estimator, we can use the results from Step 3 to derive our regret bound (as shown in Eq.~\ref{eq:cumulative_regret_general}).  

\section{Additional Experimental Results and Discussions}
\vspace{0mm}\subsection{Additional details on experimental setup}\label{experiments_detail_extra}
In this section, we provide additional details in our experiments for ease of reproduceability. Throughout our experiments, we used the SE kernel with lengthscale parameters learned from historical observations via maximum-likelihood \citep{gp-for-ml}. In our LCB acquisition function \citep{BO-experimental}, we set $\beta_t = 0.5$ (see Alg.~\ref{alg:ABOLLO}) throughout our experiments. Furthermore, we need to perform constrained BO \citep{boconstraints} in our experiments because the inputs to our optimization problem is a data mixing ratio $r$ whose sum of entries is constrained to 1. BoTorch allows us to implement such constraints (\url{botorch.org/docs/constraints}) easily. All evaluation for language tasks is done on \textbf{llm-harness} \citep{eval-harness} with default 3-shot settings with no chain-of-thought or special prompting techniques. Hence, it is possible some of our paper's results differ from those reported in other papers (due to different prompting and inference settings). However, our paper's emphasis is on improving the LLM's performance with a few rounds refinement on the training data mixture. Hence, we expect DUET to work well even in other inference settings. We treat the evaluation results on llm-harness as the feedback observed in our problem setting. For methods (e.g., uniform mixture, LESS, DoReMi) which do not use feedback, we repeat them 10 times to ensure fairness in comparison with DUET and show the best LLM performance in our results.

To train the LLM, we used a LoRA \citep{hu2021loralowrankadaptationlarge} rank of 128 and the Adam optimizer \citep{kingma2017adam} with initial learning rate of 1e-5. The specific hyperparameters surrounding our optimizers can be found in our code, which we have released. Each iteration of model fine-tuning is done in 1 epoch on a L40 Nvidia GPU and takes approximately an hour. So, performing 10 BO iterations take 10 hours to run. In reality, the optimization process (all 10 iterations) will be carried out over a long span of time (e.g., weeks, or even months) as part of the LLM deployment life-cycle. So this is a reasonable amount of compute time.

\textbf{IF computation}. To derive the IF scores of our training data, we remove 10\% of the training data from each data domain and treat it as the validation set. Then, we fine-tune a separate LLM for each data domain (using the same setting as above and the same model type as that in our experiments), before deriving the IF score of every data point from each data domain based on the converged LLM and the validation dataset. Using 4 Nvidia L40 GPUS, we were able to compute the IF scores of TriviaQA (containing around 170k data points) in around 2-3 hours with the torch- 
influence library (\url{https://github.com/alstonlo/torch-influence}). Smaller datasets required even shorter computation time. Certainly, these runtimes are reasonable in practical settings, since we only need to compute the IF scores once and store them before running DUET.

\textbf{Other data selection methods.} We can also use other data selection methods in DUET's inner loop. For instance, LESS \citep{xia2024lessselectinginfluentialdata} and TracIn \citep{tracin} uses around 4-5 hours to build a data-gradient store. On the other hand, diversity-driven selection techniques \citep{wang2024diversitymeasurementsubsetselection} are usually more computationally expensive, taking more than 30 hours to select the top 10000 data points.

\subsection{Comparison with other baselines}
\label{app:other-exp-results}

One alternative baseline is to simply fine-tune the LLM on a training dataset for multiple more training tokens (and epoch) and compare it with DUET. In Table~\ref{table:naive-eval}, we fine-tuned \texttt{Llama-3-8b-Instruct} for more epochs and training tokens on each training domain and evaluated on our evaluation task. The results show that DUET-IF attains better results because it can exploit the feedback from the task.
\begin{table}[ht]
\centering
\caption{Performance of models trained on different datasets across evaluation tasks.} \label{table:naive-eval}
\begin{tabular}{lcccc}
\toprule
\textbf{Train $\downarrow$ Eval. $\rightarrow$} & \textbf{TruthfulQA} & \textbf{gsm8k} & \textbf{PubmedQA+HeadQA} & \textbf{Commonsense+TriviaQA} \\
\midrule
Wikitext      & 42.8 & 70.4 & 40.6 & 59.9 \\
gsm8k         & 47.2 & 86.1 & 43.3 & 64.1 \\
Pubmed        & 43.3 & 71.5 & 49.3 & 58.4 \\
HeadQA        & 45.0 & 75.2 & 50.2 & 60.0 \\
SciQ          & 45.6 & 75.6 & 44.6 & 63.4 \\
TruthfulQA    & 59.0 & 74.0 & 43.8 & 61.0 \\
Hellaswag     & 46.1 & 72.1 & 43.2 & 60.4 \\
CommonsenseQA & 50.1 & 73.3 & 47.2 & 65.8 \\
TriviaQA      & 51.2 & 70.1 & 48.1 & 66.5 \\
DUET-IF (ours)& \textbf{59.8} & \textbf{84.2} & \textbf{52.4} & \textbf{69.6} \\
\bottomrule
\end{tabular}
\end{table}

Next, we compared DUET (paired with different data selection methods) with a variety of naive baselines. \textbf{Aioli} \citep{data-mixing-framework-optimize}, \textbf{Multi-Fid} \citep{yen2025datamixtureoptimizationmultifidelity} are two baselines that use domain reweighting and multi-fidelity BO to optimize data mixtures. \textbf{IF} just picks the top $M=20000$ datapoints with the highest influence scores. \textbf{Random} just selects a random subset of data, but we subjected it to more training epochs and $M=50000$ data points to ensure equal compute comparison.

\begin{table}[ht]
\centering
\caption{Performance of other baselines across evaluation tasks.} 
\label{table:other-baselines}
\begin{tabular}{lcccc}
\toprule
\textbf{Other baselines} & \textbf{TruthfulQA} & \textbf{gsm8k} & \textbf{PQA+HQA} & \textbf{Commonsense, TriviaQA} \\
\midrule
Aioli \citep{data-mixing-framework-optimize}       
& $51.1_{\pm 0.7}$ & $76.5_{\pm 1.2}$ & $48.8_{\pm 0.5}$ & $63.7_{\pm 1.0}$ \\
Multi-Fid \citep{yen2025datamixtureoptimizationmultifidelity}       
& $52.8_{\pm 0.9}$ & $73.9_{\pm 1.3}$ & $47.2_{\pm 0.6}$ & $65.2_{\pm 0.8}$ \\
ODM \citep{ODM}     
& $46.1_{\pm 1.1}$ & $77.3_{\pm 0.4}$ & $45.8_{\pm 1.2}$ & $60.1_{\pm 0.7}$ \\
IF only     
& $50.8_{\pm 0.5}$ & $76.9_{\pm 0.8}$ & $47.7_{\pm 0.9}$ & $57.8_{\pm 0.6}$ \\
Random      
& $49.3_{\pm 0.8}$ & $64.3_{\pm 1.4}$ & $41.2_{\pm 0.7}$ & $57.3_{\pm 1.0}$ \\
Uniform + more training tokens 
& $51.6_{\pm 0.8}$ & $64.4_{\pm 1.3}$ & $44.5_{\pm 1.2}$ & $59.2_{\pm 1.6}$ \\
DUET-IF (ours) 
& $\textbf{59.8}_{\pm 0.6}$ & $\textbf{84.2}_{\pm 1.1}$ & $\textbf{52.4}_{\pm 0.9}$ & $\textbf{69.6}_{\pm 0.8}$ \\
DUET-LESS (ours) 
& $\textbf{58.7}_{\pm 1.0}$ & $\textbf{80.5}_{\pm 0.7}$ & $\textbf{50.8}_{\pm 0.9}$ & $\textbf{67.6}_{\pm 1.3}$ \\
\bottomrule
\end{tabular}
\end{table}

\newpage

\subsection{Additional experimental results with \texttt{Qwen2.5-7B-Instruct}}
\label{app:addition-results-qwen}

We repeated our experiments with \texttt{Qwen2.5-7B-Instruct} in Fig.~\ref{fig:main-llm-qwen} and observe that DUET still can optimize data mixtures better than other baselines. This indicates that the effectiveness of DUET is independent of the model choice. Hence, we expect DUET to work well for other models as well.
\begin{figure*}[h]
\centering
\subfloat[\textbf{TruthfulQA}]
{\includegraphics[width=0.241\textwidth]
{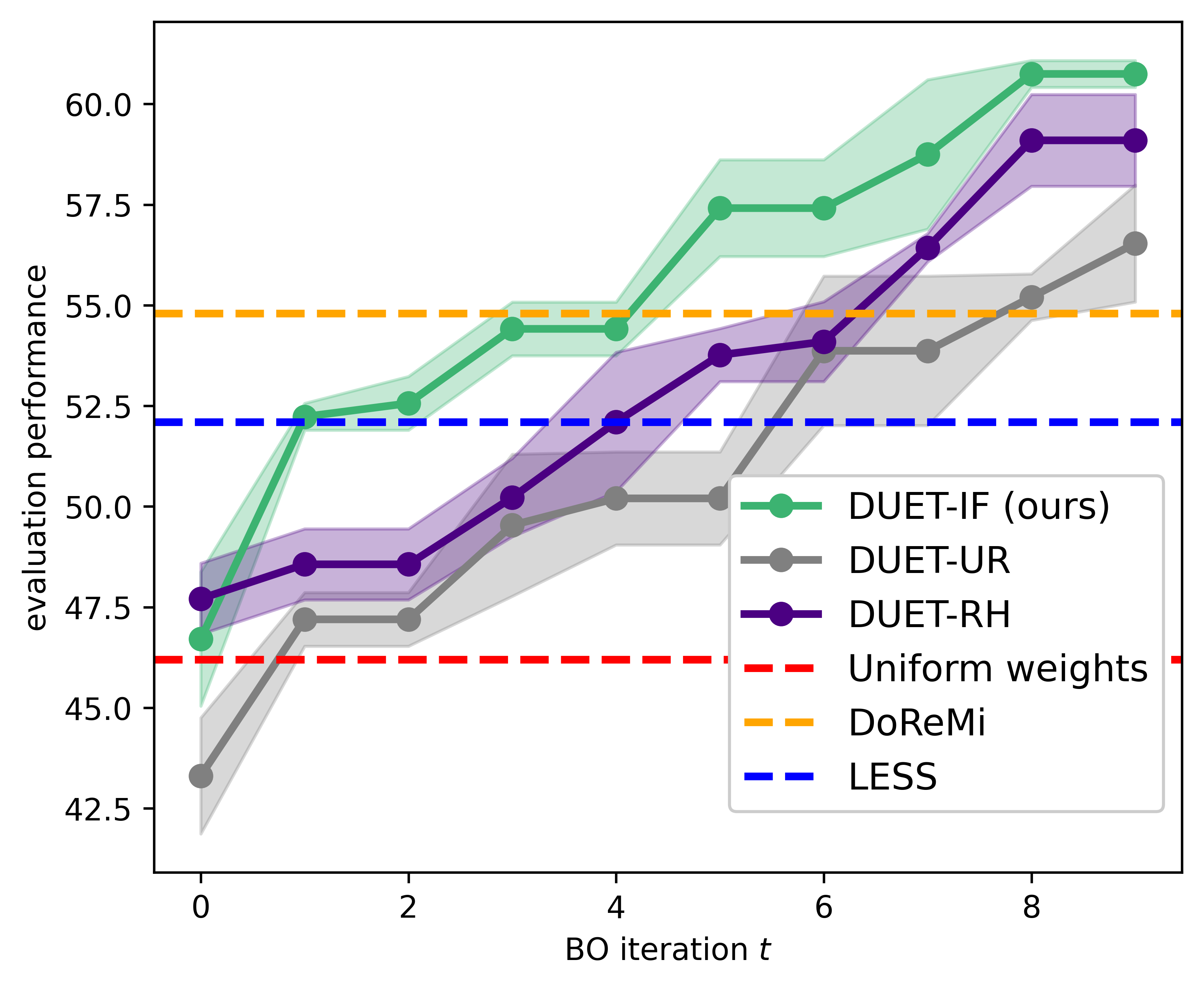}\label{fig:qwen-sub1}
}
\subfloat[\textbf{\underline{gsm8k}}]
{\includegraphics[width=0.241\textwidth]
{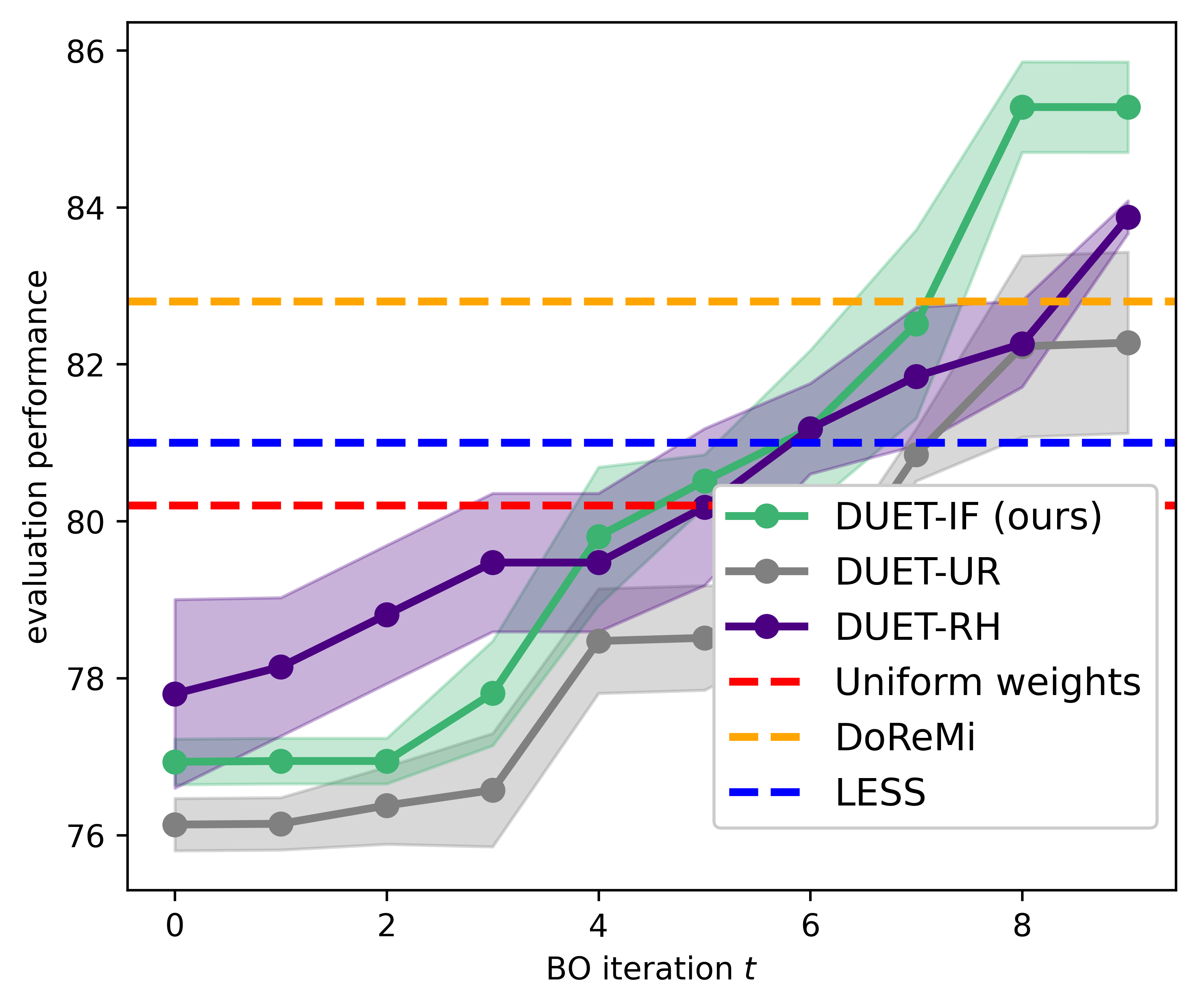}\label{fig:qwen-sub2}
}
\subfloat[\textbf{\underline{PubMedQA, HeadQA}}]
{\includegraphics[width=0.241\textwidth]
{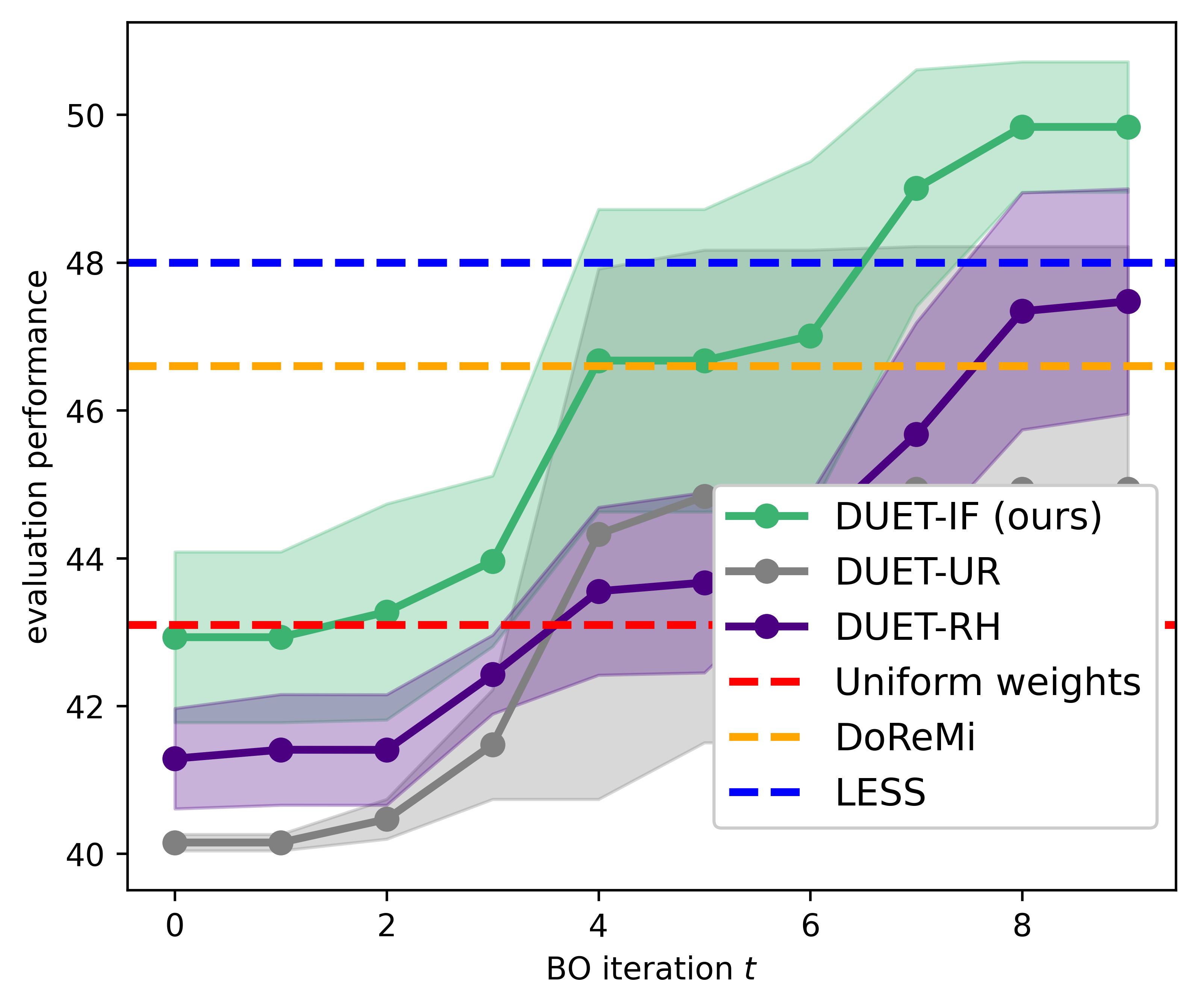}\label{fig:qwen-sub3}
}
\subfloat[\textbf{\underline{Commonsense, Trivia}}]
{\includegraphics[width=0.241\textwidth]
{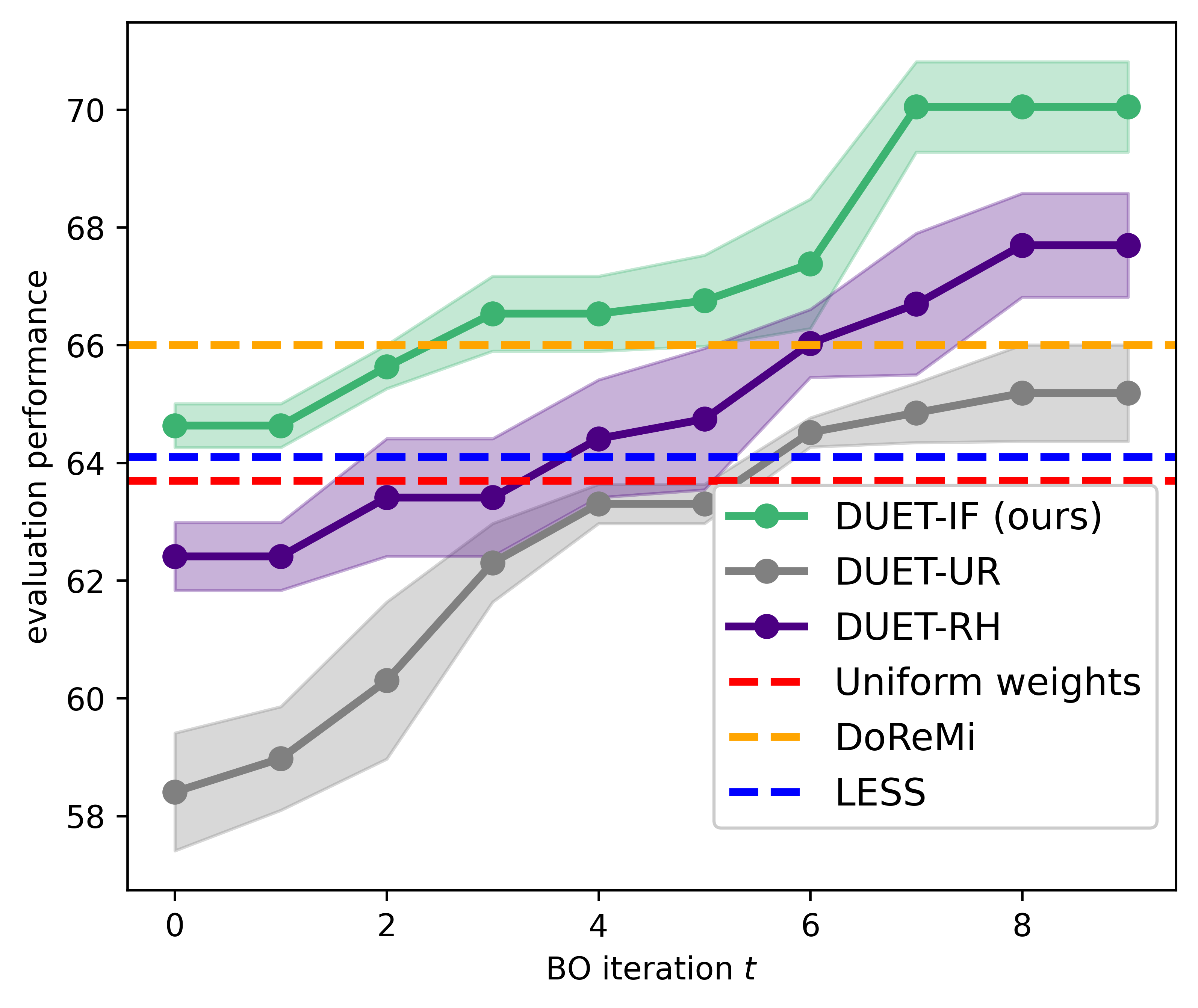}\label{fig:qwen-sub4}
}
\vspace{0mm}
\caption{Results on unseen LLM evaluation task domains over 10 iterations (higher is better) for \texttt{Qwen2.5-7b-Instruct}. The subcaption indicates the evaluation domain. \underline{\textbf{Underlined evaluation tasks are more difficult}} because the evaluation task domains are removed from the training data (i.e., OOD).}
\label{fig:main-llm-qwen}
\end{figure*}

\subsection{Mixing ratio found by DUET}\label{app:mixing ratio}

We also present the mixing ratio found by DUET for our experiments after 10 BO iterations. The column title denotes the evaluation task. When the unseen task is in-domain (\textbf{TruthfulQA}), DUET automatically finds that \textbf{TruthfulQA} is relevant training domain and places more weights on it. For OOD cases, DUET automatically finds relevant training domains as well. For example, even though we do not have \textbf{gsm8k} data for training, DUET automatically finds data from \textbf{wikitext} and \textbf{sciq} more relevant in improving the performance of the trained LLM.

\begin{table}[h]
\caption{Mixing ratio found by DUET in our \texttt{Llama-3-8b-Instruct} experiments after 10 BO iterations. The column indicates the unseen task domain, in the same setting and order as those found in our main experiments (Fig.~\ref{experiment:main-llm}). \textbf{NA} indicates the respective domain was not included in the training data.}
\vspace{2mm}
\begin{tabular}{lllll}
\hline
\textbf{Domains}     & \textbf{TruthfulQA} 
& \textbf{gsm8k}
& \textbf{PubMedQA, HeadQA}
& \textbf{CommonsenseQA, TriviaQA}\\ \hline
Commonsense & 3  &7&11&\textbf{NA}\\ \hline
gsm8k       & 0  &\textbf{NA}&0&10\\ \hline
headqa      & 0  &0&\textbf{NA}&0\\ \hline
hellaswag   & 0  &1&0&28\\ \hline
pubmedqa    & 0  &1&\textbf{NA}&0\\ \hline
sciq        & 2  &38&34&22\\ \hline
triviaqa    & 3 &12&19&\textbf{NA}\\ \hline
wikitext    & 8 &41&30&22\\ \hline
TruthfulQA    & 84 &1&6&18\\ \hline
\end{tabular}
\end{table}

\section{\textcolor{black}{Extension and additional results on full-parameter finetuning of LLMs}}

While it is too computationally expensive to perform large-scale pre-training \textit{from scratch} for now, we have performed additional experiments on \textbf{continual, full-parameter training} on 8B and 14B models (in contrast to LoRA fine-tuning used in our paper). We performed these experiments on the same setting (but we had to increase the training epoch to 3 for full-parameter training to converge better) as in Fig.4(a) and 4(b) and show the best LLM performance achieved in 10 BO iterations.

\begin{table}[h!]
\centering
\begin{tabular}{lccc}
\hline
\textbf{Model Method} & \textbf{DUET-IF} & \textbf{DUET w/o IF} & \textbf{LESS} \\ \hline
Llama-3-8B-Instruct & \textbf{64.3 $\pm$ 0.8} & 60.1 $\pm$ 1.1 & 51.7 $\pm$ 0.7 \\ 
Qwen3-14B & \textbf{73.5 $\pm$ 0.6} & 70.1 $\pm$ 0.9 & 63.2 $\pm$ 1.0 \\ \hline
\end{tabular}
\caption{Performance gain on TruthfulQA.}
\end{table}

\begin{table}[h!]
\centering
\begin{tabular}{lccc}
\hline
\textbf{Model Method} & \textbf{DUET-IF} & \textbf{DUET w/o IF} & \textbf{LESS} \\ \hline
Llama-3-8B-Instruct & \textbf{85.1 $\pm$ 0.4} & 81.0 $\pm$ 0.7 & 74.4 $\pm$ 0.4 \\
Qwen3-14B & \textbf{88.6 $\pm$ 0.6} & 84.1 $\pm$ 0.9 & 80.2 $\pm$ 0.8 \\ \hline
\end{tabular}
\caption{Performance gain on gsm8k (OOD).}
\end{table}

The results are generally consistent with our paper's finding on LLM LoRA fine-tuning, and DUET with IF performs better than its baselines. We will include this into the revised manuscript. We hope these additional results suggest that our approach is equally feasible for full-parameter fine-tuning.

One noteworthy point is that while computing IF scores for larger models is more expensive (i.e., computational cost scales with number of model parameters [1]), a practical approach is to use a smaller surrogate model to compute the IF scores, which reduces the computational time. We used the same set of IF-scores computed from the LoRA parameters here. If computational budget is a concern, we can also free to use less expensive data selection methods in DUET's inner loop (as elaborated in Sec. 3.2 of our paper) with some performance-cost tradeoff.

\section{\textcolor{black}{Discussion on computation and memory overhead of DUET}}
Here, we provide a discussion of DUET-IF's computation and memory overhead.

\section{\textcolor{black}{Computation Overhead of BO in DUET}}

Let $T$ denote the number of Bayesian Optimization (BO) iterations and $n$ denote the dimension of the data mixture (i.e., the number of training domains). In practice, the dominant cost of DUET comes from fine-tuning the LLM $T$ times, since BO requires multiple function evaluations~\citep{frazier2018tutorial}. This allows BO to exploit feedback from the unseen evaluation task and avoid brute-force enumeration of all possible mixture ratios.

Beyond LLM fine-tuning, BO incurs additional computational overhead. When using a Gaussian Process (GP) surrogate, the primary cost arises from inverting the $T \times T$ kernel matrix when re-estimating GP hyperparameters using maximum likelihood (see Eq.~(5.8) in~\citep{gp-for-ml}). This is typically performed using a Cholesky decomposition, which costs $\mathcal{O}(T^3)$.

Next, optimizing the acquisition function at each iteration typically requires gradient-based optimization. For the UCB acquisition function used in our work, computing gradients of the GP mean and standard deviation incurs $\mathcal{O}((nT + nT^2)c)$ operations, where $\mathcal{O}(nT)$ comes from differentiating $\mu(x_{\text{candidate}})$, $\mathcal{O}(nT^2)$ from differentiating $\sigma(x_{\text{candidate}})$, and $c$ is the number of restarts used in acquisition optimization (see \texttt{botorch} acquisition optimization documentation).

\textbf{(A)} Putting these together, the total BO compute at iteration $T$ is:
\[
\mathcal{O}(T^3) + \mathcal{O}((nT + nT^2)c) = \mathcal{O}(T^3),
\]
since typically $n \ll t$. Summing over from the first iteration yields the same $\mathcal{O}(T^3)$ complexity because Cholesky updates allow reuse of factorizations, avoiding full recomputation at each iteration.

\vspace{1em}

\section{\textcolor{black}{Computation Overhead of IF Scores in DUET-IF}}

When DUET is combined with data selection methods such as Influence Functions (IF), additional compute costs arise.

\paragraph{Influence Scores.}  
Given $N$ datapoints and $p$ trainable model parameters (in other paper, $p$ is the number of parameters in the model LoRA), direct computation of influence scores requires $\mathcal{O}(Np^2 + p^3)$ operations~\citep{koh2017influence}. However, stochastic estimation techniques (Sec.~3 in~\citep{koh2017influence}) reduce this to $\mathcal{O}(Np)$. Importantly, IF scores only need to be computed once and can be reused across BO iterations.

\paragraph{Other Data Selection Methods.}
Different data selection strategies incur different costs, but these are typically amortized since they are computed once. For example, LESS incurs $\mathcal{O}(Nbp)$ operations, where $b$ is the number of checkpoints used~\citep{xia2024lessselectinginfluentialdata}.

Putting \textbf{(A)} and \textbf{(B)} together, the joint compute cost of DUET-IF is:
\[
\mathcal{O}(T^3 + Np).
\]

\section{\textcolor{black}{Memory Overhead}}

BO alone requires $\mathcal{O}(T^2)$ memory to store the $T \times T$ kernel matrix. During IF computation, we require $\mathcal{O}(p)$ memory to store parameter gradients; the Hessian $H$ need not be stored explicitly, as Hessian--vector products can be computed efficiently using conjugate gradient or stochastic methods~\citep{koh2017influence}. After computation, storing the IF scores requires $\mathcal{O}(N)$ memory.

Thus, DUET-IF has a total memory overhead of:
\[
\mathcal{O}(T^2 + p).
\]
Additionally, because DUET fine-tunes an LLM using different data mixtures across BO iterations, we maintain a copy of the best-performing LoRA adaptor throughout the optimization.

In summary, the computation overhead of DUET-IF is:
\[
\mathcal{O}(T^3) + \mathcal{O}(Np),
\]
where $ \mathcal{O}(Np)$ can be precomputed before optimization and the memory overhead is:
\[
\mathcal{O}(T^2 + p).
\]

\section{\textcolor{black}{More details on mixing ratio found}}

\begin{table}[h]
\centering
\begin{tabular}{lccccccccccc}
\hline
\textbf{$\downarrow$ Domains $\rightarrow$ Iterations}    
& \textbf{1}
& \textbf{2}
& \textbf{3}
& \textbf{4}
& \textbf{5}
& \textbf{6}
& \textbf{7}
& \textbf{8}
& \textbf{\textbf{9}}
& \textbf{10}
& \textbf{DoReMi} \\ \hline

commonsenseQA 
& 11 & 0 & 0 & 11 & 28 & 0 & 11 & 80 & \textbf{3} & 0 & 14 \\ \hline

gsm8k
& 11 & 0 & 0 & 9 & 0 & 0 & 10 & 16 & \textbf{0} & 90 & 4 \\ \hline

headQA
& 11 & 0 & 0 & 0 & 0 & 0 & 7 & 0 & \textbf{0} & 0 & 6 \\ \hline

hellaswag
& 11 & 0 & 0 & 13 & 0 & 2 & 0 & 0 & \textbf{0} & 0 & 3 \\ \hline

pubmedqa
& 11 & 0 & 0 & 8 & 0 & 0 & 6 & 4 & \textbf{0} & 0 & 9 \\ \hline

sciq
& 11 & 0 & 0 & 0 & 0 & 0 & 16 & 0 & \textbf{2} & 10 & 14 \\ \hline

triviaQA
& 11 & 0 & 0 & 0 & 60 & 17 & 0 & 0 & \textbf{3} & 0 & 20 \\ \hline

wikitext
& 11 & 0 & 38 & 11 & 0 & 10 & 0 & 0 & \textbf{8} & 0 & 22 \\ \hline

truthfulQA
& 11 & 100 & 62 & 48 & 12 & 71 & 50 & 0 & \textbf{84} & 0 & 18 \\ \hline

\textbf{LLM Performance}
& 47 & 52 & 53 & 54 & 48 & 57 & 51 & 58 & \textbf{60(*)} & 40 & 51 \\ \hline

\end{tabular}
\end{table}

In the table below (for clarity reasons, the numbers are rounded), we show the data mixing ratio found by DUET-IF at each iteration as compared to that found by DoReMi for the in-domain TruthfulQA task in \textbf{Fig.~4(a)} (to bridge the discussion point we raised above). We also show the data mixing ratio found by DoReMi.

We used a uniform data mixture ($\sim 11\%$ of data to each data domain) as the initial data mixture for BO. By chance (since BO with a confidence-based acquisition function tends to explore boundary inputs initially), it finds that placing more emphasis on the TruthfulQA data domain yields better LLM performance. After some adjustments, in the 9th iteration, the best performing data mixture was found. In fact, in some of our exploratory process, we found that if we increased the number of iterations beyond 10, we can find even better training data mixtures, but the gain in performance typically plateaus (as in many BO applications).













\textbf{Comparison to DoReMi:} We can see that DoReMi adopts a distributionally robust approach and allocates mixture weights more uniformly across different domains (since it cannot exploit the task feedback to infer that truthfulQA data is more relevant). This is clearly suboptimal because it is not optimized specifically towards the evaluation task. and hence its data mixture does not perform as well as DUET-IF, as shown in our experiments.

\end{document}